\let\ceil\undefined   % notation clash. Let Shubhada win
\let\abs\undefined    % notation clash. Let Shubhada win
\DeclareMathOperator{\ex}{\mathbb E}
\DeclareMathOperator*{\argmax}{\arg\!max}
\let\inf\undefined
\let\min\undefined
\let\max\undefined
\DeclareMathOperator*{\inf}{inf\vphantom{p}} % align with sup
\DeclareMathOperator*{\min}{min\vphantom{p}} % align with sup
\DeclareMathOperator*{\max}{max\vphantom{p}} % align with sup
\newcommand{\KL}[0]{\operatorname{KL}}
\newcommand{\KLinf}[0]{\operatorname{KL_{inf}}}
\newcommand{\KLinfR}[1]{\operatorname{KL}_{\operatorname{inf}}^{#1}}
\newcommand{\Supp}[0]{\operatorname{Supp}}
\newcommand{\lrset}[1]{\left\{{#1}\right\}}
\newcommand{\lrp}[1]{\left({#1}\right)}
\newcommand{\ceil}[1]{\left\lceil{#1}\right\rceil}
\newcommand{\abs}[1]{\left\lvert{#1}\right\rvert}
\newcommand\numberthis{\addtocounter{equation}{1}\tag{\theequation}}
\newcommand{\Exp}[1]{\mathbb{E}\lrp{#1}}
\newcommand{\E}[2]{\mathbb{E}_{#1}\lrp{#2}}
\newcommand{\inv}{^{\text{-}1}}
\newcommand{\mydate}{\today}
\title[Regret Minimization in Heavy-Tailed Bandits]{Regret Minimization in Heavy-Tailed Bandits}
\begin{document}

\maketitle
{\bf \centering\mydate\\}
\begin{abstract}%
We revisit the classic regret-minimization problem in the stochastic multi-armed bandit setting when the arm-distributions are allowed to be heavy-tailed. Regret minimization has been well studied in simpler settings of either bounded support reward distributions or distributions that belong to a single parameter exponential family. We work under the much weaker assumption that the moments of order \((1+\epsilon)\) are uniformly bounded by a known constant \(B\), for some given \( \epsilon > 0\). We propose an optimal algorithm that matches the lower bound exactly in the first-order term. We also give a finite-time bound on its regret. We show that our index concentrates faster than the well known truncated or trimmed empirical mean estimators for the mean of heavy-tailed distributions. Computing our index can be computationally demanding. To address this, we develop a batch-based algorithm that is optimal up to a multiplicative constant depending on the batch size. We hence provide a controlled trade-off between statistical optimality and computational cost.
\end{abstract}

\begin{keywords}%
  Multi-armed bandits, heavy-tailed distributions, confidence intervals
\end{keywords}

\section{Introduction}
In this paper, we consider the problem of sequential allocation of resources in an uncertain environment. The player is presented with \(K\) \textit{arms}, which correspond to the \(K\) unknown probability distributions. When the player selects an arm, she observes a sample generated independently from the corresponding underlying distribution, called \textit{reward}. The player's aim is to maximize the average cumulative reward, which is equivalent to minimizing the expected regret, defined to be the shortfall between the cumulative expected reward of the player
and the expected reward collected by the policy playing the arm with the maximum mean in all the rounds.

Regret minimisation in the stochastic multi-armed bandit model was first studied by \cite{Thomp1933} in the context of designing efficient clinical trials, and by \cite{robbins1952}. \cite{LaiRobbins85} proposed a lower bound on the expected regret for parametric distributions and gave a framework for optimal strategies in this setting. This lower bound was later generalized by \cite{BURNETAS1996} (see \citet[Chapter 16]{lattimore2020bandit} for a proof of the lower bound). Since then this problem has been well studied in the literature (see, e.g., \cite{Auer2002,agrawal1995sample,HondaBounded10,honda2011asymptotically,SAgrawal2011TS1,garivier2011kl,kaufmann2012TS,cappe2013kullback,bubeck2013heavytail,honda2015SemiBounded}). We refer the reader to \cite{bubeck2012Survey} for a survey on the extensive literature on this problem and its variations.

%As mentioned, one of the earliest motivations for studying this problem is its application to clinical trials in which \(K\) different treatments for a particular disease are available to the doctor. However, their efficacy is unknown. Patients arrive sequentially, and the doctor must decide on the treatment to be given to the patient. Other applications arise in advertisement selection, recommendation systems, internet routing and congestion control, etc.

Apart from clinical trials, this regret-minimization framework finds applications in many other settings including in advertisement selection, recommendation systems, internet routing and congestion control, etc. Despite the vast body of literature, the setting where the underlying distributions are heavy-tailed (disributions for which the moment-generating function is not defined for any \( \theta  > 0\)) has been largely unaddressed. In most of the previous work, authors restrict the arm distributions to have bounded support or belong to a restrictive single parameter exponential family (SPEF) of distributions. In many bandit application domains such as in financial markets, or in congestion control over networks, it is rarely the case that the arm distributions are either parametric or bounded. Thus, it is important to understand and develop a general theory and efficient (both, computationally and statistically) algorithms that have wider applicability. 

Recently, there has been some interest beyond SPEFs. \cite{bubeck2012Survey} consider the non-parametric class of ``sub-$\psi$'' distributions, where the convex function \(\psi\) bounds the log-moment generating function of the arm-distributions. The \(\psi\)-UCB algorithm proposed by the authors is order-optimal. \cite{bubeck2013heavytail} propose algorithms which are optimal up to constants for heavy-tailed distributions. The setting considered by the authors is closest to ours. Later, we compare performance of our algorithm to the one proposed in \cite{bubeck2013heavytail}. They show that by using more robust estimators for the mean, as compared to the empirical average, one can achieve sub-linear expected regret. \cite{vakili2013deterministic} also consider bandits with heavy-tailed distributions. They propose a strategy which is based on dividing the time into interleaving sequences for exploration and exploitation. \cite{lattimore2017scale} considers distributions with a known uniform bound on the kurtosis. \cite{Cowan2018} and \cite{cowan2015asymptotically} consider Gaussian bandits with unknown mean and variance, and uniform bandits with unknown support, respectively. However, none of these algorithms exactly match the lower bound on the expected regret to the first order.

We allow for distributions that have their \( (1+\epsilon)^{th} \)-moment uniformly bounded by a constant, \(B\), for \( \epsilon > 0 \). However, existence of higher moments is not guaranteed. We develop an algorithm that suffers regret which asymptotically matches the lower bound exactly, up to the first order term. We also give a finite time analysis for the regret incurred by the algorithm. We look at the computational complexity of the algorithm and demonstrate a trade-off in the expected regret and the computational cost suffered by the proposed algorithm.

As in \cite{pmlr-v117-agrawal20a}, it can be shown that if no restrictions are imposed on the class of arm-distributions, then the lower bound on expected regret is unbounded. To make the problem learnable, we impose a mild condition on the class under consideration, and propose an index-based algorithm that is asymptotically optimal (as  the number of trials, T, tends to \(\infty\)). In particular, we focus on the class 
\[ \mathcal{L}_B \triangleq \lrset{\eta\in\mathcal{P}(\Re) : \mathbb{E}_{\eta}\abs{X}^{1+\epsilon} \leq B }, \numberthis\label{eq:LB} \]
where \(\mathcal{P}(\Re)\) denotes collection of all probability measures on \(\Re\), \(B > 0\) and \(\epsilon > 0\) are known constants, and \( \mathbb{E}_{\eta}\abs{X}^{1+\epsilon} :=\int \abs{y}^{1+\epsilon} d\eta(y) \) denotes the \((1+\epsilon)^{th}\) moment of \(\eta\).

As is common in the stochastic-bandit literature, the performance guarantees of the algorithm involve proving convergence results, which are typically a consequence of the Chernoff-Hoeffding like inequalities. However, direct application of Hoeffding's type results is not valid in our setting. We develop non-asymptotic concentration inequalities for functionals of probability measures that appear in the lower bound, which may also be of independent interest. Our approach for constructing an index for each arm can be used to get tight confidence intervals for means of heavy-tailed distributions. In particular, we show that the confidence intervals based on the widely used ``truncated'' or ``trimmed'' empirical mean contain our intervals, showing the superiority of our approach. See \cite{lugosi2019mean} for popular estimators for a distribution's mean in the heavy-tailed setting. We also demonstrate numerically that our algorithm suffers significantly less regret compared to the Robust-UCB algorithm of \cite{bubeck2013heavytail}, which derives its index from the above mentioned estimators for mean.

Computing the index can be very costly in this generality. To address this, we propose a batched version of the algorithm that computes the index for each arm only at the beginning of each batch, and allocates all the samples within that batch to the arm with maximum value of the computed index. We show that with carefully chosen batch sizes, this batched-algorithm suffers regret that is off by only a constant multiplicative factor, while significantly improving the computational cost. %The key insight is that in 1 time-step, index for any arm should not change significantly. Whence, it should suffice to use old indices for all the arms upto a certain time without compromising too much on the regret.

Since we allow for heavy-tailed distributions, we can no longer identify the distributions with one parameter, as is the case in the most widely used setting of SPEF. We work in the space of probability measures, where we use L\'evy metric (or equivalently the topology of weak-convergence) to define the notion of convergence.

We also establish the conjectured optimality of Empirical KL-UCB algorithm of \cite{cappe2013kullback} and give the first optimal finite-time regret bounds for bounded-support arm distributions. In \cite{garivier2011kl}, the authors gave a finite-time bound for a modification of the algorithm, and established its optimality only in the special case of Bernoulli arms. \cite{maillard2011finite} independently gave a tight finite time analysis for the Bernoulli case. For general bounded-support distributions, \cite{HondaBounded10} proposed an asymptotically optimal algorithm, but did not give finite-time bounds on the regret. 

In a nutshell, we propose the first asymptotically optimal algorithm for the heavy-tailed setting that matches the lower bound exactly up to the first order term. In this generality, the computational cost incurred by the algorithm can be significant. We propose a modification of the optimal algorithm that matches the lower bound up to constants, but requires significantly less computational effort. Our index gives tighter confidence intervals for the mean of heavy-tailed distributions compared to the well-known truncation or trimming based estimators. Moreover, the finite time analysis presented can be used to establish finite time guarantees for some of the existing algorithms.

\iffalse
As mentioned earlier, our algorithm is index-based. We show that our confidence interval for the mean is tighter than that for the widely used truncated empirical mean estimator of in \cite{bubeck2013heavytail}. Moreover, the numerical experiments demonstrate that our algorithm suffers significantly less regret compared to the Robust-UCB algorithm with this truncated mean as an estimate for the mean of each arm (\cite{bubeck2013heavytail}). We also establish the conjectured optimality of the empirical KL-UCB algorithm of \cite{cappe2013kullback} and give the first optimal finite-time regret bounds for bounded-support arm distributions. In \cite{garivier2011kl}, a subset of the authors gave a finite-time bound for a modification of the algorithm, and established its optimality only in the special case of Bernoulli arms. \cite{maillard2011finite} independently gave a tight finite time analysis for the Bernoulli case. For general bounded-support distributions, \cite{HondaBounded10} proposed an asymptotically optimal algorithm, but did not give finite-time bounds on the regret. In a nutshell, our main contributions include an asymptotically optimal algorithm for the generalized setting, and secondly, a tight finite-time analysis of the regret incurred by the algorithm, which can be adapted to get tighter finite-time bounds for bounded-support bandit problems.
\fi

\textbf{Roadmap:} In Section \ref{Sec:BackgroundLB} we describe the setup and discuss lower bound for the regret-minimization problem. Our proposed algorithm is presented in Section \ref{Sec:alg}. Section \ref{Sec:results} contains our main results for the proposed algorithm, including the finite time guarantee and its asymptotic optimality. A modification of the original algorithm that is practically tuned but at the cost of optimality up to constants, is also presented in this section. Regret analysis of the algorithm and proof ideas for its theoretical guarantee are presented in Section \ref{Sec:regret_analysis}. Superiority of our algorithm over that of \cite{bubeck2013heavytail} is established in Section \ref{Sec:ComparisonWithTruncatedMean}. In this section we also show that our confidence intervals are tighter compared to popular mean estimators for heavy tailed distributions. We discuss the trade-off in the computational cost and statistical optimality of the algorithm in Section \ref{Sec:ComputationalCost}. Towards the end, in Section \ref{Sec:Numerics}, we present the results of the numerical experiments. We conclude in Section \ref{Sec:conclusions}. 

\section{Background and the lower bound}\label{Sec:BackgroundLB}
Let \(\mathcal{ P}(\Re)\) denote the collection of all probability measures on \(\Re\). For \(\eta\in\mathcal{P}(\Re)\), let \(m(\eta) = \int\nolimits_{\Re} x d\eta\) denote the mean of distribution \( \eta\). Furthermore, let \(\KL\lrp{\eta,\kappa} =  \int\log\lrp{\frac{d\eta}{d\kappa}}d\eta(x)  \) denote the Kullbeck-Leibler divergence from probability distribution \( \eta \) to \(\kappa \). Let  \(\mathcal{L} \) be any collection of probability measures, and  let \(\mathcal L^K\) denote the collection of vectors of \(K\) distributions, each belonging to \(\mathcal{L}\). 

Given \( \mu \in \mathcal L^K\) such that  \(\mu =  \lrset{\mu_1,\mu_2,\dots,\mu_K}\), we assume for convenience that arm \(1\) has the maximum mean. Let the arm selected by the algorithm at time \(n\) be denoted by \(A_n\) and let the observed random sample at that time be denoted by \(X_{n}\). Then \(X_n\) is an independent sample distributed according to \(\mu_{A_n}\). Define the expected regret till time \(T\) as follows:
\begin{equation}\label{eq:exp.regret}
  \Exp{R_T} \triangleq \sum\limits_{n=1}^{T} \lrp{m(\mu_1)-m(\mu_{A_n})} = \sum\limits_{a=1}^K \Exp{N_a(T)} \Delta_a,
\end{equation}
where \(N_a(T)\) denotes the number of times arm \(a\) has been pulled in \(T\) trials, and \(\Delta_a := m(\mu_1)-m(\mu_a)\) is the sub-optimality gap of arm \(a\). For distribution \(\eta \in \mathcal{P}(\Re)\) and candidate mean \(x\in\Re\), we define
\[\KLinfR{\mathcal{L}}(\eta, x) := \inf\lrset{\KL(\eta,\kappa):\kappa\in\mathcal{L} ~\text{ and }~m(\kappa) \geq x }. \numberthis \label{eq:GeneralKLinf} \]
Then, \cite{BURNETAS1996} show that any reasonable strategy acting on a bandit problem \(\mu \in \mathcal L^K \), for any collection \(\cal L\),  suffers expected regret satisfying:

\begin{equation}\label{eq:RegLB}
\liminf\limits_{T\rightarrow \infty } \frac{\Exp{R_T}}{\log(T)} \geq \sum\limits_{a\ne 1}\frac{\Delta_a}{\KLinfR{\mathcal{L}}(\mu_a, m(\mu_1))},
\end{equation} 
where \(R_T\) is the random variable which denotes the total regret suffered by the algorithm till time \(T\). The proof of lower bound (\ref{eq:RegLB}) relies on change of measure arguments (see \cite{lattimore2020bandit}).  \citet[Lemma 1]{pmlr-v117-agrawal20a} show that it is necessary to impose certain restrictions on the class \(\mathcal{L} \) under consideration, otherwise \(\KLinfR{\mathcal{L}}(\cdot, \cdot) = 0 \) leading to unbounded expected regret. To this end, we restrict \(\mathcal{L} \) to class \(\mathcal{L}_{B}\) defined in \eqref{eq:LB}, i.e., to the collection of all distributions satisfying \(\mathbb{E}({\abs{X}^{1+\epsilon}})\leq B\), for fixed positive constants \(\epsilon\) and \(B\). Notice that  in order to bound the expected regret in \eqref{eq:exp.regret}, it is sufficient to bound the expected number of pulls of each sub-optimal arm $a \neq 1$.

Building upon the algorithms with \(\KL\)-based confidence intervals in \cite{maillard2011finite}, \cite{garivier2011kl}, and \cite{cappe2013kullback}, where the authors propose optimal algorithms for much simpler settings of either Bernoulli or SPEF arms, we develop an algorithm that matches the lower bound in (\ref{eq:RegLB}), in a much more general setting, allowing for heavy-tailed distributions.

As mentioned in the introduction, we study the convergence of sequences of probability measures in the L\'evy metric, which is reviewed in Appendix \ref{App:KLinfProp}. The analysis of the algorithm uses the continuity and convexity properties of \(\KLinf\), which are also proven in the Appendix \ref{App:KLinfProp} (Lemma \ref{lem:PropKLinf}). 

\section{The algorithm}\label{Sec:alg}
Our algorithm follows the UCB template with two variations. First, our arm indices are constructed from deviation inequalities for $\KLinfR{\mathcal{ L}}$, which allows us to show that our algorithm matches the (asymptotic) instance-optimal regret lower bound for the heavy-tailed setting. Second, our algorithm processes the samples in batches. Our geometric batching allows us to reduce the worst-case run-time from essentially $O(n^2)$ for the sample-at-a-time case to $O(n\log\log(n))$, at the cost of a mild factor in our regret.

Algorithm \ref{algorithm:gen} formally describes our algorithm, \( \KLinf \)-UCB. This is an index-based algorithm that proceeds in batches. After each batch, it computes an index for each arm and allocates the next batch of samples to the arm with the maximum index (breaking ties arbitrarily, if any). Henceforth, unless specified, we set \(\mathcal L = \mathcal L_B\), as defined in (\ref{eq:LB}). Furthermore, when \(\cal L\) is clear from the context, for ease of notation, we denote the functional \(\KLinfR{\mathcal{ L}}\), defined in (\ref{eq:GeneralKLinf}), by \(\KLinf\).

Let \(\mu\in\mathcal L^K\) be the given bandit instance, \(U_a(n)\) denote the value of the index corresponding to arm \(a\) at time \(n\), \(A_n \) denote the arm selected at time \(n\), and let \([K]\) denote the set \(\lrset{1,\dots, K}\). Let \(\hat{\mu}_a(n)\) denote the empirical distribution corresponding to \(N_a(n)\) samples from arm \(a\), and \(\tilde{\eta} \ge 0\) be a multiplicative factor that will be used to determine the batch sizes. The algorithm takes as inputs \(K\), \( \tilde{\eta} \), \(B\),  \( \epsilon \), and a threshold function, \(g_a(.)\) corresponding to each arm, which will be used for computing the index for that arm.

\begin{algorithm}
	\DontPrintSemicolon
	\SetKwInOut{Input}{Input}\SetKwInOut{Output}{Output}
	\Input{\(K;~\) description of \( \cal L \), i.e., \(B\) and \(\epsilon;~\) \(\tilde{\eta};~ \) threshold functions for each arm, i.e., \(g_a(\cdot)\).}
	%\Output{Arm with maximum empirical mean (ties broken arbitrarily).}
	\BlankLine
	\textbf{Initialization:} Allocate \(1\) sample to each of the \(K\) arms. \\
	Set \(n \longleftarrow K+1 \), \(~ j \leftarrow K+ 1\).\\ 
	Store empirical distributions, \(\hat{\mu}_a(n)\), and update \(N_a(n)\) for all arms \(a\in [K]\).  \\
	\While{True}{
          Compute index \( U_a(n) = \sup\lrset{x\in\Re: N_a(n)\KLinf(\hat{\mu}_a(n),x )\leq g_a(n)} \) for each arm. \;

          Compute best arm \( A_n = \argmax_{a\in [K]} U_a(n) \) and batch size \(~B_j = \max\lrset{1, {\ceil{\tilde{\eta} N_{A_n}(n)}}}\).\;

          Sample arm \(A_n\) for \(B_j\) many trials and set \( n\leftarrow n + B_j \), and \(~j \leftarrow j + 1\).\;

          Update \( \hat{\mu}_{a}(n) \) and \(N_a(n)\) for each arm.
	}
	\caption{\(\KLinf\text{-UCB}(K,B,\epsilon,\tilde{\eta}, \lrset{g_a(\cdot)}_{a=1}^K)\).}\label{algorithm:gen}
\end{algorithm}

Our index $U_a(n)$ for arm \(a\) at time \(n\) is based on the functional \(\KLinf\). It approximately corresponds to the inverse of \( N_a(n)\KLinf(\hat{\mu}_a(n), .) \), evaluated at the threshold, \(g_a(n)\) and can be re-expressed as
\[ U_a(n) = \max\lrset{\E{\eta}{X}: ~ \eta\in\mathcal L, ~ N_a(n)\KL(\hat{\mu}_a(n),\eta) \le g_a(n)}. \numberthis \label{eq:index_reformulated} \]
It is the maximum mean among distributions in \( \cal L \) that are close to the empirical distribution in \( \KL \) divergence. This formulation of the index will be useful in comparing our confidence widths to those of the truncated empirical mean estimator (see Section \ref{Sec:ComparisonWithTruncatedMean}).  Before looking at the computational cost incurred by the algorithm, we look at its theoretical guarantees.

\subsection{Main results}\label{Sec:results}
Let \(B_j \) be the random variable denoting the size of the \(j^{th}\) batch. Theorem \ref{th:geometric} below gives a finite-time bound on the number of pulls of a sub-optimal arm by the proposed-algorithm for appropriately chosen threshold functions \(g_a(\cdot)\). Corollary \ref{cor:asympOpt} shows that the \(\KLinf\)-UCB algorithm is asymptotically optimal up to a multiplicative factor of \(\lrp{1+\tilde{\eta}}\), and matches the lower bound when the batch size is \(1\), i.e., \(\tilde{\eta} = 0\). However, from our discussion in Section \ref{Sec:ComputationalCost}, for \(\tilde{\eta} = 0\), the algorithm has a quadratic computational cost. On the other hand, for \(\tilde{\eta}>0\), the computational cost reduces to being almost linear in number of samples, at the cost of matching the lower bound upto a constant factor of \( 1+\tilde{\eta} \). Thus, there is a trade off between the cost of computation and the expected regret suffered by the algorithm in long run. In Section \ref{Sec:Numerics}, numerical experiments demonstrate that even with the sub-optimality factor of \( (1+\tilde{\eta}) \), our algorithm suffers significantly less regret compared to the Robust-UCB algorithm with truncated empirical mean as estimator for true mean, proposed in \cite{bubeck2013heavytail}.
Our main regret bound is the following:

\begin{theorem}\label{th:geometric} 
	Let \(T > K \ge 2\), \(\mu\in\mathcal L^K \), \(\tilde{\eta} \geq 0\), and \(g_a(t) = \log(t)+2\log\log(t)+2\log(1+N_a(t))+1\). For each sub-optimal arm \(a\), \(\KLinf\)-UCB\( \lrp{K,B,\epsilon,\tilde{\eta},g_a\lrp{\cdot}} \)  has \(\Exp{N_a(T)} \) at most 
	\begin{equation*}
	(1+\tilde{\eta}) \lrp{\frac{\log T}{\KLinf(\mu_a,m(\mu_1))} + \frac{3(\log T)^{2/3} (c'_\mu)^{1/3}}{2(\KLinf(\mu_a,m(\mu_1)))^{4/3}} + O((\log  T)^{1/3}) + O(\log\log(T))} .
      \end{equation*}
\end{theorem}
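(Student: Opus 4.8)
The plan is to follow the classical KL-UCB regret decomposition, adapted to the functional $\KLinf$ over the space of measures and to the geometric batching. Write $d_a := \KLinf(\mu_a, m(\mu_1))$. A sub-optimal arm $a$ is selected at a batch-decision time $n$ only when $U_a(n) \ge U_1(n)$, so I would split its pulls according to whether the optimal arm's index stays above its mean: $A_n = a$ forces either (i) $U_1(n) < m(\mu_1)$, a failure of the optimal arm's lower confidence bound, or (ii) $U_1(n) \ge m(\mu_1)$, in which case $U_a(n) \ge m(\mu_1)$ and hence, directly from the index definition, $N_a(n)\KLinf(\hat\mu_a(n), m(\mu_1)) \le g_a(n) \le g_a(T)$. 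The batching enters only at the end: since a selected arm receives at most $\lceil \tilde{\eta} N_{A_n}(n)\rceil$ fresh samples, its count can be inflated by at most a factor $(1+\tilde{\eta})$ per selection, so controlling the last decision time at which $a$ is eligible by a deterministic $u^\ast$ gives $\Exp{N_a(T)} \le (1+\tilde{\eta})\lrp{u^\ast + O(1)}$, which is the source of the $(1+\tilde{\eta})$ prefactor on the whole bound.

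For term (i) I would invoke the upper-deviation concentration inequality for $\KLinf(\hat\mu_1(n), m(\mu_1))$ developed for this heavy-tailed class. The event $U_1(n) < m(\mu_1)$ is exactly $N_1(n)\KLinf(\hat\mu_1(n), m(\mu_1)) > g_1(n)$, so the sum of $\mathbb{P}$ of this event over all decision times must be kept lower order. This is precisely why the threshold is $g_a(t) = \log t + 2\log\log t + 2\log(1+N_a(t)) + 1$: the $2\log(1+N_a(t))$ is a peeling correction that absorbs the union bound over the random value of $N_1(n)$, while the $2\log\log t$ controls the union over decision times, as in the self-normalized KL-UCB analyses. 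I would verify that with this choice the resulting tail sum contributes only to the advertised $O(\log\log T)$ and $O(1)$ terms.

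The heart of the argument is term (ii). On a high-probability ``good'' event I would establish, via the lower-deviation concentration inequality for $\KLinf(\hat\mu_a(n), m(\mu_1))$, a quantitative bound $\KLinf(\hat\mu_a(n), m(\mu_1)) \ge d_a - \mathrm{dev}(N_a(n))$, where $\mathrm{dev}(\cdot)$ decays at the rate permitted by the bounded $(1+\epsilon)$-moment assumption and involves the instance constant $c'_\mu$. Substituting into $N_a(n)\KLinf(\hat\mu_a(n), m(\mu_1)) \le g_a(T) \approx \log T$ reduces the over-pull control to the largest admissible $u^\ast$ solving the scalar inequality $u\,(d_a - \mathrm{dev}(u)) \le \log T$. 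Expanding this fixed point around the leading value $\log T / d_a$ yields $\frac{\log T}{d_a} + \frac{3(\log T)^{2/3}(c'_\mu)^{1/3}}{2 d_a^{4/3}} + O((\log T)^{1/3})$, with the exponent $2/3$, the coefficient $3/2$, and the power $d_a^{4/3}$ all inherited from the (polynomial) deviation rate $\mathrm{dev}$ and the Taylor expansion of the fixed point. Continuity and convexity of $\KLinf$ in the L\'evy metric (Lemma \ref{lem:PropKLinf}) are used to guarantee $d_a > 0$ and to bound $\mathrm{dev}$ uniformly over the empirical measures that arise.

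The main obstacle I anticipate is establishing the two-sided concentration of $\KLinf(\hat\mu_a(n), \cdot)$ with a rate sharp enough to deliver exactly this second-order term: unlike the bounded or SPEF settings, the empirical measure of a heavy-tailed distribution is a poor estimator of its tail, so the deviation inequality must be proved directly for the functional $\KLinf$ rather than inherited from a mean estimator, and it must hold uniformly over all relevant sample sizes so that the peeling terms in $g_a$ suffice. Getting the constant $c'_\mu$ and the $3/2$ coefficient right, rather than merely recovering the leading $\log T/d_a$ term, is the delicate part; the batch inflation and the optimal-arm failure bound are comparatively routine once these concentration inequalities are in hand.
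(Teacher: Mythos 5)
Your overall skeleton matches the paper: the same split of the selection event into a downward deviation of the optimal arm's index (handled by the anytime self-normalized inequality, Proposition \ref{prop:deviation_self_normalizing}, contributing only a constant via Lemma \ref{lem:BoundingDN}) versus the sub-optimal arm's index reaching $m(\mu_1)$, the same $(1+\tilde{\eta})$ inflation from batching, and the same recognition that one needs a concentration inequality proved directly for the functional $\KLinf$ rather than inherited from a mean estimator. The gap is in the step you yourself flag as delicate: how the second-order term arises. Writing $d_a = \KLinf(\mu_a,m(\mu_1))$ as you do, the paper does \emph{not} use a sample-size-dependent band $\mathrm{dev}(N_a(n))$ and a fixed-point inversion. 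It fixes a single gap $\delta>0$ (constant in the sample size, tuned to $T$ only at the very end) and splits the selection event into $E_{1j}$ — the empirical $\KLinf$ is within $\delta$ of $d_a$, which forces $N_a(T_j) \le \frac{\log T + O(\log\log T)}{d_a-\delta}$ (Lemma \ref{lem:BoundingSumE1j}) — and $E_{2j}$ — a downward deviation of size $\delta$, whose batch-weighted \emph{expected} count is bounded by roughly $\frac{1+\tilde{\eta}}{c_\mu\delta^2}$ using $\mathbb{P}\lrp{\KLinf(\hat{\mu}_{a,s},m)\le d_a-\delta}\le e^{-s c_\mu\delta^2}$ (Lemmas \ref{lem:BoundOfE2j} and \ref{lem:SumE2jIntermediate}). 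Minimizing $\frac{\delta\log T}{d_a^2}+\frac{c'_\mu/2}{\delta^2}$ over $\delta$ gives $\delta^* = (c'_\mu d_a^2/\log T)^{1/3}$, and the coefficient $3/2$ is exactly the sum $1+\tfrac12$ of the two terms at the optimum. The exponent $2/3$ therefore comes from trading a \emph{linear-in-$\delta$} inflation of the leading term against a \emph{quadratic-in-$1/\delta$} expected rare-event cost; it has nothing to do with a polynomially decaying deviation rate.

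Your mechanism, taken literally, does not produce the stated bound. First, the concentration actually available is exponential, $e^{-sc_\mu\delta^2}$ — the heavy tails do not degrade it to polynomial — so a band holding uniformly over sample sizes has width $\mathrm{dev}(s)$ of order $\sqrt{\log(\cdot)/(c_\mu s)}$; feeding that into your fixed point $u(d_a-\mathrm{dev}(u))\le\log T$ gives a second-order term of order $\sqrt{\log T \log\log T}$, not $(\log T)^{2/3}$, and the paper's constant $c'_\mu$ (which is built from the constant in Lemma \ref{lem:SumE2jIntermediate}) never appears; so the claimed expansion with exponent $2/3$, coefficient $3/2$, and power $d_a^{4/3}$ simply does not follow from the mechanism you describe. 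Second, and more seriously, your accounting of the bad event is not secured: if the ``good event'' is a single global event over all sample sizes, its failure costs $T\cdot\mathbb{P}(G^c)$ in expectation, and making that $O(1)$ by a union bound over $s\le T$ forces $\mathrm{dev}(s)\ge\sqrt{2\log T/(c_\mu s)}$, which at the critical sample size $s\approx \log T/d_a$ is a constant of order $\sqrt{d_a/c_\mu}$ and destroys the leading term. The viable repair is to pay the rare-deviation cost \emph{per selection in expectation} — each time arm $a$ is selected, either the empirical $\KLinf$ is within $\delta$ of $d_a$ (bounding the pull count) or a deviation of fixed size $\delta$ occurred at that particular sample size (summable by the exponential bound) — but that is precisely the paper's $E_{1j}/E_{2j}$ decomposition, after which the width and the expected cost trade off exactly as in the paper's $\delta$-optimization. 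So the proposal is right in outline, but the key quantitative step needs the fixed-$\delta$-then-optimize argument (or an equivalent per-selection accounting), not a global good event with a fixed-point inversion.
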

In Theorem \ref{th:geometric} above,  \(c'_\mu > 0\) is a bandit instance-dependent constant. Exact \(O((\log(T))^{1/3}) \) and \(O(\log\log T)\) terms are given in (\ref{eq:exactRegret}) below. Theorem \ref{th:geometric} implies logarithmic regret for \(\KLinf \)-UCB, and Corollary~\ref{cor:asympOpt} shows its asymptotic instance-optimality. %The specific formulation with $\inf_\delta$ allows us to additionally judge the magnitude of the second-order term. The precise trade-off depends on $c(\delta)$, which is the convex conjugate of the log-moment-generating function of a certain centred variable related to the arm distributions (see Lemma \ref{lem:BoundOfE2j} for details). Hence, \( c_a(\delta) \) approaches \(0\) as \( \delta \) approaches \(0\). However, the first term decreases as \( \delta \) reduces to \(0\). Thus the r.h.s. above will be optimized for some \( \delta > 0 \). Typically, \( c_a(\delta) \ge \delta^2 d_a \), for some distribution dependent constant \(d_a\). When this is true, setting
%\[ \delta = \lrp{\frac{2 \KLinf^2(\mu_a,m(\mu_1)) }{d_a \log T }}^{1/3} \]
% in the r.h.s. above, we get 
% \[ \Exp{N_a(T)} \le (1+\tilde{\eta}) \lrp{\frac{\log(T)}{\KLinf(\mu_a,m(\mu_1))} + \frac{d'_a(\log(T))^{2/3}}{\KLinf^{4/3}(\mu_a,m(\mu_1))} } + O((\log(T))^{1/3}),\]
%for some constant \(d'_a\). We show in Appendix \ref{App:RegretGuarantees} that this is true for sufficiently small \( \delta \) even in our setting. 

\begin{corollary}\label{cor:asympOpt}
	For \(\mu\in \mathcal L^K \), \( \tilde{\eta} \ge 0 \), and \( g_a(t) = \log(t) + 2\log\log(t) + 2\log(1+N_a(t)) + 1 \), \(\KLinf \)-UCB, with inputs $(K,B,\epsilon,\tilde{\eta},g_a(.))$ is asymptotically optimal up to a factor of \( (1+\tilde{\eta}) \), i.e., 
 	\begin{equation*}
	 \limsup\limits_{T\longrightarrow \infty}\frac{\Exp{N_a(T)}}{\log(T)} \leq \frac{1+\tilde{\eta}}{\KLinf(\mu_a,m(\mu_1))}. 
	 \end{equation*}
\end{corollary}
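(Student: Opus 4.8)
The plan is to derive Corollary \ref{cor:asympOpt} directly from the finite-time bound of Theorem \ref{th:geometric}. The strategy is purely asymptotic: I would take the stated upper bound on $\Exp{N_a(T)}$, divide throughout by $\log(T)$, and evaluate the $\limsup$ as $T \to \infty$, verifying that every term other than the leading one contributes nothing in the limit.

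Concretely, starting from the bound in Theorem \ref{th:geometric} and dividing both sides by $\log(T)$ yields
\[
\frac{\Exp{N_a(T)}}{\log(T)} \le (1+\tilde{\eta})\left(\frac{1}{\KLinf(\mu_a,m(\mu_1))} + \frac{3(c'_\mu)^{1/3}}{2(\KLinf(\mu_a,m(\mu_1)))^{4/3}}(\log T)^{-1/3} + O\!\left((\log T)^{-2/3}\right) + O\!\left(\frac{\log\log(T)}{\log(T)}\right)\right).
\]
The next step is to observe that as $T \to \infty$ each correction term vanishes: $(\log T)^{-1/3} \to 0$, $(\log T)^{-2/3} \to 0$, and $\log\log(T)/\log(T) \to 0$. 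Since $c'_\mu$ and $\KLinf(\mu_a, m(\mu_1))$ are fixed instance-dependent constants, with $\KLinf(\mu_a, m(\mu_1)) > 0$ because $\mu_a$ is sub-optimal, the entire parenthetical expression converges to $1/\KLinf(\mu_a, m(\mu_1))$. Taking the $\limsup$ then gives the claimed inequality.

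I do not expect any genuine obstacle here, as all of the analytic content has already been absorbed into Theorem \ref{th:geometric}; the only thing to check is the routine bookkeeping that the $O(\cdot)$ remainder terms are genuinely $o(\log T)$, which is immediate from their functional forms. The one point worth flagging is that one must ensure $\KLinf(\mu_a, m(\mu_1)) > 0$ for every sub-optimal arm, so that the leading coefficient is finite and well-defined; this follows from the restriction to the class $\mathcal{L}_B$ together with the continuity and convexity properties of $\KLinf$ established earlier (Lemma \ref{lem:PropKLinf}).
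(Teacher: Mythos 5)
Your proposal is correct and matches the paper's approach: the paper states Corollary \ref{cor:asympOpt} as an immediate consequence of Theorem \ref{th:geometric}, obtained exactly as you describe by dividing the finite-time bound by \(\log(T)\) and noting that the \((\log T)^{2/3}\), \((\log T)^{1/3}\), and \(\log\log(T)\) terms are all \(o(\log T)\). Your flagged check that \(\KLinf(\mu_a, m(\mu_1)) > 0\) for sub-optimal arms is a reasonable point of care, and is implicit in the paper's analysis (the proof of Theorem \ref{th:geometric} already requires choosing \(\delta < \min_{a\ne 1}\KLinf(\mu_a,m(\mu_1))\)).
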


Numerically we observe that the threhold used by \( \KLinf \)-UCB is conservative (see Figure \ref{fig:KLinfUCBEasy} in Section \ref{Sec:Numerics}). A version with lower thresholds performs significantly better in the time horizon considered. To address this, we propose a closely related algorithm, \( \KLinf \)-UCB2, which has much smaller threshold, and suffers regret with the first order term that is arbitrarily close to that in the lower bound. 

Let \( \epsilon_1 > 0 \), let \(\tilde{B}=B+\epsilon_1\) and define \( \delta_t = \log(1+(\log\log(t))\inv) \). For \( \mu\in\mathcal L_B \), \( \KLinf \)-UCB2 is precisely \(\KLinf  \)-UCB$(K,\tilde{B},\epsilon,\tilde{\eta},(1+\delta_t)^2\log(t))$.  For \(\eta\in\mathcal P(\Re)\), and \(x\in\Re\), let \( \KLinfR{\epsilon_1}(\eta,x) \) denote \(\KLinfR{\mathcal L_{\tilde{B}}}(\eta,x)\). 
\begin{theorem}\label{th:AsymptoticOptimality2}
	For \( \mu\in\mathcal L^K, \) \( \tilde{\eta} \ge 0 \), \(\epsilon_1>0\), and \( g_a(t) = (1+\delta_t)^2\log(t) \), \( \KLinf \)-UCB2 satisfies
	\[ \limsup\limits_{T\rightarrow \infty} \frac{\Exp{N_a(T)}}{\log(T)} \le \frac{1+\tilde{\eta}}{\KLinfR{\epsilon_1}(\mu_a,m(\mu_1))}. \]
\end{theorem}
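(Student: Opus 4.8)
The plan is to reuse the optimistic-index decomposition behind Theorem~\ref{th:geometric}, re-deriving only the deviation control that the smaller threshold $g_a(t)=(1+\delta_t)^2\log t$ and the enlarged class $\mathcal L_{\tilde B}$ demand. Write $x^\ast=m(\mu_1)$ and fix a suboptimal arm $a$. Since $A_n=a$ forces $U_a(n)\ge U_1(n)$, I would split the pulls of $a$ according to whether the optimal arm's index is \emph{pessimistic}, $U_1(n)<x^\ast$, or not. On the complementary event $U_1(n)\ge x^\ast$, a pull of $a$ implies $U_a(n)\ge x^\ast$, which by the reformulation \eqref{eq:index_reformulated} is exactly $N_a(n)\,\KLinfR{\epsilon_1}(\hat\mu_a(n),x^\ast)\le g_a(n)$.

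For the pessimistic event I would bound $\sum_{n\le T}\Pr[U_1(n)<x^\ast]$. Again by \eqref{eq:index_reformulated}, $U_1(n)<x^\ast$ is equivalent to $N_1(n)\,\KLinfR{\epsilon_1}(\hat\mu_1(n),x^\ast)>g_1(n)$, and since $\mu_1\in\mathcal L_B\subseteq\mathcal L_{\tilde B}$ has mean exactly $x^\ast$ we have $\KLinfR{\epsilon_1}(\mu_1,x^\ast)=0$. The crux is a \emph{self-normalized} one-sided deviation inequality for $\KLinfR{\epsilon_1}$ — the concentration result advertised in the introduction — of the form $\Pr[N_1(n)\,\KLinfR{\epsilon_1}(\hat\mu_1(n),x^\ast)\ge u]\le C(n)\,e^{-u}$, uniform in the sample count and carrying only a polylogarithmic factor $C(n)$. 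Substituting $u=g_1(n)=(1+\delta_n)^2\log n$ turns the exponential into $n^{-(1+\delta_n)^2}=n^{-1}\,e^{-(2\delta_n+\delta_n^2)\log n}$, and the choice $\delta_t=\log(1+(\log\log t)^{-1})$ is calibrated precisely so that $e^{-2\delta_n\log n}\approx e^{-2\log n/\log\log n}$ dominates $C(n)$ and drives the Ces\`aro average to zero; hence $\sum_{n\le T}\Pr[U_1(n)<x^\ast]=o(\log T)$, a lower-order contribution. It is here that the inflation to $\tilde B=B+\epsilon_1$ is spent: enlarging the feasible set can only lower $\KLinfR{\epsilon_1}(\hat\mu_1(n),x^\ast)$, giving the slack that lets the deviation bound close at a threshold this close to $\log n$ and, crucially, \emph{without} the $2\log(1+N_a(t))$ sample-count penalty carried by the conservative threshold of Theorem~\ref{th:geometric}.

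For the non-pessimistic pulls I would argue by continuity of $\KLinfR{\epsilon_1}(\cdot,\cdot)$ in the L\'evy metric (Lemma~\ref{lem:PropKLinf}). Fix $\zeta>0$. Once $N_a(n)>\frac{(1+\delta_n)^2\log n}{\KLinfR{\epsilon_1}(\mu_a,x^\ast)-\zeta}$ and $\hat\mu_a(n)$ is L\'evy-close enough to $\mu_a$ that $\KLinfR{\epsilon_1}(\hat\mu_a(n),x^\ast)\ge\KLinfR{\epsilon_1}(\mu_a,x^\ast)-\zeta$, the constraint $N_a(n)\,\KLinfR{\epsilon_1}(\hat\mu_a(n),x^\ast)\le g_a(n)$ fails, so arm $a$ is not played; the steps on which $\hat\mu_a(n)$ is far from $\mu_a$ sum to an $O(1)$ term by a L\'evy-metric concentration bound for the empirical measure. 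The geometric batching can overshoot the stopping count by at most a factor $(1+\tilde\eta)$ (one pull takes $N_a$ to at most $(1+\tilde\eta)N_a+1$), exactly as in Theorem~\ref{th:geometric}. Evaluating at $n\le T$, letting $\zeta\downarrow 0$, and using $(1+\delta_T)^2\to 1$ yields $\limsup_T \Exp{N_a(T)}/\log T\le (1+\tilde\eta)/\KLinfR{\epsilon_1}(\mu_a,x^\ast)$, which is the claim since $x^\ast=m(\mu_1)$.

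I expect the main obstacle to be the first, pessimistic group: producing a self-normalized deviation inequality for $\KLinfR{\epsilon_1}(\hat\mu_1(n),x^\ast)$ sharp enough that, at the aggressive threshold $(1+\delta_t)^2\log t$ with no explicit sample-count correction, the summed underestimation probability is genuinely $o(\log T)$ rather than $\Theta(\log T)$. This requires peeling over the random number of samples while simultaneously controlling the empirical $(1+\epsilon)$-moment, and it is exactly the $\epsilon_1>0$ buffer between $B$ and $\tilde B$ that makes this peeling close. By contrast, the non-pessimistic analysis is the continuity-plus-batching argument already established for Theorem~\ref{th:geometric}, and the batching factor enters identically.
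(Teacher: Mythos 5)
Your decomposition and your treatment of the non-pessimistic pulls follow the paper's template, and you correctly isolate the crux: a deviation inequality for the optimal arm's index at the threshold \((1+\delta_t)^2\log t\) with no sample-count correction. But your proposed resolution of that crux --- that ``it is exactly the \(\epsilon_1>0\) buffer between \(B\) and \(\tilde B\) that makes this peeling close'' --- fails, and this is a genuine gap. In the dual formulation (Lemma \ref{lem:DualFormulation}), the slack bought by inflating \(B\) to \(\tilde B=B+\epsilon_1\) acts only against the \emph{moment} constraint: for fixed \((\lambda_1,\lambda_2)\) the supermartingale factors \(1-(X-x)\lambda_1-(\tilde B-\abs{X}^{1+\epsilon})\lambda_2\) have mean at most \(1-(m(\mu_1)-x)\lambda_1-\epsilon_1\lambda_2\), so at your choice \(x^\ast=m(\mu_1)\) there is no contraction along the slice \(\lambda_2\approx 0\). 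Uniformity over \(\lambda_1\) on that slice must then be paid for either by a mixture (Lemma \ref{lem:exp-concave}), which costs \(\log(1+N_1(n))\) in the exponent --- exactly the term you are trying to delete --- or by a union bound over an \(\epsilon\)-net whose resolution must scale like \(1/N_1(n)\), which costs a prefactor \(C(n)\propto N_1(n)\). Since the optimal arm has \(N_1(n)=\Theta(n)\), the prefactor is linear in \(n\), not polylogarithmic, and then your Ces\`aro argument collapses: the per-step bound degrades to roughly \(e^{-(2\delta_n+\delta_n^2)\log n}=e^{-\Theta(\log n/\log\log n)}\), and \(\sum_{n\le T}e^{-2\log n/\log\log n}\ge (T/2)\,e^{-2\log T/\log\log T}\ge \sqrt{T}/2\) for large \(T\), which is polynomial in \(T\) rather than \(o(\log T)\). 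So the self-normalized inequality you posit at the true mean, with polylogarithmic \(C(n)\), is not something the \(\epsilon_1\)-buffer delivers, and the paper never proves such a statement.

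What the paper actually does is introduce a \emph{second} perturbation that your proposal omits: all indices are compared against \(m=m(\mu_1)-\epsilon_2\) for an auxiliary \(\epsilon_2>0\), not against \(m(\mu_1)\). This buys slack \(\epsilon_2\lambda_1\) in the mean direction of the dual, and the key concentration result (Proposition \ref{th:DoublePeeling}, via Lemma \ref{lem:BoundOnSet}) runs a double peeling --- over the random batch time and over \(N_1\) --- together with an \(\epsilon\)-net on \(\mathcal{R}(m,\tilde B)\) in which the Chernoff bound at grid point \((l_1\delta_1,l_2\delta_2)\) carries the factor \(\exp\lrp{-\epsilon_2 l_1/(f\inv(\tilde B)-m)-\epsilon_1 l_2/(\tilde B - f(m))}\); \emph{both} slacks are needed to make the union bound over the net sum to a constant, and indeed the prefactor \(C_1\) in Lemma \ref{lem:BoundOnSet} contains \(\lrp{1-e^{-\epsilon_2/(f\inv(\tilde B)-m)}}\inv\), which blows up as \(\epsilon_2\to 0\). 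The statement at the true mean is recovered only in the limit: one first sends \(T\to\infty\) with \(\epsilon_2\) fixed, then lets \(\epsilon_2\downarrow 0\) using continuity of \(\KLinfR{\epsilon_1}(\mu_a,\cdot)\) in its \emph{second} argument (Lemma \ref{lem:PropKLinf}). A smaller issue: for the non-pessimistic pulls you invoke Lemma \ref{lem:PropKLinf} for continuity ``in the L\'evy metric,'' but that lemma gives continuity in \(x\) and in \(B\) for a fixed first argument, not continuity in the measure argument; the paper instead controls the downward deviations of \(\KLinf(\hat{\mu}_{a,s},m)\) directly through the dual and a rate-function bound (Lemmas \ref{lem:BoundOfE2j} and \ref{lem:LDP}), exactly as in Theorem \ref{th:geometric}.
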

In Lemma \ref{lem:PropKLinf}, Appendix \ref{App:KLinfProp}, we show that \( \KLinfR{\mathcal L_B} \) is a continuous function of the class parameter, \(B\). Hence, by choosing \( \epsilon_1 \) close to \(0\),  \( \KLinf \)-UCB2 regret gets arbitrarily close to the lower bound, as \(T\rightarrow \infty\) (modulo \( (1+\tilde{\eta}) \) factor). 

\subsection{Conjecture and open problem of \cite{cappe2013kullback}: }
The analysis of the proposed algorithm can be specialized to bound the regret of the Empirical KL-UCB algorithm of \cite{cappe2013kullback} for arm-distributions with bounded support. In this setting, \(\mathcal L = \mathcal{P}([0,1]) \), the collection of all probability measures supported on \([0,1]\). For \( \mu_a\in \mathcal L\), for each arm \(a\in[K]\), and \( x\in [0,1] \), on setting
\[ \KLinfR{\mathcal L}(\mu_a, x) =  \inf~~\KL(\mu_a,\kappa) \quad \text{s.t.}\quad \kappa \in \mathcal{P}([0,1]), ~~ m(\kappa) \ge x,\numberthis\label{eq:Klinf_bddSupp} \]
in the index of our algorithm, we recover the Empirical KL-UCB algorithm.
\begin{proposition}\label{prop:BddSupp}
  Let \(\mathcal L = \mathcal P([0,1])\). Empirical KL-UCB, with \( g_a(t) := \log t + \log \log t \) bounds the pull counts for suboptimal arms $a>1$ at \( T \ge K \) by
  \[
    \Exp{N_a(T)}
    ~\le~
    \frac{\log(T)}{\KLinfR{\mathcal L}(\mu_a,m(\mu_1)) - O\lrp{\lrp{\log T}^{-\frac{1}{5}} \lrp{\log\log T}^{\frac{1}{5}}}}
    + O\lrp{\lrp{\log T}^{\frac{4}{5}}\lrp{\log\log(T)}^\frac{1}{5}}. \]
\end{proposition}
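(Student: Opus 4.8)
The plan is to re-run the regret decomposition behind Theorem~\ref{th:geometric}, but instantiated for the bounded-support class $\mathcal L=\mathcal P([0,1])$ and with the lighter threshold $g_a(t)=\log t+\log\log t$. A direct appeal to Theorem~\ref{th:geometric} would only give its heavier threshold and the $(\log T)^{2/3}$ second-order term, so a dedicated bounded-support analysis is needed; what changes is that the heavy-tailed deviation inequalities get replaced by their faster, sub-Gaussian counterparts, and this is precisely what permits the smaller threshold. Write $d:=\KLinfR{\mathcal L}(\mu_a,m(\mu_1))$. As usual $\mathbb E[N_a(T)]=\sum_n\mathbb P(A_n=a)$, and a pull of $a$ at step $n$ forces $U_a(n-1)\ge U_1(n-1)$. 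I would split on the position of the leader's index relative to the truth, using $\mathbbm 1\{A_n=a\}\le \mathbbm 1\{U_1(n-1)<m(\mu_1)\}+\mathbbm 1\{A_n=a,\ U_a(n-1)\ge m(\mu_1)\}$.

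First (optimal-arm underestimation), handle $\sum_n\mathbb P(U_1(n-1)<m(\mu_1))$. Since $\KLinf(\eta,\cdot)$ is nondecreasing in its second argument, the index definition gives $U_1(n)<m(\mu_1)$ exactly when $N_1(n)\,\KLinf(\hat\mu_1(n),m(\mu_1))>g_1(n)$, so this term is governed by an upper-deviation inequality for the empirical $\KLinf$ over $\mathcal P([0,1])$, uniform in the sample size $N_1(n)$. With the $\log\log t$ correction in $g_1$, a peeling / self-normalized argument makes the summed probability of lower order (a poly-$\log\log$ contribution), so it is absorbed into the additive $O((\log T)^{4/5}(\log\log T)^{1/5})$ term.

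Second (sub-optimal arm), on the event $U_1(n-1)\ge m(\mu_1)$ a pull of $a$ forces $U_a(n-1)\ge m(\mu_1)$, i.e. $N_a(n-1)\,\KLinf(\hat\mu_a(n-1),m(\mu_1))\le g_a(n-1)\le \log T+\log\log T$. I would introduce a deviation parameter $\zeta>0$ and combine the continuity of $\KLinf$ (Lemma~\ref{lem:PropKLinf}) with the exponential concentration of $\hat\mu_a$ about $\mu_a$ (in the L\'evy/Kolmogorov sense) to guarantee, with probability $1-p_s(\zeta)$ after $s$ samples, that $\KLinf(\hat\mu_a(n-1),m(\mu_1))\ge d-\zeta$. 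On this event no such pull occurs once $N_a\ge u:=(\log T+\log\log T)/(d-\zeta)$, so the pull count is at most $u$ plus the off-event cost $\sum_{s\ge 1}p_s(\zeta)$. The H\"older modulus of $\KLinf$ in the L\'evy metric together with the DKW-type rate yields $p_s(\zeta)\le C\,e^{-c s\zeta^{4}}$, so the off-event cost is $\sim\zeta^{-4}$; expanding $u=\frac{\log T}{d}+\frac{\log T\cdot\zeta}{d^2}+\cdots$ and balancing $\frac{\log T\cdot\zeta}{d^2}$ against $\zeta^{-4}$ gives the optimal $\zeta\sim((\log\log T)/\log T)^{1/5}$, hence the denominator correction $d-O((\log T)^{-1/5}(\log\log T)^{1/5})$ and the additive $O((\log T)^{4/5}(\log\log T)^{1/5})$.

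\textbf{Main obstacle.} The crux is the optimal-arm step, not the sub-optimal one: establishing a finite-time, sample-size-uniform upper-deviation inequality for the \emph{empirical} functional $\KLinf(\hat\mu_1,m(\mu_1))$ over all of $\mathcal P([0,1])$, sharp enough to operate with the minimal threshold $g_1(t)=\log t+\log\log t$. This is exactly the ingredient that prior work had only in the Bernoulli case, and is what upgrades the conjecture of \cite{cappe2013kullback} to a theorem. I expect to obtain it by combining the convex-duality representation of $\KLinf$ (Lemma~\ref{lem:PropKLinf}) with a method-of-mixtures / peeling argument over $N_1(n)$, taking care that the same continuity estimates used for the sub-optimal arm pin down the H\"older exponent responsible for the $1/5$–$4/5$ split.
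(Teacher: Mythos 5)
Your decomposition is sound in outline, but the proposal misses the one idea the paper's proof actually turns on, and the step you flag as the ``main obstacle'' is resolved in a way that cannot work as you describe. The paper does \emph{not} compare the two indices to the true mean $m(\mu_1)$: it introduces a perturbation $\epsilon_1>0$ and compares both indices to the deflated level $\tilde m = m(\mu_1)-\epsilon_1$. This is essential, not cosmetic. At the true mean, every available deviation bound for $N_1(t)\,\KLinfR{\mathcal L}(\hat\mu_1(t),m(\mu_1))$ carries a polynomial prefactor in the sample size or in the threshold: the paper's own mixture-martingale bound (Proposition \ref{prop:deviation_self_normalizing}) costs an extra $2\log(1+N_1(t))+1$, which for the optimal arm ($N_1(t)\approx t$) is $\approx 2\log t$ and already exceeds the entire budget $g_1(t)=\log t+\log\log t$; peeling-based bounds cost a multiplicative factor of order $g(t)\log t$, so the summed underestimation probability behaves like $\sum_t (\log t)/t \sim (\log T)^2$, swamping the main term. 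So your claim that a ``peeling / self-normalized argument'' makes the optimal-arm term poly-$\log\log$ with this light threshold is false, and your plan to obtain the missing inequality by ``method-of-mixtures / peeling over $N_1(n)$'' would run into exactly this wall. The paper's fix is to bound the probability that the optimal arm's index falls below $\tilde m$ (not $m(\mu_1)$), using the deviation inequality of \citet[Section B.2, (26)]{cappe2013kullbackSupp}: the $\epsilon_1$-gap buys an extra exponential decay in the sample size, so the union bound over $N_1(t)$ collapses to a \emph{constant} prefactor of order $36/\epsilon_1^4$, and $\sum_t e^{-g(t)}=\sum_t 1/(t\log t)=O(\log\log T)$ finishes that term.

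This also means you have misattributed the source of the $1/5$--$4/5$ exponents. In the paper they come from the \emph{optimal-arm} term: one balances the additive cost $\epsilon_1^{-4}\log\log T$ against the denominator shift $\epsilon_1/(1-m(\mu_1))$ incurred by moving from $m(\mu_1)$ to $\tilde m$ (via \citet[Lemma 4]{cappe2013kullbackSupp}), giving $\epsilon_1\sim\lrp{(\log\log T)/\log T}^{1/5}$. The sub-optimal arm is handled with the concentration bound of \citet[Theorem 12]{honda2015SemiBounded}, which gives rate $e^{-s\,c\,\delta^2}$ — not the $e^{-cs\zeta^4}$ you conjecture from a H\"older-$1/2$ modulus of $\KLinf$ in the L\'evy metric (a modulus nowhere established, and unnecessary) — so its off-event cost is $\delta^{-2}$, optimized at $\delta\sim(\log T)^{-1/3}$, and its contribution is dominated by the optimal-arm terms. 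Note also that your own balance, $\zeta^{-4}$ against $\zeta\log T$, yields $\zeta\sim(\log T)^{-1/5}$ with no $\log\log T$ factor at all: the $(\log\log T)^{1/5}$ in the statement comes from the $\sum_t 1/(t\log t)$ sum in the optimal-arm term, i.e., precisely the term you discarded as absorbed.
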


 \cite{HondaBounded10} develop an explicit representation and properties for the functional defined in (\ref{eq:Klinf_bddSupp}) above, which we review in Appendix \ref{app:sec:finiteBoundForBdd}. Carefully using these in our analysis, we get the above mentioned bound. We refer the reader to Appendix \ref{app:sec:finiteBoundForBdd} for the exact bound and its proof.

\subsection{Regret analysis}\label{Sec:regret_analysis}
In this section, we prove Theorem \ref{th:geometric}. Proof of Theorem \ref{th:AsymptoticOptimality2} is similar, and is given in Appendix \ref{app:perturbations}. 

The proof proceeds by analysing the events leading to the selection of a sub-optimal arm $a >1$ by Algorithm \ref{algorithm:gen}. We show that if it has been sampled enough, then the probability of it getting selected is extremely small. In particular, this corresponds to showing 2 things: first, \(\KLinf \)-UCB index is a high probability upper bound on the true-mean, and second, probability of it being too large is small. 

Notice that for \(x\in\Re\), and \(b\in [K]\), if \(N_b(n)\KLinf(\hat{\mu}_b(n), x)\) is at least the threshold, \( g_b(n)\), then the index for arm \(b\) at time \(n\) is smaller than \(x\). Similarly, if \(N_b(n)\KLinf(\hat{\mu}_b(n),x)\) is less than the threshold, then the index computed by \(\KLinf\)-UCB is at least \(x\). Proposition \ref{prop:deviation_self_normalizing} below shows that for all \(n\), our index is an upper bound on the true mean of the arm-distribution, with high probability.

\begin{proposition}\label{prop:deviation_self_normalizing}
	For $x \ge 0$, \( b\in[K] \), 	$$\mathbb{P}\lrp{\exists n\in\mathbb{N}: N_b(n)\KLinf(\hat{\mu}_b(n), m(\mu_b) ) - 1 - 2\log(1+N_b(n))  \ge x } \le e^{-x}. $$
\end{proposition}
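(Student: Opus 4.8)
The plan is to control the event through a nonnegative supermartingale extracted from the convex-dual representation of $\KLinf$, and then to invoke Ville's maximal inequality. First I would remove the dependence on the random bandit dynamics: the reward stream of arm $b$ is i.i.d.\ from $\mu_b$ and all pulling decisions are previsible, so it suffices to bound, uniformly over the sample count $s\in\mathbb{N}$, the quantity $s\,\KLinf(\hat\mu_b^{(s)}, m(\mu_b))$, where $\hat\mu_b^{(s)}$ is the empirical measure of the first $s$ samples $X_1,\dots,X_s$ drawn from $\mu_b$. Indeed, as $n$ varies, $N_b(n)$ realizes values in $\mathbb{N}$ and $\hat\mu_b(n)$ coincides with $\hat\mu_b^{(N_b(n))}$, so the event in the statement is contained in $\{\exists s: s\,\KLinf(\hat\mu_b^{(s)}, m(\mu_b)) - 1 - 2\log(1+s) \ge x\}$.

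The crux is to rewrite $\KLinf$ in dual form. Since $\KLinfR{\mathcal L_B}(\eta, y)$ is the value of the convex program that minimizes $\KL(\eta,\kappa)$ over $\kappa\in\mathcal P(\Re)$ subject to $m(\kappa)\ge y$ and $\mathbb{E}_\kappa\abs{X}^{1+\epsilon}\le B$, Lagrangian duality (with multipliers $\lambda\ge 0$ for the mean constraint and $\gamma\ge 0$ for the moment constraint) yields a representation of the form
\[ \KLinf(\hat\mu_b^{(s)}, m(\mu_b)) = \sup_{(\lambda,\gamma)} \frac{1}{s}\sum_{i=1}^{s} f_{\lambda,\gamma}(X_i), \]
where $e^{f_{\lambda,\gamma}(y)}$ is affine of the schematic form $1 - \lambda(y - m(\mu_b)) + \gamma(\abs{y}^{1+\epsilon} - B)$. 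The decisive observation is that each $f_{\lambda,\gamma}$ satisfies $\mathbb{E}_{\mu_b}[e^{f_{\lambda,\gamma}(X)}] \le 1$: evaluating at the \emph{true} mean kills the mean term, since $\mathbb{E}_{\mu_b}[X - m(\mu_b)] = 0$, while $\gamma\ge 0$ multiplies $\mathbb{E}_{\mu_b}\abs{X}^{1+\epsilon} - B \le 0$ because $\mu_b\in\mathcal L_B$. Hence $W_s(\lambda,\gamma) := \exp\big(\sum_{i\le s} f_{\lambda,\gamma}(X_i)\big)$ is a nonnegative supermartingale with $W_0 = 1$.

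To turn these pointwise supermartingales into a single bound uniform both over $s$ and over the supremum in $(\lambda,\gamma)$, I would use the method of mixtures. Fix a prior density $\pi$ and set $\bar W_s := \int W_s(\lambda,\gamma)\,\pi(d\lambda\, d\gamma)$, again a nonnegative supermartingale started at $1$; Ville's inequality then gives $\mathbb{P}(\exists s: \bar W_s \ge e^{x}) \le e^{-x}$ for $x\ge 0$. It remains to lower-bound the mixture by the supremum: a Laplace-type estimate around the maximizer $(\lambda^\star,\gamma^\star)$, using the curvature of $(\lambda,\gamma)\mapsto\sum_i f_{\lambda,\gamma}(X_i)$ and the normalization of $\pi$, yields $\log\bar W_s \ge s\,\KLinf(\hat\mu_b^{(s)}, m(\mu_b)) - 1 - 2\log(1+s)$. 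Substituting this into Ville's inequality and recalling the reduction of the first paragraph closes the argument.

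The main obstacle is the dual step: I must establish strong duality for the heavy-tailed class $\mathcal L_B$ (attainment of optimal multipliers and absence of a duality gap, which is delicate since the constraint set is infinite-dimensional and the objective is only lower semicontinuous in the L\'evy topology reviewed in the appendix), and verify that the dual objective really takes the affine-inside-the-log form making $e^{f_{\lambda,\gamma}}$ a likelihood ratio with $\mathbb{E}_{\mu_b}[e^{f_{\lambda,\gamma}}]\le 1$. A secondary but nontrivial point is the bookkeeping in the Laplace step: the prior $\pi$ (its tails and its mass near the boundary $\gamma = 0$) must be chosen so that the correction is exactly $1 + 2\log(1+s)$ rather than a generic $O(\log s)$, which requires tracking the prior normalization against the Hessian of the dual objective uniformly in $s$.
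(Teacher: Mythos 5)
Your overall architecture coincides with the paper's: reduce to the number of samples of arm $b$, pass to the Lagrangian dual of $\KLinf$ (whose terms $1-\lambda_1(X_i-m(\mu_b))-\lambda_2\lrp{B-\abs{X_i}^{1+\epsilon}}$ have $\mu_b$-expectation at most $1$ precisely because the mean multiplier is evaluated at the true mean and $\lambda_2\ge 0$ multiplies a nonpositive quantity), mix these likelihood-ratio supermartingales over the dual variables, and apply Ville's inequality. The strong duality you flag as an obstacle is not one in the paper's context: it is imported as Lemma \ref{lem:DualFormulation} (from prior work of the authors), with the single degenerate case $\mu_b=\delta_{B^{1/(1+\epsilon)}}$ (where $\abs{m(\mu_b)}^{1+\epsilon}=B$ and the dual formula does not apply) disposed of separately since there $\hat{\mu}_b(n)=\mu_b$ and the statement is vacuous. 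You should include that edge case.

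The genuine gap is in your plan for the decisive step, lower-bounding the mixture by the supremum. A Laplace-type estimate driven by the Hessian of the dual objective cannot deliver the required bound in this setting, for three reasons. First, the Hessian of $\bm{\lambda}\mapsto\sum_i\log\lrp{1-\lambda_1(X_i-m)-\lambda_2(B-\abs{X_i}^{1+\epsilon})}$ involves squared terms such as $(X_i-m)^2$ and $(B-\abs{X_i}^{1+\epsilon})^2$ divided by the squared argument of the logarithm; under only a $(1+\epsilon)$-th moment bound with $\epsilon<1$ these have no controllable expectation, so no curvature control uniform in the data is available. Second, the dual maximizer generically lies on the boundary of the feasible region $\mathcal{R}(m(\mu_b),B)$, where an expansion around an interior stationary point is invalid. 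Third, Laplace approximations produce a data-dependent correction, whereas the proposition needs the deterministic, pathwise correction $1+2\log(1+N_b(n))$. The paper's resolution (its Lemma \ref{lem:exp-concave}) needs no curvature at all: each factor $e^{g_i(\bm{\lambda})}$ is \emph{affine}, hence concave, in $\bm{\lambda}$, and the dual region is compact (Lemma \ref{rem:compactness_of_R}), so one takes the mixture under the \emph{uniform} prior and uses a Cover-style volume argument --- shrink $\mathcal{R}$ towards the maximizer $\bm{\lambda}^*$ by the factor $\alpha=1/(s+1)$; on the shrunken copy, affinity gives $\prod_i e^{g_i}\ge(1-\alpha)^s\prod_i e^{g_i(\bm{\lambda}^*)}\ge e^{-1}\prod_i e^{g_i(\bm{\lambda}^*)}$, and the shrunken copy carries a fraction $\alpha^2$ of the uniform mass --- yielding exactly $\max_{\bm{\lambda}}\sum_i g_i \le \log(\text{mixture}) + 2\log(1+s)+1$ on every sample path, with $d=2$ accounting for the factor $2$. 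Replacing your Laplace step by this exp-concavity argument closes the proof; without it, the $1+2\log(1+s)$ threshold is not attainable by the route you describe.
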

To prove the above proposition, we construct mixtures of super-martingales that dominate the exponential of \( N_b(n)\KLinf(\hat{\mu}_b(n),m(\mu_b)) \) adjusted by \( 1+2\log(1+N_b(n)) \). The dual formulation of \( \KLinf \) (see Appendix \ref{App:KLinfProp}) suggests a natural candidate for these super-martingales. We refer the reader to Section \ref{sec:proof:prop:deviation_self_normalizing} in the appendix for details of the proof of Proposition \ref{prop:deviation_self_normalizing}.

\iffalse
An application of Proposition \ref{prop:deviation_self_normalizing} will be in bounding the probability of index of the optimal arm 1 falling below its true mean.  Towards this, for \(k\ge 2,\) let \( T^{k}_{a} \) denote the random time of the beginning of the batch when the \(a^{th}\) arm won for the \(k^{th}\) time. In particular, arm \(a\) has been sampled for \(k-1\) batches till this time. Thus, \( T^k_a\) is at least \(K-1 + 1+\tilde{\eta}+\dots+\tilde{\eta}(1+\tilde{\eta})^{k-3} =K-1+(1+\tilde{\eta})^{k-2} \). Using this bound on \(T^k_a\), the following lemma follows directly from Proposition \ref{prop:deviation_self_normalizing} above.
\begin{lemma}\label{lem:deviation_of_best_arm}
	For \(k\ge 2\), \(g_1(t) = \log(t)+2\log\log(t)+2\log(1+N_1(t))+1 \), 
	\[\mathbb{P}\lrp{{N_1(T^{k}_{a})}\KLinf\lrp{\hat{\mu}_1(T^{k}_{a}), m(\mu_1)} \geq {g_1(T^{k}_{a})}} \leq {(1+\tilde{\eta})^{-k+2}}{\lrp{\log\lrp{K-1+(1+\tilde{\eta})^{k-2}}}^{-2} }.\]
\end{lemma}
\fi
We next show that \(\KLinf\)-UCB index being too large is a rare event. Let \( \hat{\mu}_{a,s} \) denote the empirical distribution corresponding to \(s\) samples from arm \(a\). Lemma \ref{lem:BoundOfE2j} below, will be used to bound the probability that the index for sub-optimal arm $a$ takes value close to the mean of the best-arm, after sufficient samples have been allocated to it.

\begin{lemma}\label{lem:BoundOfE2j}
	For \(\mu\in\mathcal L^K\), \(\delta > 0\), \(\delta < \min_{a\ne 1} \KLinf(\mu_a,m(\mu_1))\), and \(b\in [K] \), there exists \( c_\mu > 0 \)  such that 
	\[ \mathbb{P}\lrp{ \KLinf\lrp{\hat{\mu}_{b,s}, m(\mu_1) } \leq \KLinf\lrp{\mu_b,m(\mu_1)} - \delta} \leq e^{-s~c_\mu\delta^2}.\]
\end{lemma}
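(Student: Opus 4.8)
The plan is to reduce the statement to an upper-tail bound for a bounded empirical average, by freezing the optimal dual test function of $\KLinf$ at the true distribution $\mu_b$ and candidate mean $x = m(\mu_1)$. I would use the variational (Gibbs) representation of $\KLinf$ established in Appendix~\ref{App:KLinfProp}: for $\eta\in\mathcal L_B$,
\[ \KLinf(\eta,x) \;=\; \sup_{\lambda,\nu\ge 0}\ \Psi_\eta(\lambda,\nu), \qquad \Psi_\eta(\lambda,\nu) \;=\; \lambda x - \nu B - \log \mathbb{E}_{\eta}\!\lrp{\exp\lrp{\lambda X - \nu\abs{X}^{1+\epsilon}}}. \]
Note first that if $m(\mu_b)\ge m(\mu_1)$ (in particular for $b=1$) then $\KLinf(\mu_b,m(\mu_1))$ can be $0$ and the claimed event is empty, since $\KLinf\ge 0 > -\delta$; so the substantive case is $m(\mu_b) < m(\mu_1)$, where $\KLinf(\mu_b,x) > 0$.

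First I would fix the maximiser $(\lambda^\ast,\nu^\ast)$ of $\Psi_{\mu_b}$, whose existence I take from Appendix~\ref{App:KLinfProp}, and set $Z = \exp\lrp{\lambda^\ast X - \nu^\ast \abs{X}^{1+\epsilon}}$. The crucial structural point is that $\nu^\ast > 0$: since $\KLinf(\mu_b,x)>0$ the maximiser has $\lambda^\ast>0$, and $\lambda^\ast>0$ with $\nu^\ast=0$ would make $\mathbb{E}_{\mu_b}\exp(\lambda^\ast X)=\infty$ for heavy-tailed $\mu_b$, rendering $\Psi_{\mu_b}$ infinitely negative there. With $\nu^\ast>0$ the exponent $\lambda^\ast y - \nu^\ast\abs{y}^{1+\epsilon}\to-\infty$ as $\abs{y}\to\infty$, so it attains a finite maximum and hence $Z\in(0,M]$ for a finite instance-dependent constant $M$. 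Thus $Z$ is a \emph{bounded} random variable, which is exactly the property that the $(1+\epsilon)$-moment penalty buys us and which would be automatic in the bounded-support or sub-Gaussian settings.

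Next I would carry out the reduction. Writing $\bar Z_s = \tfrac1s\sum_{i=1}^s \exp\lrp{\lambda^\ast X_i - \nu^\ast\abs{X_i}^{1+\epsilon}}$ and using feasibility of $(\lambda^\ast,\nu^\ast)$ for every measure, $\KLinf(\hat\mu_{b,s},x)\ge \Psi_{\hat\mu_{b,s}}(\lambda^\ast,\nu^\ast) = \lambda^\ast x - \nu^\ast B - \log\bar Z_s$, while $\KLinf(\mu_b,x) = \lambda^\ast x - \nu^\ast B - \log\mathbb{E}_{\mu_b} Z$ by optimality. Substituting both into the target event and cancelling the common $\lambda^\ast x - \nu^\ast B$ shows
\[ \lrset{\KLinf(\hat\mu_{b,s},x)\le \KLinf(\mu_b,x)-\delta} \subseteq \lrset{\log\bar Z_s \ge \log\mathbb{E}_{\mu_b}Z + \delta} \subseteq \lrset{\bar Z_s - \mathbb{E}_{\mu_b}Z \ge \delta\,\mathbb{E}_{\mu_b}Z}, \]
where the last inclusion uses $e^\delta - 1 \ge \delta$. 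Since the $Z_i$ are i.i.d.\ in $[0,M]$, Hoeffding's inequality gives $\mathbb{P}(\bar Z_s - \mathbb{E}_{\mu_b}Z \ge t)\le \exp(-2st^2/M^2)$, and taking $t = \delta\,\mathbb{E}_{\mu_b}Z$ yields the claim with $c_\mu = 2(\mathbb{E}_{\mu_b}Z)^2/M^2>0$, a constant depending on the instance but not on $\delta$.

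I expect the main obstacle to be the first step: rigorously extracting from the dual representation that the maximiser $(\lambda^\ast,\nu^\ast)$ exists and has $\nu^\ast>0$, so that $Z$ is genuinely bounded. This is precisely where the heavy-tailed structure enters—one must rule out the sup being approached only as $\nu\downarrow 0,\ \lambda\uparrow\infty$, which is where the coercivity of $\Psi_{\mu_b}$ granted by the $(1+\epsilon)$-moment constraint (Appendix~\ref{App:KLinfProp}) does the work. Everything after that is a standard Chernoff/Hoeffding argument for a bounded empirical mean.
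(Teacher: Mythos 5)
Your overall reduction --- freeze the dual maximiser at the true distribution $\mu_b$, use that the dual feasible set does not depend on the underlying measure to lower-bound the empirical $\KLinf$, and thereby reduce to a deviation bound for an i.i.d.\ average --- is exactly the strategy of the paper's proof in Appendix \ref{proof:lem:BoundOfE2j}. But your proof rests on a variational formula that is not the dual of this paper's $\KLinf$, and the error is not cosmetic. Here $\KLinf(\eta,x)=\inf\lrset{\KL(\eta,\kappa):\kappa\in\mathcal L_B,\ m(\kappa)\ge x}$ minimizes over the \emph{second} argument of $\KL$; its dual (Lemma \ref{lem:DualFormulation}) is $\max_{\bm{\lambda}\in\mathcal R(x,B)}\mathbb{E}_{\eta}\lrp{\log\lrp{1-(X-x)\lambda_1-(B-\abs{X}^{1+\epsilon})\lambda_2}}$, an \emph{expectation of a logarithm}. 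The formula you wrote, $\sup_{\lambda,\nu\ge0}\lrset{\lambda x-\nu B-\log\mathbb{E}_{\eta}\lrp{e^{\lambda X-\nu\abs{X}^{1+\epsilon}}}}$, is the Gibbs dual of the \emph{reverse} projection $\inf\lrset{\KL(\kappa,\eta): m(\kappa)\ge x,\ \mathbb{E}_{\kappa}\abs{X}^{1+\epsilon}\le B}$, which is a different functional. Concretely, for $\eta=\delta_0$ and $0<x<B^{1/(1+\epsilon)}$ your supremum is $+\infty$ (take $\nu=0$ and $\lambda\to\infty$), while $\KLinf(\delta_0,x)<\infty$; so $\Psi_\eta(\lambda,\nu)\le\KLinf(\eta,x)$ fails, and both facts your reduction needs --- weak duality $\KLinf(\hat{\mu}_{b,s},x)\ge\Psi_{\hat{\mu}_{b,s}}(\lambda^*,\nu^*)$ and strong duality $\KLinf(\mu_b,x)=\Psi_{\mu_b}(\lambda^*,\nu^*)$ --- are unavailable.

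The second casualty is the boundedness that powers your Hoeffding step. With the correct dual, the frozen statistic becomes $Y_i=\log\lrp{1-(X_i-x)\lambda^*_1-(B-\abs{X_i}^{1+\epsilon})\lambda^*_2}$, which is unbounded above whenever $\mu_b$ has unbounded support (the argument grows like $\lambda^*_2\abs{X_i}^{1+\epsilon}$) and is in general not essentially bounded below (by feasibility the argument of the logarithm is nonnegative, but it may come arbitrarily close to $0$ on the support of $\mu_b$). So no analogue of your bounded variable $Z$ survives the correction, and this is precisely where the paper has to work: it runs a Cram\'er--Chernoff argument on the lower tail of $\bar Y_s$, establishes finiteness of the moment generating function at $\theta=-1$ through the primal--dual relation $\mathbb{E}_{\mu_b}\lrp{e^{-Y}}=\kappa_b(\Supp(\mu_b))\le1$ from Lemma \ref{lem:DualFormulation}, and then proves a quadratic lower bound $I_Y(\mathbb{E}(Y)-\delta)\ge c\,\delta^2$ on the large-deviations rate function near the mean (Lemma \ref{lem:LDP}), which is what produces the $e^{-s\,c_\mu\delta^2}$ rate. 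In short: you need the paper's dual representation rather than the reverse-KL Gibbs dual, and once you have it, a rate-function argument must replace Hoeffding; your observation that the $(1+\epsilon)$-moment penalty forces $\nu^*>0$ and bounds $Z$ is an artefact of the wrong representation and has no counterpart in the correct one.
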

Proof of Lemma \ref{lem:BoundOfE2j} above relies on the dual formulation of \( \KLinf \) and its properties, which we review in Section \ref{App:KLinfProp}. See Appendix \ref{proof:lem:BoundOfE2j} for proof of Lemma \ref{lem:BoundOfE2j}.

\subsubsection{Proof of Theorem \ref{th:geometric}}
We now outline the proof of Theorem \ref{th:geometric} for the case when \(\tilde{\eta} > 0\). When \( \tilde{\eta} = 0\), the proof follows similarly  with suitable adjustments. Fix \(T \geq K+1\). Let \(T_j =K+1+ \sum\nolimits_{i=K+1}^{j-1}B_i \) denote the random time marking the beginning of the \(j^{th}\) batch, where we recall that \(B_i\) is the random variable denoting the number of samples to be allocated in the \(i^{th}\) batch. In this section, for simplicity of notation, we denote \(m(\mu_1)\) by \(m\). The event that at the beginning of \( j^{th} \) batch, sub-optimal arm, \(a\), has the maximum index, i.e., \(\lrset{A_{T_j}=a}\) for \(a\ne 1\), equals 
\begin{align*}
\lrset{U_1(T_j) \leq m ~~\text{ and }~~ A_{T_j}=a }~ \bigcup ~\lrset{U_a(T_j) > m ~~\text{ and }~~ A_{T_j}=a }, \numberthis\label{eq:IncorrectSelectionEvent}
\end{align*}

where the first event corresponds to the index for arm \(1\) evaluating smaller than its true mean at a stopping time \(T_j\), while the other one corresponds to the index for the sub-optimal arm taking values higher than the mean of the optimal arm.

Let \(N\) be the random number of batches allocated by the algorithm till time \(T\). Recall that initial \(K\) batches correspond to each arm being pulled once. Then, \( N_a(T) \) equals \(1+ \sum_{j=1}^N B_j \mathbbm{1}\lrp{A_{T_j}=a} \), and 
\begin{equation}\label{eq:NumberOfSubOptPulls}
\Exp{N_a(T)} =  ~ 1 + \Exp{D_N} + \Exp{E_N},
\end{equation}
where, using the division from (\ref{eq:IncorrectSelectionEvent}), we define
\[ D_N := \sum\limits_{j=K+1}^{N}B_j \mathbbm{1}\lrp{U_1(T_j) \leq m, ~ A_{T_j} = a}, ~\text{and }~ E_N := \sum\limits_{j=K+1}^{N}B_j \mathbbm{1}\lrp{U_a(T_j)> m, ~ A_{T_j}=a}. \]

Let us now look at the deviation of arm \(a\), which will contribute to the dominant term in regret of \( \KLinf \)-UCB. 

{\noindent \bf Controlling the deviations of sub-optimal arm-\( \Exp{E_N} \):}
From the definition of index of the algorithm, for \(t\ge K+1\) and \(x\in\Re\), the event \(\lrset{U_a(t) \ge x}\) is same as \(\lrset{N_a(t) \KLinf(\hat{\mu}_a(t),x) \le g_a(t) } \). Fix \(\delta>0\) satisfying \( \min_{a > 1} \KLinf\lrp{\mu_a, m} \geq  \delta\). Clearly, \( \mathbbm{1}\lrp{U_a(T_j) \ge m,~A_{T_j} = a} \) is dominated by the sum of \(E_{1j}\) and \(E_{2j}\) defined below:
\[E_{1j} = \mathbbm{1}\lrp{\KLinf\lrp{\hat{\mu}_a(T_j), m} \leq \frac{g_a(T_j)}{N_a(T_j)},~\KLinf\lrp{\hat{\mu}_a(T_j), m} > \KLinf(\mu_a,m)-\delta,~ A_{T_j} = a },\]
\[E_{2j} = \mathbbm{1}\lrp{\KLinf\lrp{\hat{\mu}_a(T_j), {m}} \leq \KLinf(\mu_a,{m})-\delta, ~ A_{T_j}=a }.\]
Whence, 
\begin{align*}
E_N \le &\sum\limits_{j=1}^{N}B_j E_{1j}+\sum\limits_{j=1}^{N}B_jE_{2j}.\numberthis \label{eq:Ct}
\end{align*}

We argue that \(E_{1j}\), summed over all the batches till time \(T\), contributes the first order term in the regret. Clearly, it is dominated by $\mathbbm{1}\lrp{N_a(T_j)\lrp{\KLinf\lrp{\mu_a,m}-\delta} \leq g_a(T_j), A_{T_j}=a  },$  giving 
\[\sum\limits_{j=1}^{N}B_jE_{1j} \leq \sum\limits_{j=1}^{N}B_j\mathbbm{1}\lrp{{N_a(T_j)} \leq \frac{g_a(T_j)}{\KLinf\lrp{\mu_a,m}-\delta}, A_{T_j}=a  }.\]
Lemma \ref{lem:BoundingSumE1j} in Appendix \ref{proof:lem:BoundingSumE1j} essentially bounds the r.h.s. above, giving 
\[		\sum\limits_{j=1}^{N}B_jE_{1j}\le (1+\tilde{\eta})\lrp{\frac{\log(T)}{\KLinf(\mu_a,m)-\delta}+ O\lrp{\log\log(T)}}.
\]
The exact form of the \( O(\log\log(T)) \) term above is given in proof of Lemma \ref{lem:BoundingSumE1j}. 

Next, for \(k\ge 2\), let \( T^{k}_{a} \) denote the random time of beginning of the batch when arm a won for the \(k^{th}\) time. In particular, arm \(a\) has been sampled for \((k-1)\) batches till this time. 
Thus, \( T^k_a\) is at least \(K-1 + 1+\tilde{\eta}+\dots+\tilde{\eta}(1+\tilde{\eta})^{k-3} =K-1+(1+\tilde{\eta})^{k-2} \). Moreover, let \(N_{B,a}\) denote the total number of batches allocated to arm \(a\) till time \(T\). The other term in (\ref{eq:Ct}) satisfies 
\[ \Exp{\sum\limits_{j=1}^N B_j E_{2j}} = \Exp{\sum\limits_{k=2}^{N_{B,a}}  B_{T^k_a} \mathbbm{1}\lrp{\KLinf( \hat{\mu}_a(T^k_a), m) \le \KLinf(\mu_a,m) -\delta}}. \]

Clearly, \( N_a(T^k_a) \) is deterministic. For \(k\ge 2\) and \(\tilde{\eta}> 0\), it is at least \((1+\tilde{\eta})^{k-1}\), and at most \(((1+\tilde{\eta})^{k-1})\tilde{\eta}\inv\). Lemma \ref{lem:BoundOfE2j} bounds the expectation of the indicator random variable in the above expression. Lemma \ref{lem:SumE2jIntermediate} in appendix shows that the bound in this case is proportional to \((1+\tilde{\eta})/(c_\mu \delta^2 )  \), where \(c_\mu\) is the bandit instance-dependent constant from Lemma \ref{lem:BoundOfE2j}.  
Thus,  
\begin{align*}
\Exp{E_N} &\leq  (1+\tilde{\eta})\lrp{\frac{\log(T)}{\KLinf\lrp{\mu_a,m}-\delta} + \frac{o(1)}{c_\mu\delta^2}+O(\log\log(T))},\label{eq:CT} \numberthis
\end{align*}
where the \(O(\log\log T)\) terms in the above expression correspond to those in Lemma~\ref{lem:BoundingSumE1j}, and \( o(1) \) is specified in Lemma \ref{lem:SumE2jIntermediate}. 

{\noindent \bf Controlling the downward deviation of the optimal arm-\(\Exp{D_N}\): }
This term only contributes a constant to the regret till time \(T\). We refer the reader to Lemma \ref{lem:BoundingDN} in Appendix \ref{app:OptimalArmDeviations} for a proof.

Combining everything, we get 
\[ \Exp{N_a(T)} \le  (1+\tilde{\eta})\lrp{\frac{\log(T)}{\KLinf\lrp{\mu_a,m}-\delta} + \frac{o(1)}{c_\mu\delta^2}+O(\log\log(T))}, \numberthis\label{eq:sunopt_finite_time} \]
where the \(O(\log\log T)\) terms in the above expression correspond to those in Lemma~\ref{lem:BoundingSumE1j} and constant in Lemma \ref{lem:BoundingDN}, and \( o(1) \) is specified in Lemma \ref{lem:SumE2jIntermediate}. The above bound can be optimized over \( \delta \). Setting \( \delta \) to \( (c'_\mu (\KLinf(\mu_a,m(\mu_1)))^2 / \log T)^{1/3}, \) where \(c'_\mu = 2o(1)/c_\mu \) we get that \( \Exp{N_a(T)} \) is at most 
\[(1+\tilde{\eta}) \lrp{\frac{\log T}{\KLinf(\mu_a,m)} + \frac{3(\log T)^{2/3} (c'_\mu)^{1/3}}{2(\KLinf(\mu_a,m))^{4/3}} + O((\log  T)^{1/3}) + O(\log\log(T))}, \numberthis\label{eq:exactRegret} \]
where the \(O(\log\log T)\) terms in the above expression correspond to those in Lemma~\ref{lem:BoundingSumE1j}, and \( O((\log T)^{1/3}) \) corresponds to \( ((\log T)^{1/3} \delta^2 / (\KLinf(\mu_a,m))^3) \sum_i (\delta/\KLinf(\mu_a,m))^i \). 

\subsection{Comparison with Robust-UCB with truncated empirical-mean \citep{bubeck2013heavytail}}\label{Sec:ComparisonWithTruncatedMean}
It is well known that the standard empirical mean does not concentrate fast when the underlying distributions are heavy-tailed. \citet{bubeck2013heavytail} review three different estimators that concentrate exponentially fast, and propose a UCB algorithm based on these. In this section, we compare our algorithm with that based on the truncated empirical mean estimator (referred to as Robust-UCB, in this section), at the level of confidence intervals around the true mean. The other two estimators proposed by the authors are under different assumptions on the arm-distributions, and are not directly comparable.

Fix \(\delta > 0\). Let \(\eta\in\mathcal L\), and let \(\hat{\eta}_n\) denote the empirical distribution based on \(n\) samples from \(\eta\). Recall from (\ref{eq:index_reformulated}) that our index is
\(U_{\eta}(n) = \max_{\kappa \in \cal L}~ \lrset{\E{\kappa}{X} : ~ n\KL(\hat{\eta}_n,\kappa) \le C}, \)
where \(C\) is an appropriately chosen threshold to ensure that  \(U_\eta(n) \) is an upper bound on \(m(\eta)\) with probability at least \(1-\delta\). 

The Donsker-Varadhan variational representation for \( \KL \)-divergence expresses the \(\KL \)-divergence between any two probability measures \(P,Q\), defined on a common space \(\Omega \), as 
$ \KL(P,Q) = \sup_{g}\lrset{\E{P}{g(X)} - \log \E{Q}{e^{g(X)}}  },    $
where the supremum is taken over all the measurable functions, \( g: \Omega \rightarrow \Re \), such that \( \E{Q}{e^{g(X)}} \) is well-defined. Using this to bound \(\KL(\hat{\eta}_n,\kappa) \) in our index, with \( g(X) := -\theta X\mathbbm{1}\lrp{\abs{X}\le u_n} \), where \( u_n=(Bn/\log(\delta\inv))^{1/(1+\epsilon)} \), and \( \theta > 0 \), we get that our index $U_\eta(n)$  is at most
\[ \max\lrset{\E{\kappa}{X}:\kappa\in\mathcal L \text{ and } -\theta\sum\limits_{i=1}^n X_i\mathbbm{1}\lrp{\abs{X} \le u_n } - n\log \E{\kappa}{e^{-\theta X\mathbbm{1}(\abs{X}\le u_n)}} \le C  } .\numberthis\label{eq:indexUB1} \] 
Since \( \abs{X\mathbbm{1}\lrp{\abs{X}\le u_n}} \le u_n \),  and \( \kappa\in \mathcal L\), we have \( \E{\kappa}{X^2 \mathbbm{1}\lrp{\abs{X}\le u_n}} \le  Bu^{1-\epsilon}_n \). Let
\[\hat{m}_{1T} := \frac{1}{n}  \sum_i X_i\mathbbm{1}\lrp{\abs{X_i} \le u_n}. \numberthis\label{eq:truncatedEmpMeanLastLevel} \] 
Using the standard analysis of Bernstein's inequality to optimize over \( \theta \) in (\ref{eq:indexUB1}), and substituting for \( u_n=(Bn/\log(\delta\inv))^{1/(1+\epsilon)} \), we get the following upper bound on our index:
\[ \hat{m}_{1T} +  {B}^{\frac{1}{1+\epsilon}} \lrp{\frac{\log\delta\inv}{n}}^{\frac{\epsilon}{1+\epsilon}}\lrp{1+ \lrp{e^{\frac{C}{\log\delta\inv}}-1}\frac{\log\delta\inv}{C}}. \numberthis\label{eq:truncationBound} \]

We refer the reader to Appendix \ref{App:ComparisonWithBubeck} for a proof of the above statement. When \( C=\log\delta\inv \), the above bound on our index is at most the index of the Robust-UCB algorithm (where we note that (\ref{eq:truncationBound}) has improved constants).

We can compare more precisely by considering the application of these indices within the UCB template. In the Robust-UCB algorithm, \( \delta \) is set to \(t^{-2} \), while we have \(\delta = t\inv\). This is due to our making use of anytime concentration in Proposition \ref{prop:deviation_self_normalizing}, which essentially avoids one union bound in the analysis. We set \( C = \log(t)+2\log N_a(t) + 2\log\log(t) \) in our algorithm.
For sub-optimal arms, since \(N_a(t) = O(\log(t))\), \(C = \log(t)+d\log\log(t) \), for some constant \(d\). In this case, \( C/\log(t) = O(1)\), and the bound in (\ref{eq:truncationBound}) recovers the index of Robust-UCB, which is larger than ours. On the other hand, for the optimal-arm, \( C \approx 3\log t \), since the number of pulls of the optimal  arm is linear. In this case, the Robust-UCB index may be smaller than ours. But this is harmless, as a larger index for arm \(1\) only increases its chances of being selected.

Taking a step back, we see that concentration inequalities are typically proved starting from an application of Chernoff (including \ Bernstein/Hoeffding) to get a high probability bound on the moment-generating function of some variable of interest (which may include clipping, truncating or soft versions thereof).  From there the consequences for the mean are worked out. In our approach we start instead from the event that $\kappa$ is in a KL ball around the empirical distribution $\hat \eta$. By means of Donsker-Varadhan, this provides MGF control over \emph{all} such variables simultaneously, at the price of the slightly elevated threshold $C$ on the MGF instead of the special-purpose $\ln \delta\inv$. Modulo this difference, we hence find that $\KLinf$-based indices dominate \emph{all} MGF-based indices simultaneously.

We finally remark that the truncated empirical mean estimator considered above is a minor variation of the one proposed by \citet{bubeck2013heavytail}, who truncate each sample $X_i$ at a different truncation level $u_i$, call this \( \hat{m}_{2T} \). However, repeating the analysis of \citet[Lemma 1]{bubeck2013heavytail} using \( \hat{m}_{1T} \) defined above, we in fact obtain tighter confidence intervals, and hence a tighter regret upper-bound (see, \citet[Proposition 1]{bubeck2013heavytail}). There also does not seem to be a computational advantage to using $\hat{m}_{2T}$ in the context of UCB, as the threshold $u_i$ used for sample $X_i$ depends on the employed confidence level $\delta$, which in turn depends on the current time $t$, precluding the option of maintaining $\hat{m}_{2T}$ incrementally. Numerically, we observe that Robust-UCB with these two different estimators suffers similar regret. Later, in Section~\ref{Sec:Numerics}, we numerically establish the superiority of our algorithm compared to Robust-UCB.

\subsection{Computational cost of \( \KLinf \)-UCB} \label{Sec:ComputationalCost}
In this section, we discuss the trade-off in the computational cost and statistical optimality of the proposed algorithm. We observe numerically that the time for computing \(\KLinf\lrp{\hat{\mu}_a(n),\cdot}\), increases with \(N_a(n)\). This can be seen from its dual formulation (\ref{lem:DualFormulation}), which grows by one term everytime arm \(a\) is sampled. Hence, computing the index for each arm at time \(n\), where \( \kappa \) corresponds to the empirical distribution, has a cost that is linear in \(n\). Let this linear cost be \(c_1+c_2n\), where \(c_1\in\Re\)  and \(c_2 > 0\) are constants.

If the batch size is a constant, say \(1\) (i.e., when \(\tilde{\eta}=0\)), then at each time step the algorithm evaluates the index for each arm, and plays the one with maximum value of the computed index. The total cost of computation for \(n\) trials is given by \(\sum_{i=1}^n (c_1+c_2 i)\), which is quadratic in \(n\), the total number of trials. For \(\tilde{\eta} > 0\), from Theorem \ref{th:geometric}, each suboptimal arm has at most \(d(1+\tilde{\eta})\log n \) samples at time \(n\), for some constant \(d\), while the optimal arm has close to \( n \) samples. The number of batches allocated to each sub-optimal arm \(a\) till time \(n\), \(N_{B,a}(n)\), satisfies 
\[ (1+\tilde{\eta})^{N_{B,a}(n)+1} \le  d (1+\tilde{\eta}) \log(n) \implies  N_{B,a}(n) = O\lrp{\log\log(n)}.\] 
Similar computation gives that \(N_{B,1}(n)\), is \( O(\log(n)) \). 

The computational cost for the batches when the best arm won is at most \( (1+\tilde{\eta})^{N_{B,1}(n)-1} + N_{B,1}(n) O(\log(n)) \), which is \( O(n)\). The first term in the previous expression is the total cost of computing the index of arm \(1\) over the \(N_{B,1}(n)\) many batches in which arm \(1\) won, while the second term is an upper bound on that for the sub-optimal arms. This cost for the batches when sub-optimal arms win is at most \( N_{B,a}(n)K O(n) + O(\log(n))  \), where the first term is the computational cost of the index of arm 1, which has \(O(n)\) samples, contributing \(O(n\log\log(n))\) to the cost. Thus, the total worst-case computational cost of \(\KLinf \)-UCB with \( \tilde{\eta}  > 0\) is at most \( O(n\log\log(n)) \), where the multiplicative constant is given by $\frac{1}{\log\lrp{1+\tilde{\eta}}}$.

{\noindent \bf Computing \(\KLinf\)-UCB index: } Computing \(\KLinf\)-UCB index in the original form would require inverting the \(\KLinf\) functional. Since there is no closed form expression for \(\KLinf\), one approach for solving this can be by binary searching for the second argument, computing \(\KLinf\) at each iteration. However, representation of the index in (\ref{eq:index_reformulated}) gives it as a solution to a single optimization problem. This is discussed in Appendix~\ref{appx:index.compute}.

\section{Numerical results}\label{Sec:Numerics}

\begin{figure}[p]%{.45\textwidth}
\centering	\includegraphics[width=.6\textwidth]{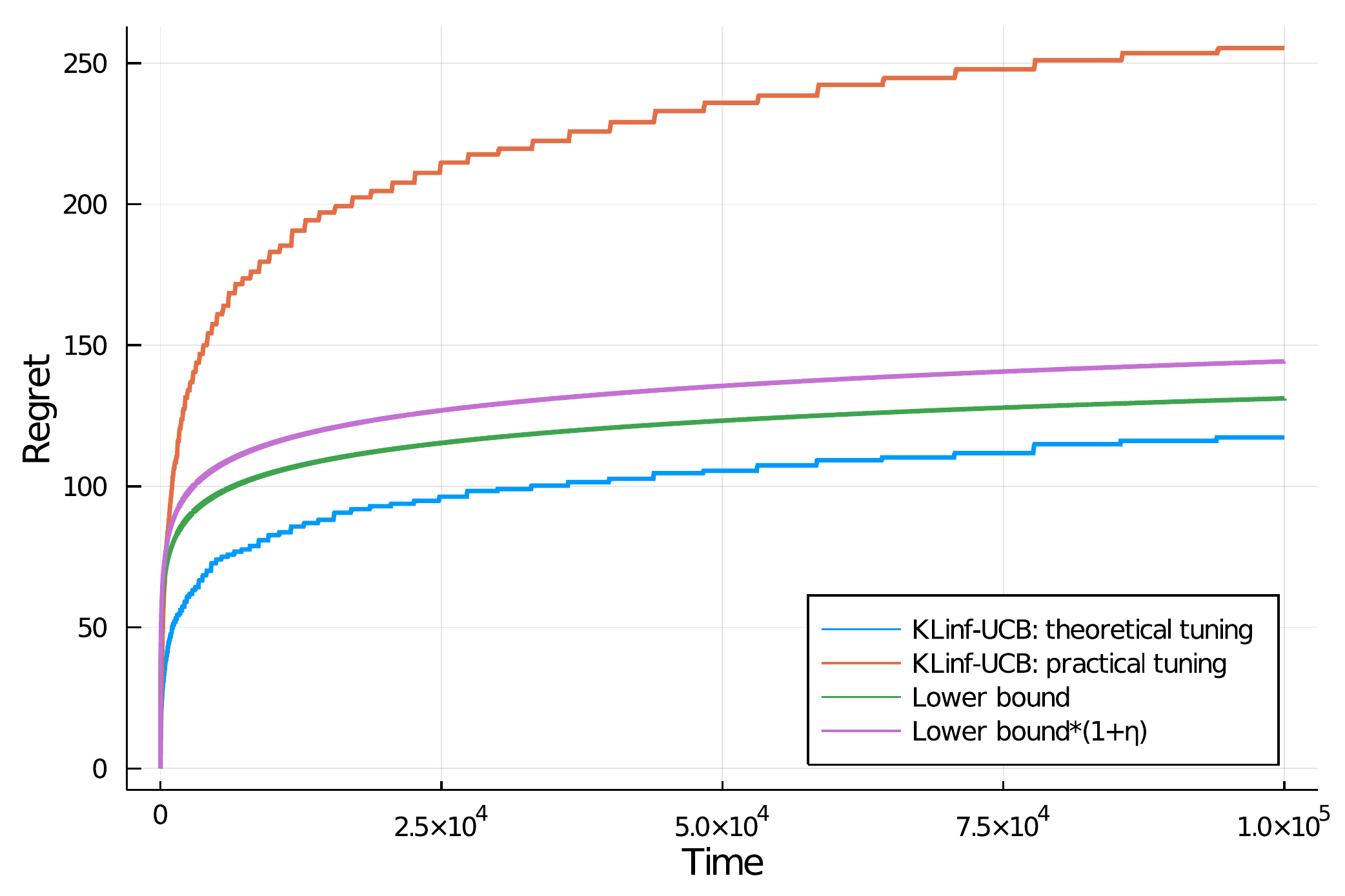}
	\caption{Comparison of regret of our algorithm with different thresholds, with batch-multiplicative factor set to \(\tilde\eta = 0.1\) in each, the asymptotic lower bound of \cite{BURNETAS1996} and the asymptotic upper bound in Theorem \ref{th:geometric}. Plots are averaged over 100 independent experiments. The batches are visible in the staircase pattern: long horizontal steps correspond to batches in which the optimal arm is chosen, and each vertical step represents a short batch where a suboptimal arm is picked.
        }\label{fig:KLinfUCBEasy}
\end{figure}

In this section we discuss the numerical studies undertaken to demonstrate the superiority of \(\KLinf\)-UCB. We run our algorithm on two different bandit problems. In both these experiments, the algorithm is presented with a  two-armed bandit, each arm having a Generalized Pareto (GenPar) distribution. It is a heavy-tailed distribution, which has \(3\) parameters, \( \mu, \sigma, \zeta \), which correspond to location, scale, and shape, respectively. When \(\zeta > 0\), it has a density function given by \( h(x) = \sigma\inv \lrp{1+\zeta \sigma\inv (x-\mu)}^{-1-\zeta\inv} \), for \(x \ge \mu \).

In the first experiment, we let  \(\epsilon=0.7\), and \(B\) is set to \(7\). Arm 1 is GenPar\( (-1,2,0.2) \), and arm 2 is GenPar\( (-1,1,0.2) \). Clearly, arm 1 is optimal with mean \( = 1.5 \), and whenever the algorithm chooses arm 2, it suffers a regret of \( 1.25 \). Moreover, \(\KLinf(\mu_2,1.5)\approx 0.1\).

We compare the performance of \(\KLinf \)-UCB with the theoretical threshold of Theorem \ref{th:geometric} and \(\KLinf\)-UCB with an agressive, practically tuneed threshold of \(\log t\) with the asymptotic lower bound. In Figure \ref{fig:KLinfUCBEasy}, we plot the regret incurred by both of these, along with the asymptotic lower bound, and 1.1 times the lower bound (which is the asymptotic upper-bound for \(\KLinf\)-UCB). 

As can be seen from Figure \ref{fig:KLinfUCBEasy}, algorithm with agressive threshold performs significantly better. It, in fact, suffers regret lower than the lower bound, even for large horizons. This is not surprising, and highlights the asymptotic nature of the lower bound. In \(\KLinf\)-UCB on the otherhand, the \(\log\log t\) term contributes significantly to the threshold over the horizon considered, leading to high regret. Henceforth, we only consider \(\KLinf\)-UCB with threshold set to \( \log(t) \).

We also compare the performance of the agressive algorithm with that of Robust-UCB with the truncation based estimator, proposed by \cite{bubeck2013heavytail}. Figure \ref{fig:EasyDifficultRatioComparison} plots the ratio of regret suffered by Robust-UCB and our algorithm, on two different bandit instances. ``Easy problem'' in the figure corresponds to the set-up of experiment \(1\), described above.

In the second experiment, we consider a slightly more difficult-to-learn setting, where the tails of the arm-distribution  are heavier than in the first experiment. We see  that both the algorithms suffer more regret on this problem compared to the previous bandit instance. However, the performance of Robust-UCB degrades much more. For this experiment, we set \(\epsilon\) to \(0.1 \), and let \( B = 13 \). The arm-distributions are set to GenPar\( (2.17,3.7,0.5) \) for arm \(1\), and GenPar\( (-1,2,0.71) \) for arm \(2\). In this setting, the optimal arm has mean \( 9.57 \), and when the sub-optimal arm is pulled, the algorithm suffers \(3.674\) units of regret.

Figure \ref{fig:EasyDifficultRatioComparison} shows that even with our batching, we significantly out-perform the Robust-UCB algorithm. It suffers retret that is 40 times that of \(\KLinf \)-UCB on the easy problem (setting of experiment 1), and the regret ratio is 19 the difficult setting of experiment 2.

\begin{figure}[p]
 \centering	\includegraphics[width=.6\textwidth]{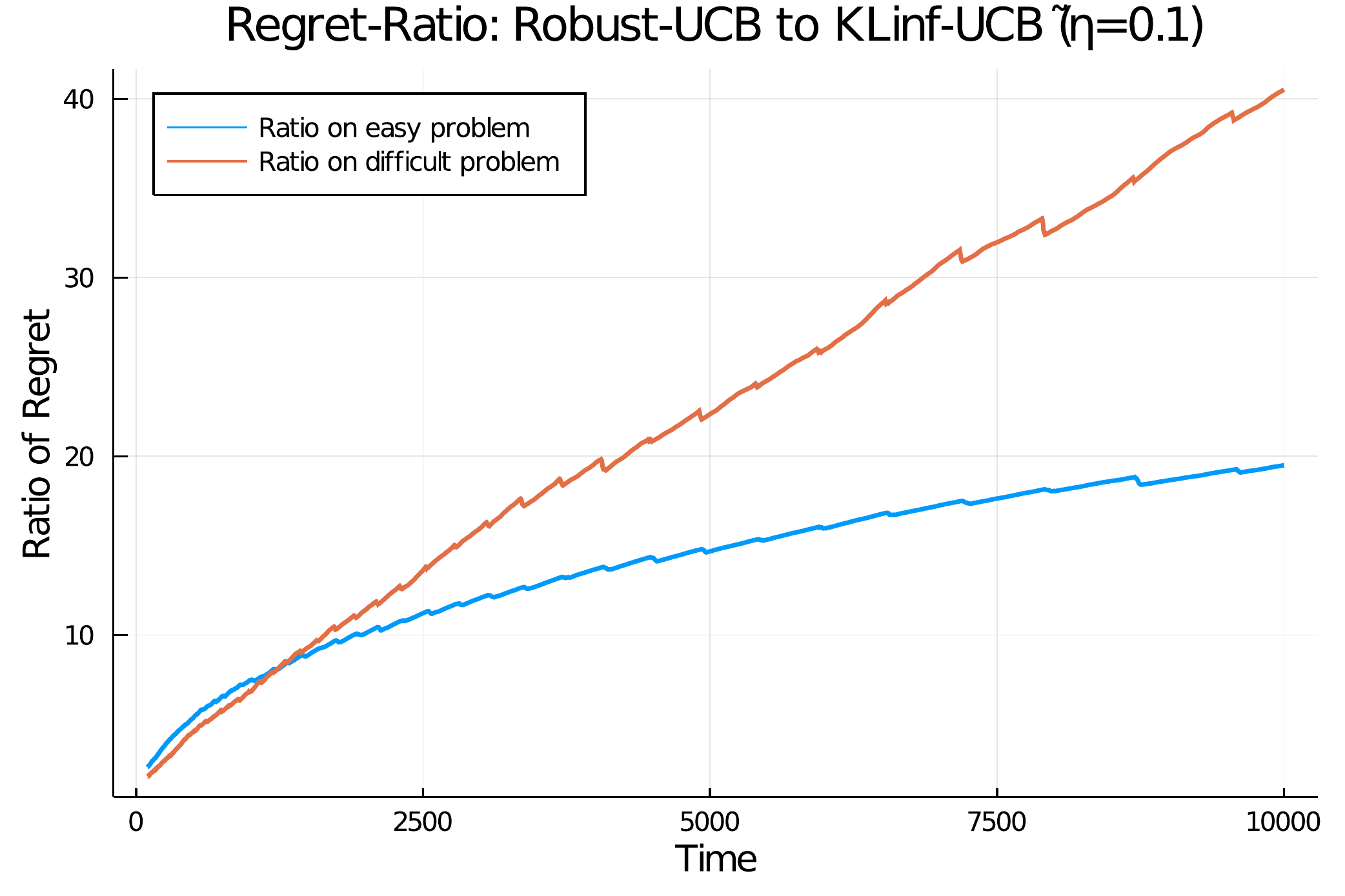}
	\caption{Ratio of regret of Robust-UCB with truncated empirical mean and \(\KLinf\)-UCB with batch-multiplicative factor set to \(\tilde\eta=0.1\), and with an aggressive threshold of \(\log t\), for simple and difficult bandit instances. This figure demonstrates that the performance of Robust-UCB degrades much more than that of our algorithm on a difficult bandit instance.}\label{fig:EasyDifficultRatioComparison}
\end{figure}

\section{Conclusion}\label{Sec:conclusions}

We consider minimising regret for heavy-tailed bandits. Our approach follows the UCB template. But instead of constructing upper confidence intervals, we ``let the lower bound speak'', ending up with $\KLinf$-based indices. We exploit the dual characterization of $\KLinf$ and the associated index, which guides both our index implementation and its statistical concentration analysis. We show that with these ingredients it is possible to match the instance-optimal regret lower bounds. Using a batched sampling scheme, we achieve total run time linear in the number of samples, at the cost of a small constant factor in the regret. We prove that $\KLinf$ indices dominate standard mean estimators for heavy-tailed distributions. We empirically validate that this translates into
much improved regret on synthetic problems.
Here are a few remarks to conclude the paper
\begin{itemize}
\item
  One may generalise the moment constraint $\Exp{\abs{X}^{1+\epsilon}}$ by requiring instead that $\Exp{f(X)} \le B$ for some convex (and super-linear) function $f$. To make this work, one would have to prove compactness of the corresponding region for $\vec\lambda$ in the dual formulation, so as to make the uniform-prior based regret analysis work. Or one would have to prove continuity of $\KLinf$ in its second argument and class parameters to employ the perturbation-based approach from Appendix \ref{app:perturbations}.

\item
  We see in experiments that performance is sensitive to the choice of threshold, and that our theoretically motivated thresholds are (currently) conservative. Our thresholds come from mixture martingales with universal coding/regret guarantees. Perhaps a redundancy-based analysis would be better suited for proving tighter concentration inequalities, and could reduce the threshold from $\ln n$ to $\ln \ln n$.

\item
  We considered the class of distributions with bounded \emph{uncentered} $(1+\epsilon)^\text{th}$ moment. A natural problem would be to extend the approach to the centred analogue.

\end{itemize}

% Acknowledgments---Will not appear in anonymized version
% \acks{We thank a bunch of people and funding agency.}

\bibliography{BibTex}
\appendix

\section{\(\KLinf\) - dual formulation and properties}\label{App:KLinfProp}

\paragraph{L\'evy metric and weak convergence:} The L\'evy metric, \(d_L(\kappa,\eta)\), between probability distributions (see, \citet[Appendix D]{dembo2010large}) $\kappa$ and $\eta$ on $\Re$, is given by 
$$d_L(\eta,\kappa)=\inf\lrset{\zeta > 0 : F_\kappa(x-\zeta)-\zeta \leq F_\eta(x) \leq F_\kappa(x + \zeta) + \zeta, ~\forall x\in\Re  } ,$$ 
where for \(\zeta \in \mathcal P(\Re) \), \( F_\zeta \) denotes the CDF function for \( \zeta \), i.e., for \(x\in \Re\),  \(F_\zeta(x) := \zeta( -\infty,x ]\). Furthermore, weak convergence of sequences of probability measures is equivalent to their convergence in the L\'evy metric (see, \citet[Theorem 6.8]{billingsley2013convergence}, and  \citet[Theorem D.8]{dembo2010large}).

For \(B>0\) and \(\epsilon > 0\), let \(\mathcal L_B = \lrset{ \kappa\in\mathcal{P}(\Re) : \E{\kappa}{f\lrp{X}} \leq B} \), where for \(y\in\Re\), \(f(y) = \abs{y}^{1+\epsilon}\). Let \(x\in\Re\) be such that \(f\lrp{x}  < B\), and for \(\eta \in \mathcal{P}(\Re)\), recall that \( m(\eta):=\E{\eta}{X} \), and  
\[\KLinfR{\mathcal L_B}(\eta,x) = \inf\lrset{\KL(\eta, \kappa): \kappa\in \mathcal{L}_B,~ m(\kappa)\geq x} .\]
In the rest of the appendix, we denote \( \KLinfR{\mathcal L_B}(\eta, x)  \) by \(\KLinf(\eta,x) \). Furthermore, let \(\Supp(\eta)\) denote the support of measure \(\eta\). We first review an alternative representation and properties of the function \(\KLinf\), which will be useful in the analysis of our algorithm. The following Lemma is due to \citet[Theorem 12]{pmlr-v117-agrawal20a}, which we state for completeness. 

\begin{lemma}[Dual formulation of \( \KLinf\)]\label{lem:DualFormulation} 
	For \(\eta \in \mathcal{P}(\Re)\) and \(x\) such that  \( \abs{x}^{1+\epsilon} < B\),
	\begin{equation}
	\label{eq:KLinf}
	\KLinf(\eta,x) = \max\limits_{(\lambda_1,\lambda_2)\in\mathcal{R}(x,B)}~ \mathbb{E}_{\eta}\lrp{\log\lrp{1-(X-x)\lambda_1 -(B-\abs{X}^{1+\epsilon})\lambda_2}},
	\end{equation}
	where $$\mathcal{R}(x,B) = \lrset{(\lambda_1 \ge 0,\lambda_2\ge 0):  \frac{\epsilon\lambda^{1+{\frac{1}{\epsilon}}}_1}{\lambda^{\frac{1}{\epsilon}}_2(1+\epsilon)^{1+\frac{1}{\epsilon}}} + B\lambda_2 - x \lambda_1 -1 \le 0 }.$$ There is a unique \( \lrp{\lambda^*_1,\lambda^*_2} \) that achieves the maximum above. Furthermore, any primal variable that achieves infimum in (\ref{eq:KLinf}), satisfies 
	\[ \frac{d\kappa^*}{d\eta}(y) = \lrp{1-(y-x)\lambda^*_1 - (B-f(y))\lambda^*_2}\inv,\quad\text{ for }y\in \Supp(\eta), \]
	\( \abs{\lrset{\Supp(\kappa^*) \setminus \Supp(\eta)} } \) is at most 1, and for \(y^*\in\lrset{\Supp(\kappa^*)\setminus \Supp(\eta)} \), \( 1-(y^*-x)\lambda^*_1 - (B-f(y^*))\lambda_2 = 0 \). 
\end{lemma}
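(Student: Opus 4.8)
The plan is to read Lemma \ref{lem:DualFormulation} as the strong-duality statement for the convex program defining $\KLinf(\eta,x)$, and to prove it in two halves: first weak duality (the dual value never exceeds the primal), then the construction of a matching primal--dual pair that certifies equality, attainment, and the stated form of the minimizer. Writing $\kappa = g\,\eta$ with $g = d\kappa/d\eta \ge 0$, the primal reads $\inf_g\lrset{-\mathbb{E}_{\eta}\left[\log g\right]}$ subject to the normalization $\mathbb{E}_{\eta}\left[g\right]=1$, the mean constraint $\mathbb{E}_{\eta}\left[Xg\right]\ge x$, and the moment constraint $\mathbb{E}_{\eta}\left[|X|^{1+\epsilon}g\right]\le B$. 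Attaching multipliers $\lambda_1,\lambda_2\ge 0$ to the latter two (and $\nu$ to the normalization), the Lagrangian separates pointwise in $g$, and minimizing over $g(y)>0$ produces the candidate density $d\kappa/d\eta = 1/h$ with $h(y) = 1-(y-x)\lambda_1-(B-|y|^{1+\epsilon})\lambda_2$, where the normalization multiplier will be pinned to $\nu^*=1+x\lambda_1^*-B\lambda_2^*$ at the optimum. Before anything else I would identify $\mathcal{R}(x,B)$ as exactly the set on which $h\ge 0$ everywhere: since for $y<0$ both $-\lambda_1 y$ and $\lambda_2|y|^{1+\epsilon}$ are nonnegative, $h$ can only be driven negative for $y>0$, where it is minimized at $y^* = \left(\frac{\lambda_1}{(1+\epsilon)\lambda_2}\right)^{1/\epsilon}$; substituting $y^*$ reproduces precisely the inequality $\frac{\epsilon\lambda_1^{1+1/\epsilon}}{\lambda_2^{1/\epsilon}(1+\epsilon)^{1+1/\epsilon}}+B\lambda_2-x\lambda_1-1\le 0$ defining $\mathcal{R}$. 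Thus $\mathcal{R}$ is the natural domain making $\log h$ well defined.

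Weak duality is then a one-line Jensen argument. For any feasible $\kappa$ and any $(\lambda_1,\lambda_2)\in\mathcal{R}$, setting $\Phi(\lambda_1,\lambda_2)=\mathbb{E}_{\eta}\left[\log h(X)\right]$,
$$\KL(\eta,\kappa)-\Phi(\lambda_1,\lambda_2) = -\mathbb{E}_{\eta}\left[\log\left(h(X)\frac{d\kappa}{d\eta}\right)\right] \ge -\log\mathbb{E}_{\eta}\left[h(X)\frac{d\kappa}{d\eta}\right] = -\log\int h\,d\kappa,$$
and $\int h\,d\kappa = 1-\lambda_1(m(\kappa)-x)-\lambda_2(B-\mathbb{E}_{\kappa}\left[|X|^{1+\epsilon}\right])\le 1$ by feasibility and $\lambda_i\ge 0$, so the right-hand side is $\ge 0$. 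Taking $\inf_\kappa\sup_{(\lambda_1,\lambda_2)}$ yields $\KLinf(\eta,x)\ge\max_{\mathcal{R}}\Phi$.

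For equality and attainment I would invoke Slater's condition, which holds precisely because $|x|^{1+\epsilon}<B$ is \emph{strict} (so a small perturbation of $\delta_x$ is strictly feasible); this forces a zero duality gap and a nonempty bounded set of dual maximizers, whose existence is further supported by concavity of $\Phi$ (a $\log$ of an affine map). To read off the primal optimum I would use stationarity $\nabla\Phi(\lambda_1^*,\lambda_2^*)=0$, which gives $\mathbb{E}_{\eta}\left[X/h^*\right]=x\,\mathbb{E}_{\eta}\left[1/h^*\right]$ and $\mathbb{E}_{\eta}\left[|X|^{1+\epsilon}/h^*\right]=B\,\mathbb{E}_{\eta}\left[1/h^*\right]$; hence $\kappa^*\propto (1/h^*)\eta$ meets the mean and moment constraints with equality. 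Complementary slackness then gives $\int h^*\,d\kappa^* = 1$, which simultaneously equals the normalizing constant of $\kappa^*$, forcing that constant to be $1$ and establishing $d\kappa^*/d\eta = 1/h^*$ exactly as stated, with $\nu^*=1+x\lambda_1^*-B\lambda_2^*$ emerging from the same identity. Uniqueness of $(\lambda_1^*,\lambda_2^*)$ follows from strict concavity of $\Phi$, equivalently the strict convexity of $\KL$ in its second argument, modulo degenerate $\eta$.

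The main obstacle is the support structure, i.e., the final claims of the lemma. When the dual optimum lands on the boundary of $\mathcal{R}$, the minimum of $h^*$ equals $0$, attained at the single point $y^*$ computed above, and $1/h^*$ ceases to be integrable there. In this regime the absolutely continuous density $(1/h^*)\,\eta$ must be supplemented by an atom at $y^*$ to restore the active mean/moment constraints, and one must argue that exactly one such point is needed (it is the \emph{unique} minimizer of $h^*$) and that it is the only possible element of $\Supp(\kappa^*)\setminus\Supp(\eta)$. Making this rigorous---handling the boundary case, verifying $h^*>0$ on $\Supp(\eta)$ so that $\Phi(\lambda^*)$ stays finite, and checking that the constructed $\kappa^*$ (absolutely continuous part plus at most one atom) is genuinely primal-optimal---is the delicate measure-theoretic step where the bulk of the work lies.
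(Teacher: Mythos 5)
First, a point of reference: the paper does not prove Lemma \ref{lem:DualFormulation} at all --- it states it ``for completeness'' and attributes it to \citet[Theorem 12]{pmlr-v117-agrawal20a}. So your proposal can only be assessed on its own merits, not against an in-paper argument. On those merits, your weak-duality half is correct and well executed: the identification of $\mathcal{R}(x,B)$ as exactly the set of $(\lambda_1,\lambda_2)$ for which $h(y)=1-(y-x)\lambda_1-(B-\abs{y}^{1+\epsilon})\lambda_2$ is nonnegative for all $y$ (by minimizing $h$ at $y^*=\lrp{\lambda_1/((1+\epsilon)\lambda_2)}^{1/\epsilon}$) is right, and the Jensen step $\KL(\eta,\kappa)-\Phi(\lambda_1,\lambda_2)\ge -\log\int h\,d\kappa\ge 0$ is sound once $d\kappa/d\eta$ is interpreted as $(d\eta/d\kappa)\inv$, which exists $\eta$-a.s.\ whenever $\KL(\eta,\kappa)<\infty$.

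The genuine gap is in the strong-duality half, and it sits exactly where the content of the lemma lies. Your Lagrangian is derived for the primal parametrized as $\kappa=g\eta$, i.e., restricted to $\kappa\ll\eta$. That restriction is not harmless: the lemma explicitly allows $\Supp(\kappa^*)\setminus\Supp(\eta)$ to be nonempty, and in that regime the restricted and unrestricted programs have \emph{different values}. Concretely, take $\eta=\delta_0$ and $x>0$ with $x^{1+\epsilon}<B$: every $\kappa\ll\eta$ equals $\delta_0$ and violates the mean constraint, so the restricted primal is $+\infty$, while $\KLinf(\delta_0,x)$ is finite (put mass $\alpha$ at a point $y$ with $\alpha y\ge x$ and $\alpha\abs{y}^{1+\epsilon}\le B$; the cost is $\log\frac{1}{1-\alpha}$). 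In this same example the restricted program has no Slater point, so Slater cannot close the gap for the program your Lagrangian actually addresses; your Slater argument (perturbing $\delta_x$) pertains to the \emph{unrestricted} program over $\mathcal{P}(\Re)$, for which the pointwise-in-$g$ minimization was never set up, and strong duality there is an infinite-dimensional statement requiring genuinely more machinery --- this is precisely what \citet[Theorem 12]{pmlr-v117-agrawal20a} supplies. You flag the atom/boundary case as ``the delicate measure-theoretic step where the bulk of the work lies,'' but that step \emph{is} the theorem: zero duality gap, dual attainment, the formula $d\kappa^*/d\eta=1/h^*$, and the at-most-one extra support point all hinge on it. A further, smaller hole: the lemma asserts uniqueness of $(\lambda^*_1,\lambda^*_2)$ unconditionally, whereas your strict-concavity argument fails for degenerate $\eta$ (again $\eta=\delta_0$: $\Phi$ is constant along lines $x\lambda_1-B\lambda_2=c$), so ``modulo degenerate $\eta$'' concedes a case the statement does not.
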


\begin{lemma}\label{rem:compactness_of_R}
	The region \( \mathcal R(x,B) \) defined in Lemma \ref{lem:DualFormulation} is compact whenever \( \abs{x}^{1+\epsilon} < B  \). It satisfies \[ \lambda_1 \le \frac{1}{B^{\frac{1}{1+\epsilon}} -{x}}, \quad \text{ and }\quad  \lambda_2 \le \frac{1}{B-\abs{x}^{1+\epsilon}}.\]
\end{lemma}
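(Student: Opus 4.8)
The plan is to read both explicit coordinate bounds directly off the single defining inequality of $\mathcal{R}(x,B)$ using a weighted arithmetic--geometric-mean (equivalently Young's) inequality, and then to obtain compactness as boundedness together with closedness. Throughout I would write $A := \epsilon\lambda_1^{1+1/\epsilon}\big/\big(\lambda_2^{1/\epsilon}(1+\epsilon)^{1+1/\epsilon}\big)$ for the singular term, so that membership in $\mathcal{R}(x,B)$ reads $A + B\lambda_2 - x\lambda_1 - 1 \le 0$ with $\lambda_1,\lambda_2 \ge 0$.

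To bound $\lambda_1$, I would apply weighted AM--GM to the pair $A$ and $B\lambda_2$ with weights $\tfrac{\epsilon}{1+\epsilon}$ and $\tfrac{1}{1+\epsilon}$: writing $A + B\lambda_2 = \tfrac{\epsilon}{1+\epsilon}\cdot\tfrac{(1+\epsilon)A}{\epsilon} + \tfrac{1}{1+\epsilon}\cdot(1+\epsilon)B\lambda_2$ and collapsing the resulting geometric mean, a short exponent computation shows all $\lambda_2$- and $(1+\epsilon)$-factors cancel, leaving $A + B\lambda_2 \ge \lambda_1 B^{1/(1+\epsilon)}$. The defining inequality then gives $\lambda_1 B^{1/(1+\epsilon)} \le x\lambda_1 + 1$, i.e.\ $\lambda_1\big(B^{1/(1+\epsilon)} - x\big) \le 1$; since $|x|^{1+\epsilon} < B$ forces $x \le |x| < B^{1/(1+\epsilon)}$, the factor in parentheses is positive and the claimed bound on $\lambda_1$ follows.

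The bound on $\lambda_2$ is obtained the same way. For $x > 0$, applying the identical weighted AM--GM to $A$ and $x^{1+\epsilon}\lambda_2$ yields (after the same bookkeeping) $A + x^{1+\epsilon}\lambda_2 \ge x\lambda_1$, hence $A - x\lambda_1 \ge -x^{1+\epsilon}\lambda_2$. Substituting into the constraint gives $\lambda_2\big(B - x^{1+\epsilon}\big) \le 1$, which is exactly the asserted bound since $x^{1+\epsilon} = |x|^{1+\epsilon}$. For $x \le 0$ the bound is immediate: $A \ge 0$ and $-x\lambda_1 \ge 0$, so the constraint already forces $B\lambda_2 \le 1$, whence $\lambda_2 \le 1/B \le \big(B - |x|^{1+\epsilon}\big)^{-1}$.

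Finally, these two bounds confine $\mathcal{R}(x,B)$ to a bounded rectangle in the nonnegative quadrant, giving boundedness. For closedness, the constraint function is continuous on $\{\lambda_2 > 0\}$, so any convergent feasible sequence whose limit has $\lambda_2 > 0$ has its limit in $\mathcal{R}(x,B)$; the only remaining candidate limits lie on $\{\lambda_2 = 0\}$, and there the term $A$ diverges to $+\infty$ unless $\lambda_1 = 0$, so a feasible sequence cannot approach any $(\lambda_1,0)$ with $\lambda_1 > 0$, while on the axis $\lambda_1 = 0$ the singular term vanishes and $(0,0)$ satisfies the constraint. Hence $\mathcal{R}(x,B)$ is closed and bounded, thus compact. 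I expect the only point genuinely needing care to be this boundary analysis as $\lambda_2 \to 0$, where the first term is singular; the AM--GM steps themselves are routine modulo the exponent bookkeeping that makes the geometric means simplify exactly to $\lambda_1 B^{1/(1+\epsilon)}$ and $x\lambda_1$.
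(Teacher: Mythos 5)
Your proof is correct and takes essentially the same route as the paper's: your weighted AM--GM steps produce exactly the inequalities the paper obtains by observing that the cleared-denominator constraint function $h(\lambda_1,\lambda_2)$ is convex in each variable and imposing $\min_{\lambda_1\ge 0}h(\lambda_1,\lambda_2)\le 0$ to bound $\lambda_2$, and $\min_{\lambda_2\ge 0}h(\lambda_1,\lambda_2)\le 0$ to bound $\lambda_1$ (partial minimization of a power-plus-linear expression is the same computation as your AM--GM). If anything, your write-up is slightly more complete, since the paper stops at the two coordinate bounds and leaves closedness---in particular the boundary behaviour as $\lambda_2\to 0$---implicit, whereas you handle it explicitly.
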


\begin{proof}
	Let \[ h(\lambda_1,\lambda_2) = \epsilon\lambda^{1+{\frac{1}{\epsilon}}} + B\lambda_2 \lrp{\lambda^{1+\frac{1}{\epsilon}}_2(1+\epsilon)^{1+\frac{1}{\epsilon}}} - (x \lambda_1 + 1)\lrp{\lambda^{\frac{1}{\epsilon}}_2(1+\epsilon)^{1+\frac{1}{\epsilon}}}. \]
	Then the constraint in \( \mathcal R(x,B) \) is \( h(\lambda_1,\lambda_2) \le 0 \). Clearly, \( h \) is convex in  \( \lambda_1 \). Thus, the given constraint implies \(  \min_{\lambda_1 \ge 0} h(\lambda_1,\lambda_2) \le 0 \). This gives the bound on \( \lambda_2 \). Similarly, \( \min_{\lambda_2\ge 0} h(\lambda_1,\lambda_2) \le 0 \) gives the bound on \( \lambda_1 \). 
\end{proof}

For a fixed \(x\), \(\KLinf(\cdot,x) \) is a function from \(\mathcal{P}(\Re) \) to \(\Re^+\). Recall that we endow the space \(\mathcal{P}(\Re)\) with the the topology of weak convergence, or equivalently with the L\'evy metric (see, e.g., \citet[Appendix D]{dembo2010large} for definitions of the two topologies and their equivalence).

\begin{lemma}[Properties of \( \mathcal{L}_B \) and \( \mathcal{R}(x,B) \)] \label{lem:PropLR}
	\( \mathcal{L} \) is a a uniformly integrable  collection, and a compact set of probability measures in the topology of weak convergence. Furthermore, \( \mathcal{R}(x,B) \) satisfies
	\begin{enumerate}
		\item  \label{prop1:Rx} for \(B> 0\), it is an upper-hemicontinuous function of \(x\) on \((-f\inv(B),f\inv(B))\),
		\item  \label{prop1:RB} for \( x \in  \Re\), it is an upper-hemicontinuous function of \(B\) on \( (f\lrp{x},\infty) \). 
	\end{enumerate}
\end{lemma}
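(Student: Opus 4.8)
The plan is to treat the two assertions separately. For the uniform integrability and weak compactness of \(\mathcal L_B\), I would argue directly from the moment bound. Uniform integrability is immediate from the elementary estimate that, for every \(\eta\in\mathcal L_B\) and every \(M>0\),
\[
\E{\eta}{\abs{X}\mathbbm{1}\lrp{\abs{X} > M}} \;\le\; M^{-\epsilon}\,\E{\eta}{\abs{X}^{1+\epsilon}} \;\le\; B\,M^{-\epsilon},
\]
which tends to \(0\) uniformly over \(\mathcal L_B\) as \(M\to\infty\). For compactness in the weak topology I would invoke Prokhorov's theorem: Markov's inequality gives \(\eta\lrp{\abs{X} > M}\le B/M^{1+\epsilon}\) uniformly over \(\mathcal L_B\), so the family is tight and hence relatively compact. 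Closedness follows because \(\abs{x}^{1+\epsilon}\) is nonnegative and continuous, so by the portmanteau theorem \(\eta\mapsto\E{\eta}{\abs{X}^{1+\epsilon}}\) is weakly lower semicontinuous; thus any weak limit of measures in \(\mathcal L_B\) again satisfies the moment bound. Tightness together with closedness yields compactness.

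For the upper-hemicontinuity statements I would use the standard characterization that a locally bounded correspondence into a metric space is upper hemicontinuous at a point whenever it has a (sequentially) closed graph there. Local boundedness comes for free from Lemma \ref{rem:compactness_of_R}, which already supplies compactness of \(\mathcal R(x,B)\) together with the explicit bounds \(\lambda_1 \le \lrp{B^{1/(1+\epsilon)}-x}^{-1}\) and \(\lambda_2 \le \lrp{B-\abs{x}^{1+\epsilon}}^{-1}\). These bounds stay uniformly finite as \(x\) ranges over a compact subinterval of \(\lrp{-f^{-1}(B),\,f^{-1}(B)}\) (resp.\ as \(B\) ranges over a compact subinterval of \(\lrp{f(x),\infty}\)), so the images of a whole neighborhood all lie inside one common compact set.

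It then remains to verify the closed-graph property. I would write the defining constraint of \(\mathcal R(x,B)\) through the denominator-cleared form \(h(\lambda_1,\lambda_2,x,B)\le 0\) used in the proof of Lemma \ref{rem:compactness_of_R}, namely
\[
h(\lambda_1,\lambda_2,x,B) \;=\; \epsilon\lambda_1^{1+1/\epsilon} + \lrp{B\lambda_2 - x\lambda_1 - 1}\,\lambda_2^{1/\epsilon}(1+\epsilon)^{1+1/\epsilon},
\]
which is jointly continuous in \((\lambda_1,\lambda_2,x,B)\) on \(\lrset{\lambda_1,\lambda_2\ge 0}\), together with the closed nonnegativity constraints. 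Hence if \(x_n\to x_0\), if \((\lambda_1^{(n)},\lambda_2^{(n)})\in\mathcal R(x_n,B)\), and if \((\lambda_1^{(n)},\lambda_2^{(n)})\to(\lambda_1,\lambda_2)\), then passing to the limit in \(h\le 0\) and in \(\lambda_i^{(n)}\ge 0\) shows \((\lambda_1,\lambda_2)\in\mathcal R(x_0,B)\). The identical argument with the roles of \(x\) and \(B\) interchanged settles part (2). Closed graph plus local boundedness then delivers upper hemicontinuity in each case.

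The main obstacle I anticipate is the behaviour at the boundary \(\lambda_2=0\), where the original constraint from Lemma \ref{lem:DualFormulation} involves the singular factor \(\lambda_2^{-1/\epsilon}\) and is only lower semicontinuous. Passing to the cleared form \(h\le 0\)—legitimate because one multiplies by the strictly positive quantity \(\lambda_2^{1/\epsilon}(1+\epsilon)^{1+1/\epsilon}\) on the open half \(\lambda_2>0\)—restores joint continuity, but one must check that this reformulation does not alter \(\mathcal R\) along \(\lambda_2=0\). The only feasible point there is the origin (any \(\lambda_1>0\) forces the singular term to \(+\infty\)), and this is respected by both forms, so the closed-graph passage to the limit remains valid even for sequences approaching that boundary.
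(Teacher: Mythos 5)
Your first half---uniform integrability and weak compactness of \(\mathcal{L}_B\)---is correct, and is in fact more self-contained than the paper's treatment, which delegates both facts to \cite[Lemma 3.2]{agrawal2020optimal}; your truncation estimate, Markov-plus-Prokhorov tightness argument, and portmanteau closedness step are precisely the standard route.

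The second half, however, proves upper-hemicontinuity of the wrong correspondence. You read \(\mathcal{R}(x,B)\) as the dual-multiplier region in \(\Re^2\) from Lemma \ref{lem:DualFormulation}; in Lemma \ref{lem:PropLR} the paper means the \emph{primal} feasible set
\[
\mathcal{R}(x,B)=\lrset{\kappa\in\mathcal{P}(\Re)\,:\, m(\kappa)\ge x,~ \E{\kappa}{\abs{X}^{1+\epsilon}}\le B},
\]
viewed as a correspondence into \(\mathcal{P}(\Re)\) equipped with the weak topology. (The notation is overloaded in the paper---the same symbol is redefined this way inside the proof of Lemma \ref{lem:PropKLinf}---so the misreading is understandable, but it changes the content of the claim.) The distinction is not cosmetic. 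The lemma exists so that, in Lemma \ref{lem:PropKLinf}, compactness of this set of \emph{measures} yields existence of the minimizers \(\kappa_1,\kappa_2\), and its upper-hemicontinuity can be combined, via Berge's theorem, with lower semicontinuity of \(\KL(\eta,\cdot)\) to conclude that the \emph{infimum} \(\KLinf(\eta,\cdot)\) is lower semicontinuous. Your finite-dimensional closed-graph argument says nothing about that correspondence; and even as a proxy it points the wrong way, since in the dual formulation \(\KLinf\) is a \emph{maximum} over the multiplier region, and upper-hemicontinuity of a feasible region yields upper, not the needed lower, semicontinuity of a maximal value. Moreover, the analytic crux for the primal region is exactly the issue your argument never confronts: the mean functional is not continuous under weak convergence (mass can escape to infinity), so the constraint \(m(\kappa)\ge x\) is not weakly closed in \(\mathcal{P}(\Re)\); it is closed relative to \(\mathcal{L}_B\) only because that family is uniformly integrable. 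This is the paper's proof: for \(x_n\to x\) and \(\eta_n\in\mathcal{R}(x_n,B)\), tightness gives a weakly convergent subsequence \(\eta_{p_i}\Rightarrow\eta_p\); closedness of \(\mathcal{L}_B\) preserves the moment constraint; uniform integrability upgrades weak convergence to \(\E{\eta_{p_i}}{X}\to\E{\eta_p}{X}\ge x\); and the sequential criterion of \cite[Proposition 9.8]{SundaramOpt1996} then delivers upper-hemicontinuity (the \(B\)-direction is identical after sandwiching \(B_n\le B+\delta\) and letting \(\delta\downarrow 0\)). The ingredients you established in your first half are exactly the ones required---they just have to be applied to the measure-valued correspondence rather than to the region of dual variables.
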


\begin{proof}
	Uniform integrability and compactness of \( \mathcal{L}_B \) follow from \cite[Lemma 3.2]{agrawal2020optimal}. 
	
	Proof of \ref{prop1:Rx}: 	To see upper-hemicontinuity of \( \mathcal{R}(x,B) \) for a fixed \(B\), consider a sequence \( x_n \rightarrow x\). Let \( \eta_n \in \mathcal{R}(x_n,B) \) be a sequence of measures in \( \mathcal{ L}_B \), which is a tight collection of probability measures in the weak topology. Then, there is a subsequence \( \eta_{p_{i}} \) which converges weakly to \(\eta_{p}\) (see, \cite{billingsley2013convergence}). From \cite[Proposition 9.8]{SundaramOpt1996}, it is sufficient to show that \( \eta_p  \) belongs to \( \mathcal{R}(x,B) \). Clearly, \( \eta_p \) belongs to class \(\mathcal{ L}_B \) since it is a closed set, and \( \eta_{p_i} \) belong to \( \mathcal{ L}_B \). Furthermore, since \( \mathcal{ L}_B \) is a uniformly integrable collection, \( \eta_n \xRightarrow{D}\eta \) implies that \( \E{\eta_{p_i}}{X} \rightarrow \E{\eta_p}{X}\), and hence, \( \E{\eta_p}{X} \geq x \). 
	
	Proof of \ref{prop1:RB}: 	To see upper-hemicontinuity of \( \mathcal{R}(x,B) \) for a fixed \(x\), consider a sequence \( B_n \rightarrow B \), and let \(\eta_n\) be a sequence in \(\mathcal{R}(x,B_n)\). For any fixed \( \delta >0  \), there exists  \(n_0\) such that for all \(n \geq n_0\), \(B_n \leq B + \delta \), and the sequence \( \lrset{\eta_n} \), for all \(n\geq n_0\), satisfies \( \E{\eta_n}{f\lrp{X}} \leq B + \delta \). As above, since \( \mathcal{L}_{B+\delta} \) is a closed, tight, and uniformly integrable collection of probability measures, arguments as in the previous paragraph show that \( \E{\eta_{p_i}}{X} \rightarrow \E{\eta_p}{X} \geq x \), and \( \eta_p  \) belongs to \( \mathcal{L}_{B+\delta} \). Since \( \delta \) was arbitrary, \( \eta_p  \) belongs to \( \mathcal{ L}_B \), and hence to \( \mathcal{R}(x,B) \). 
\end{proof}

\begin{lemma}[Properties of \(\KLinf \)]\label{lem:PropKLinf}
	For a fixed \(\eta \in\mathcal{P}(\Re) \) and 
	\begin{enumerate}
		\item \label{prop:contInx} for a fixed \( B > 0 \), \( \KLinf(\eta,x) \) is a continuous function of \(x\) on \((-f^{-1}(B),f^{-1}(B)) \). 
		\item \label{prop:contInB} for a fixed \( x \in\Re\), \( \KLinf(\eta,x) \) is a continuous function of \(B\) on \( (f({x}),\infty)\).
	\end{enumerate}
	
\end{lemma}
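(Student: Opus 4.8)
The plan is to obtain continuity in each argument by showing that $\KLinf(\eta,\cdot)$ is \emph{convex} and \emph{finite} on the relevant open interval, and then invoking the elementary fact that a finite convex function on an open interval of $\Re$ is automatically continuous there. This sidesteps a direct analysis of the dual maximand and rests mainly on the primal description $\KLinf(\eta,x)=\inf\lrset{\KL(\eta,\kappa):\kappa\in\mathcal L_B,\ m(\kappa)\ge x}$, with the compactness facts from Lemma~\ref{rem:compactness_of_R} used only to certify finiteness.

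For convexity in $x$ (part~\ref{prop:contInx}), fix $B$, pick $x_0,x_1$ in the open interval, set $x_\theta=(1-\theta)x_0+\theta x_1$ for $\theta\in[0,1]$, and take near-optimal feasible measures $\kappa_0,\kappa_1$ for $x_0,x_1$. I would consider the mixture $\kappa_\theta=(1-\theta)\kappa_0+\theta\kappa_1$: it lies in $\mathcal L_B$ since the constraint $\E{\kappa}{f(X)}\le B$ is linear in $\kappa$, it satisfies $m(\kappa_\theta)=(1-\theta)m(\kappa_0)+\theta m(\kappa_1)\ge x_\theta$ by linearity of the mean, and $\KL(\eta,\kappa_\theta)\le(1-\theta)\KL(\eta,\kappa_0)+\theta\KL(\eta,\kappa_1)$ by convexity of $\KL$ in its second argument. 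Passing to infima yields $\KLinf(\eta,x_\theta)\le(1-\theta)\KLinf(\eta,x_0)+\theta\KLinf(\eta,x_1)$. The same mixture handles part~\ref{prop:contInB} verbatim: for fixed $x$ and $B_0,B_1$ the measure $\kappa_\theta$ has $\E{\kappa_\theta}{f(X)}\le(1-\theta)B_0+\theta B_1$ and still has mean at least $x$, so it is feasible for $B_\theta$, giving convexity in $B$. Monotonicity ($\KLinf(\eta,\cdot)$ is nondecreasing in $x$ and nonincreasing in $B$) also follows immediately from nesting of the feasible sets.

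Finiteness is where the dual enters. For $\abs{x}^{1+\epsilon}<B$, Lemma~\ref{rem:compactness_of_R} gives that $\mathcal R(x,B)$ is nonempty and compact, and one checks the maximand in~(\ref{eq:KLinf}) is finite there — equivalently, one exhibits an exponentially tilted version of $\eta$ that lies in $\mathcal L_B$, has mean at least $x$, and has finite $\KL$ — so $\KLinf(\eta,x)<\infty$, and symmetrically for $B>f(x)$. Granting finiteness, continuity on the open interval is immediate from convexity; the only bookkeeping is at points where the value transitions to $+\infty$ (possible for sufficiently heavy $\eta$), which is controlled by monotonicity together with the lower semicontinuity that convex functions enjoy, while the degenerate case of $\KLinf(\eta,\cdot)$ being identically $+\infty$ is trivially continuous.

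I regard the substance of the argument as living precisely in this lower-semicontinuity / finiteness bookkeeping, and the convexity route is attractive because it disposes of it cheaply. The natural alternative — Berge's maximum theorem applied to the dual — would use the upper-hemicontinuity of $\mathcal R$ from Lemma~\ref{lem:PropLR} together with compactness (Lemma~\ref{rem:compactness_of_R}) and joint continuity of the maximand to deliver \emph{upper} semicontinuity of $\KLinf$; combined with monotonicity this already gives one-sided continuity. The hard part of that route, and the main obstacle overall, is the matching \emph{lower} semicontinuity, which would require lower-hemicontinuity of $\mathcal R$ (a Slater-type argument not among the established facts), plus the technical nuisance of verifying joint continuity of $\E{\eta}{\log(1-(X-x)\lambda_1-(B-\abs{X}^{1+\epsilon})\lambda_2)}$ via dominated convergence, controlling the logarithm against the heavy tail of $\eta$ using the compact bounds on $(\lambda_1,\lambda_2)$. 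The convexity-plus-finiteness argument avoids both of these, which is why I would take it.
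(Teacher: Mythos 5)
Your convexity half is exactly the paper's argument (mixtures of feasible measures, joint convexity of \(\KL\)), but the load-bearing step you substitute for the rest --- finiteness of \(\KLinf(\eta,\cdot)\) on the open interval --- is false in the generality of the lemma, and your fallback for the infinite case does not work. The lemma quantifies over \emph{all} \(\eta\in\mathcal P(\Re)\), including \(\eta\) with \(\E{\eta}{\log(1+\abs{X})}=\infty\). For such \(\eta\), \emph{every} \(\kappa\in\mathcal L_B\) has \(\KL(\eta,\kappa)=\infty\) (Donsker--Varadhan with \(g(y)=\log(1+\abs{y})\): \(\E{\kappa}{e^{g(X)}}<\infty\) because \(\kappa\) has a bounded \((1+\epsilon)\)-th moment, while \(\E{\eta}{g(X)}=\infty\)), so \(\KLinf(\eta,\cdot)\equiv+\infty\) on the whole interval; correspondingly the dual maximand in (\ref{eq:KLinf}) equals \(+\infty\) at interior points of \(\mathcal R(x,B)\), so compactness of that region cannot ``certify finiteness.'' Your proposed witness is also broken in precisely the heavy-tailed regime: for an exponential tilt \(d\kappa\propto e^{-\theta f(y)}\,d\eta\) one gets \(\KL(\eta,\kappa)=\theta\,\E{\eta}{f(X)}+\log Z\), which is infinite whenever \(\eta\) itself has infinite \((1+\epsilon)\)-th moment --- exactly the distributions this paper allows. (The witnesses that do work are polynomial tilts, \(d\kappa/d\eta\propto(1+f(y))^{-1}\), and even those require the log-moment of \(\eta\) to be finite.)

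Because finiteness genuinely can fail, your route needs a dichotomy: on the open interval, \(\KLinf(\eta,\cdot)\) is either finite everywhere or \(+\infty\) everywhere. Your bookkeeping --- that transitions to \(+\infty\) are ``controlled by monotonicity together with the lower semicontinuity that convex functions enjoy'' --- is not a proof and has the logic backwards: if there were an interior point \(x^*\) with \(\KLinf(\eta,x^*)<\infty\) but \(\KLinf(\eta,x)=+\infty\) for \(x>x^*\), then upper semicontinuity, hence continuity, would simply \emph{fail} at \(x^*\); moreover extended-real convex functions need not be l.s.c.\ at the boundary of their effective domain. The missing step is to rule out interior transitions, e.g.: if \(\kappa_0\) is feasible for \(x_0\) with \(\KL(\eta,\kappa_0)<\infty\), then for any \(x\in(x_0,f\inv(B))\) pick \(y\in(x,f\inv(B))\) and \(\alpha\in(0,1)\) with \((1-\alpha)x_0+\alpha y\ge x\); the mixture \((1-\alpha)\kappa_0+\alpha\delta_y\) lies in \(\mathcal L_B\), has mean at least \(x\), and satisfies \(\KL\lrp{\eta,(1-\alpha)\kappa_0+\alpha\delta_y}\le\KL(\eta,\kappa_0)+\log\frac{1}{1-\alpha}<\infty\), with an analogous perturbation in \(B\). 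With that added, your convex-finite-hence-continuous argument closes and is genuinely different from the paper's, which instead derives lower semicontinuity from a Berge-type theorem applied to the \emph{primal} minimization: \(\KL(\eta,\cdot)\) is l.s.c.\ in the weak topology and the feasible sets of measures form a nonempty, compact, upper-hemicontinuous correspondence (Lemma \ref{lem:PropLR}). That route needs no finiteness at all and, contrary to your closing paragraph, no lower-hemicontinuity either: for a minimization problem, upper-hemicontinuity of the constraint set is what delivers lower semicontinuity of the value, and the paper gets upper semicontinuity from the same convexity argument you use.
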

\begin{proof}
	To prove the continuity in \(x\) and \(B\), we prove lower- and upper-semicontinuity separately. To prove \ref{prop:contInx}, we first argue that for a fixed \(B\), \( \KLinf(\eta,x) \) is a convex function of \(x\), and hence an upper-semicontinuous function. To see the convexity, consider \( x_1, x_2 \in (-f^{-1}(B), f^{-1}(B)) \). Let 
	\[\mathcal{R}(x,B) := \lrset{\gamma\in\mathcal{P}(\Re): ~ \E{\gamma}{X} \geq x, ~ \E{\gamma}{f\lrp{X}} \leq B }. \]
	Clearly, the set \(\mathcal{R}(x,B)  \) is convex and non-empty for the choices of \(x\) and \(B\) under consideration. Let \( \kappa_1, \kappa_2 \in \mathcal{R}(x,B) \) be such that \( \KLinf(\eta,x_1) = \KL(\eta,\kappa_1) \) and \( \KLinf(\eta,x_2) = \KL(\eta,\kappa_2) \). Existence of \(\kappa_1 \) and \(\kappa_2\) is guaranteed by compactness of \(\mathcal{R}(x,B)\) and lower-semicontinuity of \(\KL\). Furthermore, for \(\lambda\in (0,1)\), \[ R_{12}(\lambda) := \lrset{\lambda \mathcal{R}(x_1,B)+(1-\lambda)\mathcal{R}(x_2,B)} \subset \mathcal{R}(\lambda x_1 + (1-\lambda)x_2,B).\] 
	Consider the following inequalities: 
	\begin{align*} 
	\KLinf(\eta,\lambda x_1+(1-\lambda)x_2) ~\leq~ \inf\limits_{\kappa\in R_{12}(\lambda)}~ \KL(\eta,\kappa) ~\leq~ \KL(\eta,\lambda\kappa_1+(1-\lambda)\kappa_2).
	\end{align*}
	Using joint convexity of \( \KL \), the above can further be bounded from above by $\lambda\KL(\eta,\kappa_1) + (1-\lambda) \KL(\eta,\kappa_2)$, which equals $\lambda\KLinf(\eta,x_1)+(1-\lambda)\KLinf(\eta,x_2)$ by choice of \( \kappa_1 \) and \( \kappa_2 \), giving 
	\[ \KLinf(\eta,\lambda x_1 + (1-\lambda) x_2) \leq \lambda \KLinf(\eta,x_1) + (1-\lambda) \KLinf(\eta,x_2) .\]

	Convexity in \(B\) also follows similarly, proving the upper-semicontinuity of \( \KLinf \) in \(x\) and in \(B\), under the given conditions.  
	
	For \(\eta\in\mathcal{P}(\Re)\), since \( \KL(\eta,\cdot) \) is a lower-semicontinuous function in the topology of weak convergence (see, \cite{posner1975random}), and the region of optimization, \(\mathcal{R}(x,B)\), is a non-empty, compact, upper-hemicontinuous correspondence of \(x\) and \(B\) under the respective given conditions (Lemma \ref{lem:PropLR}), the optimal value, \( \KLinf(\eta,x) \) is lower-semicontinuous in \(x\) for a fixed \(B\), and lower-semicontinuous in \(B\) for a fixed \(x\) (see, \cite[Theorem 2, page 116]{berge1997topological}).
\end{proof}

\section{Results related to regret guarantees}\label{App:RegretGuarantees}
In this appendix, we state and prove the results that assist us in the proof of Theorem \ref{th:geometric}. 

\subsection{Proof of Lemma \ref{lem:BoundOfE2j}}\label{proof:lem:BoundOfE2j}

Let us first consider \( \mu_b = \delta_{-B^{1/(1+\epsilon)}} \). In this case, the statement is vacuously true since \( \hat{\mu}_{b,s} = \mu_b\). 

Now, let \(\mu_b \ne \delta_{-B^{1/(1+\epsilon)}}\). For \(\epsilon > 0\) and \(B> 0\) let \(f(y) = \abs{y}^{1+\epsilon} \). For \(c\in [0,B] \) define \(f\inv(c) = c^{1/(1+\epsilon)}\). Using the dual formulation for \(\KLinf(\hat{\mu}_{b,s},{m(\mu_1)})\) from Lemma \ref{lem:DualFormulation}, the required probability equals
\[ \mathbb{P}\lrp{ \max\limits_{\bm{\lambda} \in \mathcal{R}({m(\mu_1)},B)}~ \frac{1}{s} \sum\limits_{i=1}^s  \log\lrp{1-\lrp{X_i-{m(\mu_1)}} \lambda_1 - \lrp{B - f\lrp{X_i}} \lambda_2} \leq \KLinf\lrp{\mu_b,{m(\mu_1)}} -\delta },\]
which is bounded from above by 
\[ \mathbb{P}\lrp{ \frac{1}{s}~ \sum\limits_{i=1}^s  \log\lrp{1-\lrp{X_i-{m(\mu_1)}} \lambda^*_1 - \lrp{B - f\lrp{X_i}} \lambda^*_2} \leq \KLinf\lrp{\mu_b,{m(\mu_1)}} -\delta }, \]
where 
\[ (\lambda^*_1,\lambda^*_2) \in \argmax\limits_{\bm{\lambda}\in\mathcal{R}({m(\mu_1)},B)}~ \E{\mu_b}{\log\lrp{1-(X-{m(\mu_1)})\lambda_1 -\lrp{B -f\lrp{X} }\lambda_2 }} .\]
For \(\theta \geq 0\), the required probability can be bounded by 
\[ \mathbb{P}\lrp{ -\theta \sum\limits_{i=1}^s  \log\lrp{1-\lrp{X_i-{m(\mu_1)}} \lambda^*_1 - \lrp{B - f\lrp{X_i}} \lambda^*_2} \geq -s\theta\lrp{\KLinf\lrp{\mu_b,{m(\mu_1)}} -\delta} },\]
which can further be bounded by 
\[  \E{\mu_b}{\exp\lrset{-\theta\sum\limits_{i=1}^s \log\lrp{ 1-(X_i-{m(\mu_1)})\lambda^*_1 -\lrp{B-f\lrp{X_i}}\lambda^*_2 }}} e^{s\theta\lrp{\KLinf\lrp{\mu_b,{m(\mu_1)}}-\delta}}.  \]

Let \( Y_i := \log\lrp{ 1-(X_i-{m(\mu_1)})\lambda^*_1 -\lrp{B-f\lrp{X_i}}\lambda^*_2 } \).  Since \(X_i\sim \mu_b\) are i.i.d., \(Y_i\) are i.i.d. Furthermore, let \(Y\) be independent and identically distributed as \(Y_i\), and for \(\gamma \leq 1\), let \( \Lambda_{Y} := \log \Exp{ \exp\lrset{\gamma Y} }\). Obsesrve that \(\E{\mu_b}{Y} = \KLinf\lrp{\mu_b,{m(\mu_1)}}\). Then the above expression equals
\[ \exp\lrset{s\lrp{\theta\lrp{\KLinf\lrp{\mu_b,{m(\mu_1)}}-\delta} + \Lambda_{Y}(-\theta) }}.\]
Since the previous bound is true for all values of \( \theta \geq 0 \), in particular, we have that the required probability is bounded by 
\[\exp\lrset{-s~\sup\limits_{\theta\leq 0}~\lrset{\theta\lrp{\KLinf\lrp{\mu_b,{m(\mu_1)}} -\delta} - \Lambda_{Y}(\theta) } }.\]
Obsesrve that \( \Lambda_Y(0) = 0 \) and \(\Lambda_{Y}(-1) \leq 0\). To see the latter, 
\[ \Lambda_{Y}(-1) = \log\E{\mu_b}{\frac{1}{1-(X-{m(\mu_1)}) \lambda^*_1 -(B -f\lrp{X} ) \lambda^*_2  }} = \log \kappa_b(\Supp(\mu_b)) \le 0,  \]
where \( \kappa_b \) is the optimal primal variable for the \( \KLinf\lrp{\mu_b,{m(\mu_1)}} \) optimization problem, and satisfies \(\kappa_b(\Supp(\mu_b)) \le 1 \) (see Lemma \ref{lem:DualFormulation}). Furthermore, \( \Lambda_{Y}(\gamma) \) is a convex function of \( \gamma\). Whence, supremum in \[ \sup\limits_{\theta\le 0} \lrset{\theta(\KLinf(\mu_b,m(\mu_1))-\delta ) - \Lambda_{Y}(\theta)} \] is attained at some \( \theta^*< 0, \) and the optimal value equals \(I_Y(\KLinf(\mu_b,m(\mu_1)-\delta))\), where  
\[ I_Y(\KLinf(\mu_b,m(\mu_1))) := \sup\limits_{\theta} \lrset{\theta(\KLinf(\mu_b,m(\mu_1))-\delta ) - \Lambda_{Y}(\theta)}\]
is the large deviations rate function for the random variable \(Y\). Lemma \ref{lem:LDP} below shows that there exists \( \delta_0 >0\) and a constant \(c_b\), such that for all \( \delta < \delta_0 \), \( I_Y(\KLinf(\mu_b,m(\mu_1))-\delta) \ge c_b\delta^2 \). 

Since \( \delta \in [0,\min_{a\ne 1} \KLinf(\mu_a,m(\mu_1)) ] \), by convexity of \( I_Y \), there exists a constant \(c'_b\) (possibly smaller than \(c_b\)), such that \( I_Y(\KLinf(\mu_b,m(\mu_1))-\delta) \ge c'_b\delta^2. \)

Then, \(  c_\mu := \min_{a\ne 1} c'_a \). Clearly, \( I_Y(\KLinf(\mu_b,m(\mu_1))-\delta ) \ge c_\mu\delta^2 \), giving the desired bound. $\quad\quad \Box$

Given a random variable \(X\) with \( \Lambda_X(\theta) < \infty \) for \( \theta \in [-1,1] \). Let \( I(x) := \sup_\theta  \lrset{\theta x - \Lambda_X(\theta) }  \) denote the associated large deviation rate function. Let \( m(X) > 0 \), i.e., \( \Lambda'_X(0) > 0 \), and hence \( I(m(X)) = 0 \). 

Note that \( Y  \) defined above satisfies these conditions. 

\begin{lemma}\label{lem:LDP}
	For \(X\) satisfying the conditions above, \( \exists \delta_0 > 0 \) such that \( \forall \delta \in [0,\delta_0] \), there exists a constant \(c > 0\) such that \( I(m(X)-\delta) \ge c  \delta^2 \). 
\end{lemma}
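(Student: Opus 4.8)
The plan is to exploit the local quadratic behavior of the rate function $I$ at its unique zero $m(X)$, working directly from the definition $I(x)=\sup_\theta\lrset{\theta x-\Lambda_X(\theta)}$ rather than from any smoothness of $I$ itself. First I would record the regularity we get for free: since the moment generating function of $X$ is finite on the open interval $(-1,1)$ containing the origin, $\Lambda_X$ is convex and infinitely differentiable (indeed analytic) there, with $\Lambda_X(0)=0$, $\Lambda_X'(0)=m(X)$, and $\Lambda_X''(0)=\sigma^2$, where $\sigma^2$ denotes the variance of $X$. In particular a second-order Taylor expansion with remainder is available at $\theta=0$, namely $\Lambda_X(\theta)=m(X)\theta+\tfrac12\sigma^2\theta^2+o(\theta^2)$ as $\theta\to 0$.

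Before the main estimate I would dispose of the degenerate case $\sigma^2=0$, in which $X$ equals $m(X)$ almost surely, so $I(x)=0$ for $x=m(X)$ and $I(x)=+\infty$ otherwise; then $I(m(X)-\delta)=+\infty$ for every $\delta>0$ and the claim holds trivially, while $\delta=0$ gives $I(m(X))=0\ge 0$. So assume $\sigma^2>0$.

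For the main case I would lower-bound the supremum by a single well-chosen test point. Evaluating the objective at $\theta=-\delta/\sigma^2$ gives
\[
I(m(X)-\delta)\;\ge\;-\frac{\delta}{\sigma^2}\lrp{m(X)-\delta}-\Lambda_X\!\lrp{-\frac{\delta}{\sigma^2}}.
\]
Substituting the Taylor expansion of $\Lambda_X$ and simplifying, the terms linear in $m(X)$ cancel and the $\delta^2$ terms combine to $\lrp{\tfrac{1}{\sigma^2}-\tfrac{1}{2\sigma^2}}\delta^2=\tfrac{1}{2\sigma^2}\delta^2$, up to an $o(\delta^2)$ remainder. Hence there is $\delta_0>0$ such that for all $\delta\in[0,\delta_0]$ the remainder is dominated and $I(m(X)-\delta)\ge\tfrac{1}{4\sigma^2}\delta^2$, proving the lemma with $c=1/(4\sigma^2)$. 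The choice $\theta=-\delta/\sigma^2$ is exactly the one maximizing the leading coefficient $a-\tfrac12\sigma^2 a^2$ of the quadratic in $\delta$ obtained from rescalings $\theta=-a\delta$, which is where the optimal constant comes from.

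The one point requiring care, and the main obstacle, is making the $o(\delta^2)$ remainder legitimate and uniform down to $\delta=0$, i.e. ensuring $\Lambda_X$ is genuinely twice continuously differentiable in a fixed neighborhood of $0$ with a controlled remainder. This is precisely where the hypothesis $\Lambda_X(\theta)<\infty$ on $[-1,1]$ enters: finiteness of the MGF on an open interval around $0$ guarantees $\Lambda_X\in C^2$ near $0$, with derivatives obtained by differentiating under the expectation, so that the Taylor remainder is $o(\theta^2)$ and the absorption above is valid for small $\delta$. I would also note that the hypothesis $m(X)>0$ is not actually used here; only the nondegeneracy $\sigma^2>0$ matters, since the mean cancels in the computation above.
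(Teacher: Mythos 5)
Your proof is correct, but it takes a genuinely different route from the paper's. The paper works on the dual (Legendre) side: it invokes the implicit function theorem to obtain a continuously differentiable map $y \mapsto \theta_y$ solving $\Lambda_X'(\theta_y) = y$ near $y = m(X)$, uses the duality identities $I'(y) = \theta_y$ and $I''(y) = 1/\Lambda_X''(\theta_y)$, bounds $\Lambda_X''(\theta_y)$ uniformly on a compact neighbourhood to get a uniform lower bound on $I''$, and finishes by Taylor-expanding $I$ itself around $m(X)$, where $I(m(X)) = I'(m(X)) = 0$. You instead stay on the primal side: you never differentiate $I$ at all, but lower-bound the defining supremum at the single test point $\theta = -\delta/\sigma^2$ and Taylor-expand $\Lambda_X$ at $0$. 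Your computation checks out (the terms linear in $m(X)$ cancel and the quadratic terms give $\delta^2/(2\sigma^2) + o(\delta^2)$), your treatment of the degenerate case matches the paper's, and your remark that the hypothesis $m(X)>0$ is unused is accurate — it reflects the application (where the mean of $Y$ equals $\KLinf(\mu_b, m(\mu_1)) > 0$) rather than anything needed in the lemma. What your approach buys is elementarity and an explicit, essentially sharp constant $c = 1/(4\sigma^2)$ (sharp since $I(m(X)-\delta) \sim \delta^2/(2\sigma^2)$); what the paper's approach buys is stronger structural information, namely local $C^2$ smoothness of $I$ with uniformly positive curvature, which is not needed for the stated inequality. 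One small point to make explicit in your write-up: the test point must lie in the domain where $\Lambda_X$ is finite, so you should also require $\delta_0 < \sigma^2$ (ensuring $\abs{-\delta/\sigma^2} < 1$); otherwise the single-point lower bound could read $-\infty$ and be vacuous. With that added, the argument is complete.
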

\begin{proof}
	Observe that if \( \Lambda''_X(0) = 0  \), then \(X\) is degenerate, giving \(I(x) = \infty \) for \(x \ne m(X) \). 
	
	Next, consider a non-degenerate \(X\), meaning \( \Lambda''_X(0) > 0 \). Also, consider the \( \theta_y \) that satisfies \( \Lambda'_X(\theta_y) = y \). Here, \(\theta_{m(X)} = 0 \). Also, \( \Lambda'_X(\theta) \) is continuously differentiable for \( \theta \in (-1,1) \). By Implicit Function Theorem, there exists a neighbourhood \( (m(X)-
	\tilde{\delta}_0, m(X)) \) such that  for \(y\in (m(X)-\tilde{\delta}_0,m(X))\), \(\theta_y\) exists and is continuously differentiable. 
	
	Also, \( \theta_y  < 0 \) for \( y < m(X)\). Choose \( \tilde{\delta}_0 \) such that 
	\[ y\in (m(X) - \tilde{\delta}_0, m(X)] \implies -1 < \theta_y \le 0. \]
	
	The above discussion implies \( \Lambda''_X(\theta_y) \) is a continuous function of \(y\) for all \( y \in (m(X)-\tilde{\delta}_0, m(X)] \). Furthermore, for \(y\) in this range, \( \Lambda'_X(\theta_y) < \infty\) and \( \Lambda''_X(0) > 0 \). This gives \( \infty > \Lambda''_X(\theta_y) > 0 \). This gives that \( c := \sup\lrset{\Lambda''_X(\theta_y): ~y\in[m(X)-\tilde{\delta}_0, m(X)] } \) is finite (continuous function on a compact set). 
	
	Now, it can be checked that for \(y\in [m(X)-\tilde{\delta}_0, m(X)] \), \( I'(y) = \theta_y \) and \( I''(y) = \frac{1}{\Lambda''(\theta_y)} \ge c\inv \).
	
	Moreover, \( I(m(X)-\delta) = I(m(X)) + I'(m(X)) \delta + \delta^2/2 I''(\tilde{x}) \), for \( \tilde{x}\in [m(X)-\delta,m(X)] \). This gives 
	\[ I(m(X)-\delta) \ge \frac{\delta^2}{2c}, \quad \text{ for } \delta \le \delta_0. \]
\end{proof}

Recall that for \(T > K\) and for sub-optimal arm \(a\), 
\[E_{2j} = \mathbbm{1}\lrp{\KLinf( \hat{\mu}_a(T_j), m) \le \KLinf(\mu_a,m) -\delta, ~ A_{T_j} = a} , \]
where \( T_j \) denotes the random time marking the beginning of \(j^{th}\) batch. Let \( N_{B,a} \) denote the random number of batches allocated to arm \(a\) till time \(T\), N denote the total number of batches allocated till time \(T\), and \( B_{t}\) denote the size of the batch begining at time \(t\). 

\begin{lemma}\label{lem:SumE2jIntermediate}
	For \( T > K \), for sub-optimal arm a,
	\[\Exp{\sum\limits_{j=1}^N B_j E_{2j}} \le 
	\begin{cases} 
		&\frac{1+\tilde{\eta}}{c_\mu\delta^2}\lrp{\frac{1}{\log(1+\tilde{\eta}) } +\frac{1}{e} }, \text{ for } \tilde{\eta} > 0\\
		& \frac{1}{c_\mu\delta^2} + 1, \text{ otherwise.}
		\end{cases}
	\]
\end{lemma}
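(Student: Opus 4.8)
The plan is to pass from the random batch sum to a \emph{deterministic} sum over the batches that arm $a$ wins, and then reduce the lemma to a calculus estimate. I would start from the re-indexing recorded just above the statement,
\[ \Exp{\sum_{j=1}^N B_j E_{2j}} = \Exp{\sum_{k=2}^{N_{B,a}} B_{T^k_a}\,\mathbbm{1}\lrp{\KLinf(\hat{\mu}_a(T^k_a),m)\le\KLinf(\mu_a,m)-\delta}}, \]
and exploit the crucial structural fact that $s_k := N_a(T^k_a)$ is a deterministic constant. Indeed, between successive wins of arm $a$ its sample count is frozen, so the winning counts obey $s_1 = 1$ and $s_{k+1} = s_k + \max\lrset{1,\ceil{\tilde{\eta}s_k}}$; hence both $s_k$ and the batch size $B_{T^k_a} = \max\lrset{1,\ceil{\tilde{\eta}s_k}}$ are deterministic. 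Since the rewards of arm $a$ form an i.i.d.\ stream from $\mu_a$ (independent of the allocation rule), on the event $\lrset{k\le N_{B,a}}$ we have $\hat{\mu}_a(T^k_a) = \hat{\mu}_{a,s_k}$, so the $k$-th indicator is dominated by the fixed-sample deviation event of Lemma \ref{lem:BoundOfE2j}. Pulling the deterministic coefficient $B_{T^k_a}$ out of the expectation and applying that lemma with $s=s_k$ yields
\[ \Exp{\sum_{j=1}^N B_j E_{2j}} \le \sum_{k\ge 2} B_{T^k_a}\, e^{-s_k c_\mu\delta^2}. \]

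Next I would make the geometric growth explicit. The recursion gives $s_k \ge (1+\tilde{\eta})^{k-1}$ and $B_{T^k_a}\le \tilde{\eta}s_k + 1$. Writing $\alpha := c_\mu\delta^2$ and splitting along this bound reduces the estimate to controlling $\sum_k e^{-\alpha s_k}$ and $\tilde{\eta}\sum_k s_k e^{-\alpha s_k}$. For the first sum, since $e^{-\alpha s}$ is decreasing I would dominate it by $\int_1^\infty e^{-\alpha(1+\tilde{\eta})^{t-1}}\,dt$ and substitute $u=(1+\tilde{\eta})^{t-1}$, for which $dt = \frac{du}{u\log(1+\tilde{\eta})}$; this is exactly where the factor $\frac{1}{\log(1+\tilde{\eta})}$ enters, leaving $\frac{1}{\log(1+\tilde{\eta})}\int_1^\infty\frac{e^{-\alpha u}}{u}\,du \le \frac{1}{\alpha\log(1+\tilde{\eta})}$. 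For the second sum the same substitution, after cancelling one power of $s$, contributes $\frac{\tilde{\eta}}{\alpha\log(1+\tilde{\eta})}$, while the unimodal peak of $s\mapsto s e^{-\alpha s}$ is absorbed using $\sup_{s\ge 0} s e^{-\alpha s} = \frac{1}{e\alpha}$, which is the source of the $\frac1e$ term. Collecting the pieces gives $\frac{1+\tilde{\eta}}{\alpha}\lrp{\frac{1}{\log(1+\tilde{\eta})}+\frac1e}$, as claimed.

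For the degenerate case $\tilde{\eta}=0$ every batch has size $1$, so $B_{T^k_a}=1$ and $s_k$ grows linearly (one sample per win, $s_k=k$). The bound then collapses to the plain geometric series
\[ \sum_{k\ge 2} e^{-\alpha s_k} \le \sum_{k\ge 1} e^{-\alpha k} = \frac{1}{e^\alpha-1} \le \frac{1}{\alpha}, \]
and accounting for the leading slack term gives $\frac{1}{c_\mu\delta^2}+1$.

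The hardest step will be the integral comparison in the second paragraph. The conceptual reduction to a deterministic sum is clean once one observes that $s_k$ and $B_{T^k_a}$ are constants, but the summand $s\mapsto s e^{-\alpha s}$ is unimodal rather than monotone, so one cannot simply sandwich the sum between left/right Riemann sums: the region around the peak $s=1/\alpha$ must be isolated (this is precisely where $\frac1e$ surfaces), and the discrete-to-continuous passage, the ceiling in $B_{T^k_a}=\max\lrset{1,\ceil{\tilde{\eta}s_k}}$, and the boundary term at $k=2$ must all be tracked to land on the exact constants $\frac{1}{\log(1+\tilde{\eta})}+\frac1e$ rather than a looser expression.
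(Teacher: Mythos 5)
Your proposal is correct in substance and follows essentially the same route as the paper: the same re-indexing over the wins of arm \(a\), the same reduction to a deterministic sum via Lemma \ref{lem:BoundOfE2j} (using that \(N_a(T^k_a)\) is deterministic), and the same sum-to-integral-plus-peak estimate with the substitution \(u=(1+\tilde{\eta})^{t-1}\), which is precisely the content of the paper's Lemma \ref{lem:sum1}. The one genuine difference is \emph{where} the bound on the batch size is applied, and it matters for rigor. The paper bounds \(B_{T^k_a}\le\tilde{\eta}N_a(T^k_a)+1\le(1+\tilde{\eta})^{k-1}\) \emph{first}, so that \(N_a(T^k_a)\) survives only inside the monotone map \(s\mapsto e^{-\alpha s}\) (writing \(\alpha:=c_\mu\delta^2\)), where it can be replaced by its deterministic lower bound pointwise; the resulting summand \((1+\tilde{\eta})^{k-1}e^{-(1+\tilde{\eta})^{k-2}\alpha}\) is then a \emph{fixed} unimodal function of \(k\), to which ``sum \(\le\) integral \(+\) max'' applies directly. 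In your split \(B_{T^k_a}\le\tilde{\eta}s_k+1\), the term \(\tilde{\eta}\sum_k s_ke^{-\alpha s_k}\) keeps \(s_k\) inside the unimodal map \(h(s)=se^{-\alpha s}\), and there the replacement \(s_k\rightarrow(1+\tilde{\eta})^{k-1}\) that ``the same substitution'' presupposes is \emph{not} a valid pointwise bound on the increasing side of the peak, exactly the issue you flag in your last paragraph. Your peak-isolation fix does go through, but for a slightly different reason than your sketch suggests: what saves the argument is that the actual points are themselves geometrically spaced, \(s_{k+1}\ge(1+\tilde{\eta})s_k\), so after the change of variables \(v=\log s\) they are separated by at least \(\log(1+\tilde{\eta})\), and the unimodal-sum estimate \(\sum_k h(s_k)\le\frac{1}{\log(1+\tilde{\eta})}\int_0^\infty\frac{h(s)}{s}\,ds+\max_{s} h(s)=\frac{1}{\alpha\log(1+\tilde{\eta})}+\frac{1}{e\alpha}\) applies to the points \(s_k\) directly, recovering exactly your constants. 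To make your write-up airtight, either prove and invoke that geometric-spacing lemma, or adopt the paper's ordering of the two bounds, which sidesteps the non-monotonicity issue entirely.
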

\begin{proof}
	Recall that 
	\[ \Exp{\sum\limits_{j=1}^N B_j E_{2j}} = \Exp{\sum\limits_{k=2}^{N_{B,a}}  B_{T^k_a} \mathbbm{1}\lrp{\KLinf( \hat{\mu}_a(T^k_a), m) \le \KLinf(\mu_a,m) -\delta}}. \]
	
	Let us first consider the case when \( \tilde{\eta} > 0 \). In this case, \( N_{B,a}  \) is at most \( \frac{\log(T)}{\log(1+\tilde{\eta})} \) and \( B_{T^k_a} \) is at most \(\tilde{\eta} N_a(T^k_a)+1 \), which in turn is at most \((1+\tilde{\eta})^{k-1}\). Thus, the required expectation is at most 
	
	\[ \sum\limits_{k=2}^{\frac{\log(T)}{\log(1+\tilde{\eta})}+1}  (1+\tilde{\eta})^{k-1} \mathbbm{P}\lrp{\KLinf( \hat{\mu}_a(T^k_a), m) \le \KLinf(\mu_a,m) -\delta}. \]
	
	Clearly, \( N_a(T^k_a) \) is deterministic.	Lemma \ref{lem:BoundOfE2j} bounds the probability in the above expression. Lemma \ref{lem:sum1} then bounds the summation, giving the desired bound in this case.
	
	When \(\tilde{\eta} = 0 \), \( N_{B,a}\le T \), \( B_{T^k_a} = 1 \) and \( T^k_a \ge K+k-1 \). The required average is bounded by 
	\[ \sum\limits_{k=2}^{T} e^{-k c_\mu\delta^2} \le \frac{1}{c_\mu\delta^2}  + 1,\]
	giving the desired bound. 
\end{proof}

\begin{lemma}\label{lem:sum1}
	For \(\delta > 0\), there exists a constant \(\tilde{c}_\mu(\delta)\) (independent of \(T\)) such that
	\[ \sum\limits_{k=1}^{\frac{\log T}{\log (1+\tilde{\eta})} + 1} (1+\tilde{\eta})^{k-1} e^{-N_a(T^k_a) c_\mu\delta^2 } \le \tilde{c}_\mu(\delta) . \]
\end{lemma}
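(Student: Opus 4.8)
The plan is to replace the random pull counts by their deterministic lower bound and then show that the resulting \(T\)-free series converges, the point being that the doubly-exponential decay \(e^{-q^{k-1}\beta}\) overwhelms the geometric prefactor \(q^{k-1}\). Throughout I abbreviate \(q := 1+\tilde{\eta}\) (so \(q>1\), since the lemma is only invoked for \(\tilde\eta>0\)) and \(\beta := c_\mu\delta^2 > 0\).

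First I would recall, as established in the proof of Theorem~\ref{th:geometric}, that \(N_a(T^k_a)\) is deterministic and satisfies \(N_a(T^k_a) \ge (1+\tilde\eta)^{k-1} = q^{k-1}\) for \(k\ge 2\), while the \(k=1\) term equals \(e^{-N_a(T^1_a)\beta}\le 1\). Since every summand is nonnegative, I may discard the upper limit and bound
\[
\sum_{k=1}^{\frac{\log T}{\log q}+1} q^{k-1} e^{-N_a(T^k_a)\beta}
~\le~
1 + \sum_{j=0}^{\infty} q^{j}\, e^{-q^{j}\beta},
\]
which is already independent of \(T\); it remains only to prove the right-hand series is finite.

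The key device is to borrow half the exponent to tame the prefactor. Since \(\max_{x>0} x\,e^{-\beta x/2} = \frac{2}{e\beta}\) (attained at \(x=2/\beta\)), for each \(j\) I can write
\[
q^{j}\, e^{-q^{j}\beta}
=
\left(q^{j} e^{-q^{j}\beta/2}\right) e^{-q^{j}\beta/2}
\le
\frac{2}{e\beta}\, e^{-q^{j}\beta/2}.
\]
Then I would apply Bernoulli's inequality \(q^{j}=(1+\tilde\eta)^{j}\ge 1+j\tilde\eta\) to turn the remaining doubly-exponential term into a genuine geometric series,
\[
\sum_{j=0}^{\infty} e^{-q^{j}\beta/2}
\le
e^{-\beta/2}\sum_{j=0}^{\infty}\left(e^{-\tilde\eta\beta/2}\right)^{j}
=
\frac{e^{-\beta/2}}{1-e^{-\tilde\eta\beta/2}},
\]
which converges because \(e^{-\tilde\eta\beta/2}<1\). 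Collecting constants yields the claim with
\[
\tilde c_\mu(\delta) := 1 + \frac{2}{e\beta}\cdot\frac{e^{-\beta/2}}{1-e^{-\tilde\eta\beta/2}}, \qquad \beta = c_\mu\delta^2,
\]
which manifestly depends only on \(\tilde\eta\), \(c_\mu\), and \(\delta\), and not on \(T\).

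The only point requiring care — the mild main obstacle — is justifying uniformly in \(k\) that the doubly-exponential decay dominates the geometric prefactor. The half-exponent splitting makes this transparent: after absorbing the prefactor into a bounded factor, Bernoulli's inequality reduces the estimate to summing a convergent geometric series, completing the argument.
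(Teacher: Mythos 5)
Your proof is correct and establishes the lemma as stated, but it takes a genuinely different route from the paper's. The paper uses the slightly weaker lower bound \(N_a(T^k_a) \ge (1+\tilde{\eta})^{k-2}\) (valid for all \(k\ge 1\)) and then bounds the sum by an integral over \([1,\infty)\) plus the maximum value \(1/(c_\mu\delta^2 e)\) of the summand; the substitution \(u=(1+\tilde{\eta})^{k-2}\) evaluates the integral in closed form, yielding
\[
\tilde{c}_\mu(\delta) = \frac{1+\tilde{\eta}}{c_\mu\delta^2}\lrp{\frac{e^{-c_\mu\delta^2/(1+\tilde{\eta})}}{\log(1+\tilde{\eta})} + \frac{1}{e}}.
\]
You instead split the exponent in half to absorb the geometric prefactor (via \(\max_{x>0} x e^{-\beta x/2} = 2/(e\beta)\)) and then apply Bernoulli's inequality to reduce the tail to a geometric series; this is more elementary, requiring no integral evaluation. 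The trade-off is the constant's dependence on \(\delta\): Bernoulli's inequality throws away the doubly-exponential decay in favor of merely geometric decay, so your constant behaves like \(4/(e\tilde{\eta} c_\mu^2\delta^4)\) as \(\delta\to 0\), whereas the paper's scales as \(1/(c_\mu\delta^2)\). This is immaterial for the lemma as literally stated (any \(T\)-independent constant suffices), but the \(1/\delta^2\) form is exactly what Lemma \ref{lem:SumE2jIntermediate} asserts and what the proof of Theorem \ref{th:geometric} exploits when it sets \(\delta\propto(\log T)^{-1/3}\) to obtain the \((\log T)^{2/3}\) second-order regret term; with a \(1/\delta^4\) constant, re-optimizing \(\delta\) would degrade that term to \((\log T)^{4/5}\). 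So if your version were substituted into the paper, the downstream statements would need their exponents adjusted.
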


\begin{proof}
	Recall that \( (1+\tilde{\eta})^{k-2}\le  N_a(T^k_a)  \). The required summation is bounded by 
	\[ (1+\tilde{\eta})\sum\limits_{k=1}^{\frac{\log T}{\log(1+\tilde{\eta})} + 1} (1+\tilde{\eta})^{k-2} e^{-(1+\tilde{\eta})^{k-2} c_\mu\delta^2 },\]
	which is further bounded by
	\[(1+\tilde{\eta}) \int\limits_{1}^{\infty} (1+\tilde{\eta})^{k-2} e^{-(1+\tilde{\eta})^{k-2} c_\mu\delta^2} dk + \frac{1+\tilde{\eta}}{c_\mu\delta^2e},\]
	where, \( \frac{1}{c_\mu\delta^2e} \) is the maximum value of the function being summed. To see the above bound, we upper bound the required summation by sum of the integral of the function from \(1\) to \( \infty \) and the maximum value of the function. The above can then be shown to equal
	\[ \tilde{c}_\mu(\delta) :=  \frac{1+\tilde{\eta}}{c_\mu\delta^2}\lrp{\frac{e^{-c_\mu\delta^2/(1+\tilde{\eta})}}{\log(1+\tilde{\eta}) } +\frac{1}{e} }  , \]
	which is the required upper bound. 
\end{proof}

\subsection{Bounding the deviations of optimal arm}\label{app:OptimalArmDeviations}
Recall that, for \(k\ge 2,\) \( T^{k}_{a} \) denotes the random time of the beginning of the batch when the \(a^{th}\) arm won for the \(k^{th}\) time. In particular, arm \(a\) has been sampled for \(k-1\) batches till this time. Thus, \( T^k_a\) is at least \(K-1 + 1+\tilde{\eta}+\dots+\tilde{\eta}(1+\tilde{\eta})^{k-3} =K-1+(1+\tilde{\eta})^{k-2} \). Using this bound on \(T^k_a\), the following lemma follows directly from Proposition \ref{prop:deviation_self_normalizing}.
\begin{lemma}\label{lem:deviation_of_best_arm}
	For \(k\ge 2\), \(g_1(t) = \log(t)+2\log\log(t)+2\log(1+N_1(t))+1 \), 
	\[\mathbb{P}\lrp{{N_1(T^{k}_{a})}\KLinf\lrp{\hat{\mu}_1(T^{k}_{a}), m(\mu_1)} \geq {g_1(T^{k}_{a})}} \leq {(1+\tilde{\eta})^{-k+2}}{\lrp{\log\lrp{K-1+(1+\tilde{\eta})^{k-2}}}^{-2} }.\]
\end{lemma}

Now, recall that 
$$D_N := \sum\limits_{j=K+1}^{N}B_j \mathbbm{1}\lrp{U_1(T_j) \leq m, ~ A_{T_j} = a}$$

\begin{lemma}\label{lem:BoundingDN}
	For \( T > K \), 
	\[ \Exp{D_N}  \le 
	\begin{cases}
	&\lrp{1+\tilde{\eta}}\lrp{\frac{1}{(\log K)^2} + \frac{\pi^2}{6(\log(1+\tilde{\eta}))^2}}, \text{ for } \tilde{\eta} > 0\\
	&\frac{1+\log(K+1)}{(\log(K+1))^2},\text{ for } \tilde{\eta} =0.
	\end{cases} 
	\]
	
\end{lemma}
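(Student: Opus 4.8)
The plan is to reduce $D_N$ to a deterministic geometric sum weighted by the deviation probabilities of the optimal arm, and then handle the two regimes $\tilde\eta>0$ and $\tilde\eta=0$ separately. First I would rewrite the selection event: since the index satisfies $\lrset{U_1(t)\le m} \subseteq \lrset{N_1(t)\KLinf\lrp{\hat\mu_1(t),m}\ge g_1(t)}$ (by monotonicity of $\KLinf(\hat\mu_1(t),\cdot)$ and the definition of $U_1$), the batches contributing to $D_N$ are exactly those in which arm $a$ wins and arm $1$'s index has dropped below its true mean. I would therefore reindex the sum over the winning batches of arm $a$ by the win-counter $k$, so that $T_j=T^k_a$ and $D_N=\sum_{k\ge 2}B_{T^k_a}\mathbbm{1}\lrp{U_1(T^k_a)\le m}$. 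The crucial structural fact is that for $\tilde\eta>0$ the count $N_a(T^k_a)$ is \emph{deterministic}, hence so is the batch size $B_{T^k_a}$, which lets me pull it out of the expectation as a bound and combine it directly with the probability estimate.

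For $\tilde\eta>0$, I would use $B_{T^k_a}\le \tilde\eta N_a(T^k_a)+1\le (1+\tilde\eta)^{k-1}$ together with Lemma \ref{lem:deviation_of_best_arm}, which bounds $\mathbb{P}\lrp{N_1(T^k_a)\KLinf\lrp{\hat\mu_1(T^k_a),m}\ge g_1(T^k_a)}$ by $(1+\tilde\eta)^{-k+2}\lrp{\log\lrp{K-1+(1+\tilde\eta)^{k-2}}}^{-2}$. Taking expectations term by term, the geometric factors cancel, leaving
\[
\Exp{D_N}\le (1+\tilde\eta)\sum_{k\ge 2}\lrp{\log\lrp{K-1+(1+\tilde\eta)^{k-2}}}^{-2}.
\]
I would then split off the $k=2$ term, which contributes exactly $(\log K)^{-2}$, and for $k\ge 3$ use $\log\lrp{K-1+(1+\tilde\eta)^{k-2}}\ge (k-2)\log(1+\tilde\eta)$ so that the tail is bounded by $\frac{1}{(\log(1+\tilde\eta))^2}\sum_{m\ge 1}m^{-2}=\frac{\pi^2}{6(\log(1+\tilde\eta))^2}$. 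This reproduces the claimed bound in the first case.

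For $\tilde\eta=0$ every batch has size $1$, so $D_N=\sum_{k\ge 2}\mathbbm{1}\lrp{U_1(T^k_a)\le m}$. Here I would apply Proposition \ref{prop:deviation_self_normalizing} with $x=\log(T^k_a)+2\log\log(T^k_a)$: since $g_1(t)=\log t+2\log\log t+2\log(1+N_1(t))+1$, the event matches the self-normalized deviation and the probability is at most $\lrp{T^k_a\lrp{\log T^k_a}^2}\inv$. Because the winning times $T^k_a$ are distinct integers with $T^k_a\ge K+k-1$, I would bound the sum by $\sum_{t\ge K+1}\lrp{t(\log t)^2}\inv$, and use the integral comparison $\sum_{t\ge K+1}\lrp{t(\log t)^2}\inv\le \frac{1}{(K+1)(\log(K+1))^2}+\frac{1}{\log(K+1)}\le \frac{1+\log(K+1)}{(\log(K+1))^2}$, which is the second case.

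The main obstacle I anticipate is the justification that the deviation bounds apply at the \emph{random} times $T^k_a$ rather than at fixed $n$. This is exactly where the anytime (uniform-in-$n$) nature of Proposition \ref{prop:deviation_self_normalizing} is needed: it bounds $\sup_n$ over the whole trajectory, so it remains valid when evaluated at the stopping time $T^k_a$, and Lemma \ref{lem:deviation_of_best_arm} already packages this together with the deterministic lower bound $T^k_a\ge K-1+(1+\tilde\eta)^{k-2}$. The remaining steps are the routine cancellation of the geometric weights and the Basel-series / integral-comparison estimates, which require only elementary bookkeeping.
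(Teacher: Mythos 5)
Your proposal is correct and follows essentially the same route as the paper's proof: reindex $D_N$ over the winning batches of arm $a$, use the deterministic bounds $B_{T^k_a}\le(1+\tilde\eta)^{k-1}$ and $T^k_a\ge K-1+(1+\tilde\eta)^{k-2}$ (resp.\ $T^k_a\ge K+k-1$ when $\tilde\eta=0$), and invoke Lemma \ref{lem:deviation_of_best_arm} for $\tilde\eta>0$ or Proposition \ref{prop:deviation_self_normalizing} directly for $\tilde\eta=0$ before summing. The geometric cancellation, the split of the $k=2$ term, the Basel-series tail bound, and the integral comparison you give are exactly the computations the paper leaves implicit.
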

\begin{proof}
	Recall that \(N_{B,a}\) denotes the number of batches allocated to arm \(a\) in time T and $T^k_{a}$ denotes the time of beginning of batch when arm \(a\) won for the \(k^{th}\) time. Then \(D_N\) can be re-written as
	\[ \sum\limits_{k=2}^{N_{B,a}(T)} B_{T^k_a} \mathbbm{1}\lrp{U_1(T^k_a)\le m}.  \]
	Recall that for any \(t\), the event \(\lrset{U_1(t)\le m}\) is same as \(\lrset{N_1(t)\KLinf(\hat{\mu}_1(t),m ) \ge g_1(t)} \), giving 
	\[ D_N = \sum\limits_{k=2}^{N_{B,a}} B_{T^k_a} \mathbbm{1}\lrp{ N_1(T^k_a) \KLinf\lrp{\hat{\mu}_1(T^{k}_{a}), {m}}\geq g_1(T^{k}_{a}}. \]
	
	Let us first consider the case when \( \tilde{\eta} > 0 \). In this case, \( N_{B,a}  \) is at most \( \frac{\log(T)}{\log(1+\tilde{\eta})} \) and \( B_{T^k_a} \) is at most \(\tilde{\eta} N_a(T^k_a)+1 \), which in turn is at most \((1+\tilde{\eta})^{k-1}\). Thus, the required expectation is at most 
	\begin{align*}
	\Exp{D_N}\le \sum\limits_{k=2}^{\frac{\log(T)}{\log(1+\tilde{\eta})}+1} \lrp{1+\tilde{\eta}}^{k-1} \mathbb{P}\lrp{N_1(T^{k}_{a})\KLinf\lrp{\hat{\mu}_1(T^{k}_{a}), {m}}\geq g_1(T^{k}_{a})}. 
	\end{align*}
	
	For \(g_1(t) = \log(t)+2\log\log(t)+2\log(1+N_a(t))+1\), Lemma \ref{lem:deviation_of_best_arm} bounds the probability in the expression in the r.h.s. above. Summing over \(k \in \lrset{2,\dots \log(T)/\log(1+\tilde{\eta})} \), we get  
	\[ \Exp{D_N}  \le \lrp{1+\tilde{\eta}}\lrp{\frac{1}{(\log K)^2} + \frac{\pi^2}{6(\log(1+\tilde{\eta}))^2}}.\]
	
	Now, let \( \tilde{\eta} = 0 \). In this case, \( N_{B,a}\le T \), \( B_{T^k_{a}} = 1 \), \( T^k_a \ge K+k-1 \). Using these, together with \( g_1(T^k_a) \ge \log(K+k-1) + 2\log\log(K+k-1) + 2\log(1+N_1(T^k_a)) +1 \), we get 
	\[ \Exp{D_N} \le \sum\limits_{k=2}^T \mathbb{P}\lrp{ N_1(T^k_a)\KLinf(\hat{\mu}_a(T^k_a),m) \ge g_1(T^k_a) } \le \sum\limits_{k=2}^T (K+k-1)\inv \lrp{\log(K+k-1)}^{-2},\]
	which is bounded by the constant in the statement. 
\end{proof}

\subsection{Bounding the number of pulls of a sub-optimal arm}\label{proof:lem:BoundingSumE1j} 
Recall that for \(T\ge K+1\), \(N\) denotes the random number of batches played by the algorithm till time \(T\), \(B_j \) denotes the random number of samples allocated within the \(j^{th}\) batch, and \(T_j\) denotes the time of beginning of the \(j^{th}\) batch. Furthermore, recall 
\[E_{1j} = \mathbbm{1}\lrp{\KLinf\lrp{\hat{\mu}_a(T_j), m} \leq \frac{g_a(T_j)}{N_a(T_j)},~\KLinf\lrp{\hat{\mu}_a(T_j), m} > \KLinf(\mu_a,m)-\delta,~ A_{T_j} = a }.\]

This corresponds to the event when sufficient samples have not been allocated to the sub-optimal arm \(a\), and contributes to the regret of the algorithm. Clearly, 
\begin{align*}
	\sum\limits_{j=1}^{N}B_jE_{1j} &\le \sum\limits_{j=1}^{N}B_j\mathbbm{1}\lrp{N_a(T_j)\leq \frac{g_a(T_j)}{\KLinf\lrp{\mu_a,m}-\delta}, A_{T_j}=a} .
\end{align*}
	
\begin{lemma}\label{lem:BoundingSumE1j}For \( T\ge K+1 \), \( \tilde{\eta} \ge 0 \), \( \delta>0 \),
	\[
	\sum\limits_{j=1}^{N}B_jE_{1j}\le (1+\tilde{\eta})\lrp{\frac{\log(T)}{\KLinf(\mu_a,m)-\delta}+ O\lrp{\log\log(T)}}. 
	\]
\end{lemma}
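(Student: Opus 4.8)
The plan is to start from the domination already recorded just before the lemma statement, namely
$$\sum_{j=1}^N B_j E_{1j} \le \sum_{j=1}^N B_j \mathbbm{1}\lrp{N_a(T_j) \le \frac{g_a(T_j)}{\KLinf(\mu_a,m)-\delta},\ A_{T_j}=a},$$
and to exploit that the indicator annihilates every batch in which arm $a$ has already been sampled past the (slowly growing) threshold $g_a(T_j)/(\KLinf(\mu_a,m)-\delta)$. Write $d := \KLinf(\mu_a,m)-\delta>0$ (the regime in which the statement is used) and let $J\subseteq\{K+1,\dots,N\}$ be the set of batches for which the indicator equals $1$. If $J=\emptyset$ the sum is zero and there is nothing to prove, so I assume $J\ne\emptyset$ and set $j^\ast:=\max J$. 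Every batch in $J$ is a batch \emph{won} by arm $a$, so $\sum_{j\in J}B_j$ is dominated by the total number of samples allocated to arm $a$ over all of its winning batches up to and including $j^\ast$; since $N_a$ starts at $1$, this total equals $N_a(T_{j^\ast})+B_{j^\ast}-1$. Hence $\sum_{j=1}^N B_j E_{1j}\le N_a(T_{j^\ast})+B_{j^\ast}-1$, regardless of any monotonicity of the condition, because I am bounding a subsum by the full winning-batch sum.

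Next I would turn this into a $(1+\tilde{\eta})$-multiple of $N_a(T_{j^\ast})$ using the batch-size rule. Since $A_{T_{j^\ast}}=a$, the batch size is $B_{j^\ast}=\max\{1,\ceil{\tilde{\eta}N_a(T_{j^\ast})}\}\le 1+\tilde{\eta}N_a(T_{j^\ast})$, which holds for both $\tilde{\eta}>0$ and $\tilde{\eta}=0$. Substituting gives
$$\sum_{j=1}^N B_j E_{1j}\le N_a(T_{j^\ast})+\tilde{\eta}N_a(T_{j^\ast})=(1+\tilde{\eta})N_a(T_{j^\ast}),$$
so it remains only to bound $N_a(T_{j^\ast})$. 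Because $j^\ast\in J$, the defining inequality together with $T_{j^\ast}\le T$ yields
$$d\,N_a(T_{j^\ast})\le g_a(T_{j^\ast})\le \log T+2\log\log T+2\log\lrp{1+N_a(T_{j^\ast})}+1.$$

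The one genuine obstacle is the self-referential term $2\log(1+N_a(T_{j^\ast}))$ on the right, which I would remove by a two-step bootstrap. First, using $2\log(1+N)\le \tfrac{d}{2}N+C(d)$ for a constant $C(d)$ depending only on $d$ (valid because $\log$ is sublinear; the maximizer of $2\log(1+N)-\tfrac{d}{2}N$ is at $N=4/d-1$, giving an explicit finite $C(d)$), the display above yields $\tfrac{d}{2}N_a(T_{j^\ast})\le \log T+2\log\log T+1+C(d)$, hence the crude bound $N_a(T_{j^\ast})=O(\log T)$. Feeding this back in, $2\log(1+N_a(T_{j^\ast}))=O(\log\log T)$, so the inequality sharpens to $d\,N_a(T_{j^\ast})\le \log T+O(\log\log T)$, i.e.
$$N_a(T_{j^\ast})\le \frac{\log T}{\KLinf(\mu_a,m)-\delta}+O(\log\log T).$$
Combining with the previous display gives the claim, and tracking the constant in the $2\log(1+N)\le \tfrac{d}{2}N+C(d)$ step pins down the exact coefficient of the $O(\log\log T)$ term referenced in the main text. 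Beyond this bootstrap I expect only the empty-$J$ and $\tilde{\eta}=0$ edge cases to need mention, both of which are immediate.
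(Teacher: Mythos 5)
Your proposal is correct and follows essentially the same route as the paper's proof: the same initial domination, the same telescoping observation that the qualifying batch sizes accumulate into $N_a$ so the whole sum is controlled by $(1+\tilde{\eta})N_a(T_{j^\ast})$ at the last qualifying batch, and the same inversion of the self-referential threshold inequality. The only cosmetic difference is that you resolve the $2\log(1+N_a)$ term by a two-step bootstrap, whereas the paper bounds the supremum $z^\ast = \sup\lrset{z\in\mathbb{N} : zd \le \log T + 2\log\log T + 2\log(1+z)+1}$ by an explicit formula $\tilde{z}$; both yield $N_a(T_{j^\ast}) \le \log(T)/d + O(\log\log T)$ with instance-dependent constants.
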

\begin{proof}
Since \(\log(t)+2\log\log(t)\) is a monotonically increasing function, using the form of \( g_a(.) \),  
\begin{align*}
	\sum\limits_{j=1}^{N}B_jE_{1j} &\leq \sum\limits_{j=1}^{N}B_j\mathbbm{1}\lrp{N_a(T_j)\leq \frac{\log(T) + 2\log\log(T) + 2\log(1+N_a(T_j))+1}{\KLinf\lrp{\mu_a,m}-\delta}, ~A_{T_j}=a }. 
\end{align*}
	
Clearly, \( z^* := \sup\lrset{z\in\mathbb{N} :  z d \le {\log(T) + 2\log\log(T) + 2\log(1+z) + 1}}\) is at most $\tilde{z}$, which equals 
\[\frac{\log(T)+2\log\log(T)}{d}+ \lrp{1+\frac{2}{d}}\log\lrp{1+\frac{\log(T)+2\log\log(T)}{d}} + \frac{10}{d} + O\lrp{\log\log\log(T)}.\numberthis\label{eq:exact_bound}\]
	
Thus, setting \( d = \KLinf(\mu_a,m)-\delta \), we get that 
\begin{align*}
	\sum\limits_{j=1}^{N}B_jE_{1j} &\leq B_N\mathbbm{1}\lrp{N_a(T_N)\leq \frac{\log(T) + 2\log\log(T) + 2\log(1+N_a(T_N))+1}{\KLinf\lrp{\mu_a,m}-\delta}, ~A_{T_j}=a } + \tilde{z}. 
\end{align*}

Clearly, when the indicator above is \(1\), then \(  B_N \) is at most \( \lrp{\tilde{\eta}\tilde{z}+1} \). Thus, 
\[
	\sum\limits_{j=1}^{N}B_jE_{1j}\le (1+\tilde{\eta})\lrp{\frac{\log(T)}{\KLinf(\mu_a,m)-\delta}+ O\lrp{\log\log(T)}}, 
\]
where the lower order terms in the above expression are the \(  o(\log(T))\) terms in (\ref{eq:exact_bound}), with \( d= \KLinf(\mu_a,m)-\delta \). 
\end{proof}

\subsection{Towards proving Proposition \ref{prop:deviation_self_normalizing}}\label{sec:proof:prop:deviation_self_normalizing}
In this section, we prove the anytime concentration inequality in Proposition \ref{prop:deviation_self_normalizing}. The proof involves constructing mixtures of super-martingales using dual formulation for \( \KLinf \), and may also be of independent interest. Lemma \ref{lem:exp-concave} below is borrowed from \cite{agrawal2020optimal}, and is stated here for completeness.  
\begin{lemma}\label{lem:exp-concave}
		Let $\Lambda \subseteq \mathbb R^d$ be a compact and convex subset and $q$ be the uniform distribution on $\Lambda$. Let $g_t: \Lambda \to \mathbb R$ be any series of exp-concave functions. Then
		\[
		\max_{\bm{\lambda} \in \Lambda}~
		\sum_{t=1}^T g_t(\bm{\lambda})
		~\le~
		\log \ex_{\bm{\lambda}\sim q}\lrp{
			e^{\sum_{t=1}^T g_t(\bm{\lambda})}
		}
		+ d\log(T+1)+1.
		\]
\end{lemma}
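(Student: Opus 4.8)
The plan is to establish, for an \emph{arbitrary} fixed point $\bm{\lambda}^\star\in\Lambda$, the pointwise bound
\[
F(\bm{\lambda}^\star) \;\le\; \log\ex_{\bm{\lambda}\sim q}\lrp{e^{F(\bm{\lambda})}} + d\log(T+1) + 1, \qquad F(\bm{\lambda}) := \sum_{t=1}^T g_t(\bm{\lambda}),
\]
and then take the supremum over $\bm{\lambda}^\star\in\Lambda$ on the left (the right-hand side not depending on $\bm{\lambda}^\star$) to recover $\max_{\bm{\lambda}}F(\bm{\lambda})$; this neatly sidesteps any question of whether the maximizer is attained. The mechanism is to shrink the whole domain $\Lambda$ towards $\bm{\lambda}^\star$ and cash in exp-concavity one factor at a time. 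The key observation is that although a sum of exp-concave functions need not itself be exp-concave, the product representation $e^{F(\bm{\lambda})} = \prod_{t=1}^T e^{g_t(\bm{\lambda})}$ consists entirely of concave factors, and per-factor concavity is all the argument uses.

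Concretely, for $\alpha\in(0,1)$ I would introduce the shrunk domain $S := (1-\alpha)\bm{\lambda}^\star + \alpha\Lambda$. By convexity of $\Lambda$ we have $S\subseteq\Lambda$, and since $S$ is the image of $\Lambda$ under the affine map $\bm{u}\mapsto(1-\alpha)\bm{\lambda}^\star+\alpha\bm{u}$ with Jacobian $\alpha I$, its volume is $\alpha^d\,\mathrm{vol}(\Lambda)$. For any $\bm{\lambda} = (1-\alpha)\bm{\lambda}^\star+\alpha\bm{u}\in S$, concavity of $e^{g_t}$ gives
\[
e^{g_t(\bm{\lambda})} \;\ge\; (1-\alpha)\,e^{g_t(\bm{\lambda}^\star)} + \alpha\, e^{g_t(\bm{u})} \;\ge\; (1-\alpha)\,e^{g_t(\bm{\lambda}^\star)},
\]
and multiplying over $t$ yields the uniform lower bound $e^{F(\bm{\lambda})}\ge(1-\alpha)^T e^{F(\bm{\lambda}^\star)}$ on all of $S$.

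From here the remaining steps are routine bookkeeping. Restricting the average to $S$ gives
\[
\ex_{\bm{\lambda}\sim q}\lrp{e^{F(\bm{\lambda})}} \;=\; \frac{1}{\mathrm{vol}(\Lambda)}\int_\Lambda e^{F} \;\ge\; \frac{\mathrm{vol}(S)}{\mathrm{vol}(\Lambda)}\,(1-\alpha)^T e^{F(\bm{\lambda}^\star)} \;=\; \alpha^d (1-\alpha)^T e^{F(\bm{\lambda}^\star)}.
\]
Taking logarithms and rearranging gives $F(\bm{\lambda}^\star)\le \log\ex_{\bm{\lambda}\sim q}(e^{F(\bm{\lambda})}) + d\log(1/\alpha) - T\log(1-\alpha)$, and I would finish by setting $\alpha = 1/(T+1)$: the first penalty becomes $d\log(T+1)$, while $-T\log(1-\alpha) = T\log(1+1/T)\le 1$ (equivalently $(1-\alpha)^T = (1+1/T)^{-T}\ge e^{-1}$) supplies the additive constant. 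The main obstacle here is conceptual rather than computational: one must resist the (incorrect) temptation to treat $F$ as exp-concave, and instead commit to the product form and apply concavity separately to each $e^{g_t}$ along the segment from $\bm{u}$ to $\bm{\lambda}^\star$. Once that is done, the volume scaling and the choice of $\alpha$ are both standard. (If $\Lambda$ fails to be full-dimensional, the same argument runs in its affine hull, whose dimension only makes the bound smaller.)
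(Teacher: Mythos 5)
Your proof is correct: exp-concavity gives concavity of each factor $e^{g_t}$, so on the shrunk set $S=(1-\alpha)\bm{\lambda}^\star+\alpha\Lambda$ (of relative volume $\alpha^d$) one gets the uniform lower bound $e^{F}\ge(1-\alpha)^T e^{F(\bm{\lambda}^\star)}$, and the choice $\alpha=1/(T+1)$ with $T\log(1+1/T)\le 1$ yields exactly the stated inequality, with the lower-dimensional case correctly deferred to the affine hull. The paper itself states this lemma without proof, borrowing it from \cite{agrawal2020optimal}, and your argument is essentially the standard shrink-toward-the-optimizer (Blum--Kalai / universal-portfolio style) proof used for such mixture bounds, so it matches the intended derivation.
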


\subsubsection{Proof of Proposition \ref{prop:deviation_self_normalizing}}
We first note that if \( \mu_b = \delta_{B^{1/(1+\epsilon)}} \), which is the only distribution in \( \mathcal L_B \)  with the maximum possible mean in this class, i.e., \( B^{1/(1+\epsilon)} \), then the statement is vacuously true, since \( \hat{\mu}_b(n) = \mu_b \). Whence \( \KLinf(\hat{\mu}_b,\mu_b) =0 \). 

We now consider \( \mu_b \ne \delta_{B^{1/(1+\epsilon)}} \). For \(\epsilon > 0 \) and \( B > 0\), let \( f(y) := \abs{y}^{1+\epsilon} \), and for \(c \ge 0\), define \(f\inv(c) = c^{1/(1+\epsilon)}\). From the dual formulation in Lemma \ref{lem:DualFormulation}, \[ N_b(n)\KLinf(\hat{\mu_b}(n),m(\mu_b)) = \max\limits_{\bm{\lambda}\in\mathcal R(m(\mu_b),B)}~ \E{\mu_b}{\log\lrp{1-\lambda_1(X-m(\mu_b))-\lambda_2(B-f(X))}}, \]
where \(\mathcal{R}(m(\mu_b),B) \subset \Re^2 \). Let \(q\) be uniform distribution on \(\mathcal{R}(m(\mu_b),B)\), which is defined since the region is a  compact set (see Lemma \ref{rem:compactness_of_R}). Define 
\[ U_b(n) = \E{\bm{\lambda}\sim q}{\prod\limits_{i=1}^{N_b(n)} \lrp{1-\lambda_1(X_i-m(\mu_b))-\lambda_2(B-f(X_i))} \big\lvert X_1,\dots,X_{N_b(n)} } \text{ for } n\ge 1,\]
where \( \lrset{X_1,\dots,X_{N_b(n)}} \) are \( N_b(n) \) samples generated from arm \(b\) in time \(n\). 

Setting \(d=2\), \( g_i(X) = \log(1-\lambda_1(X_i-m(\mu_b))-\lambda_2(B-f(X_i))) \), \(\) in Lemma \ref{lem:exp-concave}, on each sample path, we have 
\[ N_b(n)\KLinf(\hat{\mu}_b(n),m(\mu_b)) \le \log U_b(n) + 2\log(1+N_b(n)) + 1. \numberthis\label{eq:bound_kl}\] 

Since \(\mu_b \in \cal L\), \( U_b(n) \) is a non-negative super-martingale with mean at most \(1  \). Using this in (\ref{eq:bound_kl}), together with Ville's inequality, we get 

\[ \mathbb{P}\lrp{\exists n\in \mathbb{N}: ~ N_b(n)\KLinf(\hat{\mu}_b(n), m(\mu_b))-2\log(1+N_b(n)) -1 \ge x} \le e^{-x},  \]
proving the desired inequality. $\Box$\\

\section{Computing the index}\label{appx:index.compute}
Let $\mathcal L \subseteq \mathcal{P}(\Re)$ be the set of distributions on $\Re$ with $(1+\epsilon)^\text{th}$ moment bounded by $B$. For \(\eta \in \mathcal{P}(\Re)\) itself possibly outside $\mathcal L$, we define the upper index at threshold $C$ by (\ref{eq:index_reformulated}), which we re-state
\[
  U_\eta
  ~:=~
  \max_\kappa~ m(\kappa)
  \qquad
  \text{s.t.}
  \qquad
  \kappa \in \mathcal L
  \quad \text{and} \quad
  \KL(\eta, \kappa) \le C.
\]
We now give a characterization of the above optimisation problem. Note that the variation with constraint $N \KL(\eta,\kappa) \le C$ follows from the below result after dividing $C$ by $N$.

\begin{lemma}[Dual formulation of index \( U_\eta\)]\label{lem:DualFormulationIndex}
  \[
    U_\eta
    ~=~
  \min_{\lambda_1, \lambda_2}~
  \lambda_1
  + \lambda_2 B
  - e^{\int \eta(x) \ln \del*{
      \lambda_1
      - x
      + \lambda_2 \abs{x}^{1+\epsilon}} \dif x
    -  C
  }
  \quad
  \text{s.t.}
  \quad
    \lambda_1
    \ge
    \lambda_2^{-\frac{1}{\epsilon }} \frac{
      \epsilon
    }{
      (1+\epsilon)^{1+\frac{1}{\epsilon }}
    }
    \text{ and }
    \lambda_2 \ge 0
    .
\]
\end{lemma}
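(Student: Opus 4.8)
The plan is to derive the stated formula by applying Lagrangian duality to the primal maximization defining $U_\eta$. First I would write the primal explicitly as the maximization of the linear functional $\kappa \mapsto \int x \, d\kappa$ over nonnegative measures $\kappa$, subject to the normalization $\int d\kappa = 1$, the moment bound $\int \abs{x}^{1+\epsilon} d\kappa \le B$, and the divergence constraint $\KL(\eta,\kappa) \le C$. Since the objective is linear and all three constraints are convex (the moment functional is linear, and $\kappa \mapsto \KL(\eta,\kappa)$ is convex and weakly lower-semicontinuous), this is a convex program, and I would invoke strong duality after checking a Slater-type constraint qualification, namely the existence of a $\kappa \in \mathcal L$ with $\int\abs{x}^{1+\epsilon}d\kappa < B$ and $\KL(\eta,\kappa) < C$, so that the optimal value equals that of the dual obtained by interchanging the supremum over $\kappa$ with the infimum over multipliers.

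I would introduce a free multiplier $\lambda_1$ for the normalization, $\lambda_2 \ge 0$ for the moment constraint, and $\nu \ge 0$ for the divergence constraint, and collect the Lagrangian into $\int \phi \, d\kappa + \nu \int \ln(d\kappa/d\eta)\, d\eta + \lambda_1 + \lambda_2 B + \nu C$, where $\phi(x) = x - \lambda_1 - \lambda_2\abs{x}^{1+\epsilon}$. The key structural step is to split $\kappa$ into its $\eta$-absolutely continuous part with density $r = d\kappa/d\eta$ and a singular part $\kappa_s \perp \eta$ (precisely the single extra atom permitted by Lemma \ref{lem:DualFormulation}). The singular part enters only through $\int \phi \, d\kappa_s$, whose supremum over nonnegative $\kappa_s$ is finite exactly when $\phi \le 0$ everywhere; computing $\sup_x (x - \lambda_2\abs{x}^{1+\epsilon})$ by elementary calculus (with maximizer $x^* = (\lambda_2(1+\epsilon))^{-1/\epsilon}$) yields precisely the constraint $\lambda_1 \ge \lambda_2^{-1/\epsilon}\,\epsilon/(1+\epsilon)^{1+1/\epsilon}$, and on this feasible region the singular part contributes zero.

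On the feasible region I would then carry out the inner maximization over $r$ pointwise: maximizing $\phi(x) r + \nu \ln r$ over $r > 0$ gives $r^*(x) = \nu/\psi(x)$ with $\psi = -\phi = \lambda_1 - x + \lambda_2\abs{x}^{1+\epsilon} \ge 0$, and substituting back collapses the $\eta$-integral to $-\nu + \nu\ln\nu - \nu\,\mathbb{E}_\eta[\ln\psi]$. Finally I would minimize the resulting expression $\lambda_1 + \lambda_2 B + \nu(\ln\nu - 1 + C - \mathbb{E}_\eta[\ln\psi])$ over $\nu > 0$, obtaining $\nu^* = \exp(\mathbb{E}_\eta[\ln\psi] - C)$ and the clean value $\lambda_1 + \lambda_2 B - \exp(\mathbb{E}_\eta[\ln\psi] - C)$, which is exactly the claimed dual objective once $\mathbb{E}_\eta[\ln\psi]$ is written as $\int \eta(x)\ln(\lambda_1 - x + \lambda_2\abs{x}^{1+\epsilon})\,dx$.

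The main obstacle I anticipate is the rigorous justification of strong duality and the admissibility of these formal manipulations in the infinite-dimensional, measure-valued setting: verifying the constraint qualification (feasibility of the KL-ball intersected with $\mathcal L$, which implicitly needs $C$ large enough that $\KLinf(\eta,\cdot)$-feasibility holds), ensuring $\psi > 0$ holds $\eta$-almost surely so that $\ln\psi$ and the optimizer $r^*$ are well defined, and confirming that the divergence constraint is active so that $\nu^* > 0$. I would address these by appealing to the compactness and continuity properties already established (Lemma \ref{lem:PropLR}, Lemma \ref{lem:PropKLinf}) and to the explicit primal optimizer structure from Lemma \ref{lem:DualFormulation}, whose reciprocal-linear form $d\kappa^*/d\eta \propto 1/\psi$ together with its at-most-one extra support point matches the $r^*$ and $\kappa_s$ found above, thereby certifying optimality via complementary slackness.
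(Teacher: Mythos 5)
Your proposal follows essentially the same route as the paper's proof: the same Lagrangian with multipliers for the normalization, moment, and KL-ball constraints, the same inner optimizer $d\kappa/d\eta = \nu/\psi$ (the paper's $\lambda_3$ is your $\nu$), the same emergence of the feasible region from requiring $\lambda_1 - x + \lambda_2\abs{x}^{1+\epsilon} \ge 0$ for all $x$, and the same closed-form elimination of the KL multiplier producing the exponential term. Your explicit splitting of $\kappa$ into absolutely continuous and singular parts, and your attention to Slater-type qualification, merely make rigorous what the paper's ``diligent application of convex duality'' carries out formally.
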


\begin{proof}
  The proof is a diligent application of convex duality. Introducing Lagrange multipliers $\lambda_1, \lambda_2$ and $\lambda_3$ for the normalisation, moment and KL-ball constraints respectively, we find that the objective equals
\[
  \min_{\lambda_1, \lambda_2 \ge 0, \lambda_3 \ge 0}
  \max_{\kappa \ge 0}~
  \ex_\kappa [X]
  + \lambda_1 \del*{1 - \ex_\kappa \sbr*{1}}
  + \lambda_2 \del*{B - \ex_\kappa \sbr*{\abs{X}^{1+\epsilon}}}
  + \lambda_3 \del*{C - \KL(\eta, \kappa)}
\]
The solution for $\kappa$ is
\[
  \kappa(x)
  ~=~
  \frac{
    \lambda_3 \eta(x)
  }{
    \lambda_1
    - x
    + \lambda_2 \abs{x}^{1+\epsilon}}
\]
where we inherit the restriction $(\lambda_1,\lambda_2) \in \mathcal R \df \setc*{(\lambda_1, \lambda_2) \in \mathbb R^2}{\forall x \in \mathbb R : \lambda_1 - x + \lambda_2 \abs{x}^{1+\epsilon} \ge 0}$ (which in particular implies that $\lambda_1 \ge 0$, and $\kappa$ has at most one additional support point compared to $\eta$, which must then be at
$(\lambda_2 (1+\epsilon))^{-1/\epsilon }$). Plugging this in, we find
\[
  \min_{(\lambda_1, \lambda_2) \in \mathcal R, \lambda_3 \ge 0}~
  - \lambda_3
  + \lambda_3 \int \eta(x) \ln \del*{\frac{
    \lambda_3
  }{
    \lambda_1
    - x
    + \lambda_2 \abs{x}^{1+\epsilon}}} \dif x
  + \lambda_1
  + \lambda_2 B
  + \lambda_3 C
\]
Optimising for $\lambda_3$ gives
\[
  \lambda_3
  ~=~
  e^{\int \eta(x) \ln \del*{
    \lambda_1
    - x
    + \lambda_2 \abs{x}^{1+\epsilon}} \dif x
  -  C
}
\]
and plugging this in gives the claim.
\end{proof}

The upshot of this result is that our $\KLinf$-based indices can be computed with convex optimisation tools. The good news is that the optimisation variable has dimension $2$, making standard convex optimisation including e.g.\ the ellipsoid method practical. Note thought at the optimisation region for $(\lambda_1,\lambda_2)$ is unbounded, which may be addressed by successively enlargign the starting ellipsoid. When applying this to an empirical distribution $\eta$ supported on $n$ points, the number of terms in the objective (and hence the run time) scales linearly with $n$ and also with the number of bits of precision required.

\section{Comparison with Robust-UCB of \cite{bubeck2013heavytail}}\label{App:ComparisonWithBubeck}
Consider the following optimization problem that corresponds to our index: 
\[ \max\limits_{\kappa\in\cal L}~~ \E{\kappa}{X} \quad \text{s.t.} \quad n\KL(\hat{\eta}(n), \kappa) \le C.\]
The optimal value above is at most 
\[ \max\limits_{\kappa\in\cal L} ~~ \E{\kappa}{X}\quad \text{s.t.} \quad n\E{\hat{\eta}(n)}{g(X)}  - n\log \E{\kappa}{e^{g(X)}} \le C,\]
where we used the Donsker-Varadhan dual representation for \( \KL \). For a sequence of thresholds \(u_n = \lrp{B n (\log\delta\inv)\inv}^{{1}/\lrp{1+\epsilon}}\), and $\theta > 0$, define a function \(g_n(X) = X \mathbbm{1}\lrp{\abs{X}\le u_n}.\)
Substituting \(g_\eta\) for \(g\) in the above, and adding \(n\theta\E{\kappa}{X}\) on both the sides, we get the following upper bound on our index: 
\[ \max\limits_{\kappa\in\cal L} ~\E{\kappa}{X} ~~ \text{s.t. } \theta \sum\limits_{i=1}^n \lrp{\E{\kappa}{X}-X_i\mathbbm{1}\lrp{\abs{X_i}\le u_n}} -n\log\E{\kappa}{e^{g_n(X)}} \le C + n \theta\E{\kappa}{X}.\numberthis\label{eq:indexbound} \]
  
Let \( Y_n =  X\mathbbm{1}\lrp{\abs{X}\le u_n} \) and \( m_n = \E{\kappa}{X\mathbbm{1}\lrp{\abs{X}\le u_n}} \). Then, $\E{\kappa}{\theta^2 Y^2_n} \le \theta^2 Bu^{1-\epsilon}_n $. Using this, 
\begin{align*}
	\E{\kappa}{e^{-\theta X\mathbbm{1}\lrp{\abs{X}\le u_n}}} &\le 1 -\theta m_n + \sum\limits_{j=2}^\infty \frac{\E{\kappa}{\abs{\theta Y_n}^j}}{j!} \le 1-\theta m_i + \frac{B}{u^{1+\epsilon}_n}\sum\limits_{j=2}^{\infty} \frac{(\theta u_n)^j}{j!}. \numberthis\label{eq:UniformMGF} 
\end{align*}

Thus, we have \(\E{\kappa}{e^{-\theta X\mathbbm{1}\lrp{\abs{X}\le u_n}}}  \le 1-\theta m_n + \frac{B}{u^{1+\epsilon}_n} \lrp{e^{\theta u_n } - \theta u_n -1}  \). Using $1+x\le e^x$ and (\ref{eq:UniformMGF}) in (\ref{eq:indexbound}), we get that the optimal value of the following optimization problem is an upper bound on our index:
\[ 
\max\limits_{\kappa\in\cal L}~~ \E{\kappa}{X} \]
subject to 
\[ \theta \sum\limits_{i=1}^n \lrp{\E{\kappa}{X}-X_i\mathbbm{1}\lrp{\abs{X_i}\le u_n}}  \le C + n \lrp{\theta\E{\kappa}{X} -\theta m_n + \frac{B}{u^{1+\epsilon}_n} \lrp{e^{\theta u_n}-\theta u_n -1} }. 
\]
Clearly, \( \E{\kappa}{X\mathbbm{1}\lrp{\abs{X} \ge u_n}} \) is at most \( B/(u_n)^{\epsilon} \). The constraint can be relaxed  to 
\[   \frac{1}{n}\sum\limits_{i=1}^n \lrp{\E{\kappa}{X}-X_i\mathbbm{1}\lrp{\abs{X_i}\le u_n}}  \le \frac{B}{u^{\epsilon}_n} + \frac{1}{\theta} \lrp{\frac{C}{n}+ \frac{B}{u^{1+\epsilon}_n} \lrp{e^{\theta u_n}-\theta u_n -1}}  \]

Choosing \( \theta = \frac{Cu^{\epsilon}_n}{nB} \), the above constraint is \( \frac{1}{n}\sum\limits_{i=1}^n \lrp{\E{\kappa}{X}-X_i\mathbbm{1}\lrp{\abs{X_i} u_n}}\) less than 

\[  \sum\limits_{i=1}^n \E{\kappa}{X\mathbbm{1}\lrp{\abs{X} \ge u_n }} + {\lrp{nB}^{\frac{1}{1+\epsilon}}}\frac{\log\delta\inv}{C} \lrp{\log\delta\inv}^{\frac{\epsilon}{1+\epsilon}}\lrp{e^{\frac{C}{\log \delta\inv}} - 1}. \]
Setting \(\hat{\mu}_T(n) = \frac{1}{n} \sum\limits_{i=1}^n X_i\mathbbm{1}\lrp{\abs{X_i}\le u_n} \), and using that  \(u_n = \lrp{B n \lrp{\log\delta\inv}\inv}^{\frac{1}{1+\epsilon}}\) we get the following bound on the index:
\[ \hat{\mu}_T(n) + {B}^{\frac{1}{1+\epsilon}} \lrp{\frac{\log\delta\inv}{n}}^{\frac{\epsilon}{1+\epsilon}}\lrp{1+ \lrp{e^{\frac{C}{\log\delta\inv}}-1}\frac{\log\delta\inv}{C}} .\]

\section{Finite time bound for bounded support distributions: Proof of Proposition \ref{prop:BddSupp} }\label{app:sec:finiteBoundForBdd}
In this section, we establish the conjectured optimality of the empirical KL-UCB algorithm of \cite{cappe2013kullback} and give the first optimal finite-time regret bound for bounded-support arm distributions. In this setting, \(\mathcal L = \mathcal P([0,1])\), and for \(\eta\in\mathcal P(\Re)\), \(\KLinfR{L}(\eta,x) \) is defined to be \(\inf\KL(\eta,\kappa) : ~ \kappa \in \mathcal L, \text{ and } m(\kappa) \ge x\).  \cite{HondaBounded10} develop alternate representations for the \( \KLinfR{\cal L} \) in this setting. They show that 
\[ \KLinfR{\mathcal L }(\mu_a,x) := \max\limits_{\lambda \in \left[0,\frac{1}{1-x}\right]} \E{\mu_a}{\log\lrp{1-(X-x)\lambda}} \quad \text{ for } x\in[0,1]. \quad\]
\(\KLinf\)-UCB, with \(\mathcal L = \mathcal P([0,1])\), \(\KLinfR{\mathcal L}\) defined above, and \(g(t) = \log(t)+2\log(\log(t))\),  recovers the empirical KL-UCB algorithm of \cite{cappe2013kullback}. In particular, the index for arm \(a\), denoted as \(U_a(t)\), is given by 
\[ U_a(t) = \max\lrset{x : \KLinfR{\mathcal L}(\hat{\mu}_a(t),x) \le \frac{g(t)}{N_a(t)} }.\]
We highlight the key steps in the proof of the finite time bound, below. 

As earlier, we bound the average number of pulls of a sub-optimal arm, \(a\). For simplicity of notation, let us assume that arm 1 is the arm with maximum mean. Let \(\epsilon_1\in (0,m(\mu_1)) \), and \( \tilde{m} = m(\mu_1) - \epsilon_1 \). Then for arm \(a\ne 1\), using the definition of the index, we have 
\begin{align*} 
{N_a(T)} &= 1+\sum\limits_{t=K+1}^{T} {\mathbbm{1}\lrp{A_t=a}} = D_T + E_T + 1,  
\end{align*}
where \(A_t  \) denotes the arm selected at time \(t\), and  
\[ D_T := \sum\limits_{t=K+1}^T \mathbbm{1}\lrp{\KLinfR{\mathcal L}\lrp{\hat{\mu}_1(t),\tilde{m}} \ge \frac{g(t)}{N_1(t)} ~~ \text{and}~~ A_t = a},  \]
and 
\[E_T := \sum\limits_{t=K+1}^T \mathbbm{1}\lrp{\KLinfR{\mathcal L}\lrp{\hat{\mu}_a(t),\tilde{m}} < \frac{g(t)}{N_a(t)} ~~\text{and}~~ A_t= a }. \]

Using \citet[Section B.2, (26)]{cappe2013kullbackSupp} and the bounds therein, and that \( \epsilon_1 < 1 \),
\begin{align*}
\Exp{D_T} \le  \lrp{\sum\limits_{t=K+1}^T e^{-g(t)}} \lrp{3e+2+\frac{4}{\epsilon_1^2} + \frac{8e}{\epsilon_1^4} }\le \frac{36}{\epsilon_1^4}\lrp{\sum\limits_{t=K+1}^T e^{-g(t)}}.
\end{align*}
For \(t \ge 2 \) and \(K\ge 2\) we have 
\[ \Exp{D_T} \le \frac{36}{\epsilon_1^4} \sum\limits_{t=K+1}^T \frac{1}{t\log t} \le \frac{36}{\epsilon_1^4}\lrp{\frac{1}{2\log 2} + \int\limits_{2}^{T} \frac{1}{t \log t} ~dt }\le \frac{36}{\epsilon_1^4} \lrp{2+ \log \log T}.\numberthis\label{eq:boundOnD_TBdd}  \]
Moreover, $\Exp{E_T}$ is bounded by 
\begin{align*}
&\sum\limits_{t=K+1}^T\mathbb{P}\lrp{ \KLinfR{\mathcal L}(\hat{\mu}_a(t),\tilde{m}) \le \frac{g(t)}{N_a(t)}; ~~\KLinfR{\mathcal L}( \hat{\mu}_a(t),\tilde{m} ) \ge \KLinfR{\mathcal L}(\mu_a,\tilde{m}) - \delta; ~~ A_t=a }\\&+ \sum\limits_{t=K+1}^T \mathbb{P}\lrp{ A_t = a; ~~\KLinfR{\mathcal L}( \hat{\mu}_a(t),\tilde{m} ) \le \KLinfR{\mathcal L}(\mu_a,\tilde{m}) - \delta },
\end{align*}
which is further bounded by 
\begin{align*}
&\sum\limits_{t=K+1}^T\mathbb{P}\lrp{  \KLinfR{\mathcal L}(\mu_a,\tilde{m}) - \delta \le \frac{g(t)}{N_a(t)}; ~~ A_t=a } + \sum\limits_{s=1}^T \mathbb{P}\lrp{\KLinfR{\mathcal L}( \hat{\mu}_{a,s},\tilde{m} ) \le \KLinfR{\mathcal L}(\mu_a,\tilde{m}) - \delta },
\end{align*}
where \( \hat{\mu}_{a,s} \) denotes the empirical distribution for arm \(a\), corresponding to \(s\) samples from that arm. Clearly, the first term in the above summation is at most 
\[ {g(T)}\lrp{\KLinfR{\mathcal L}(\mu_a,\tilde{m})-\delta}\inv + 1, \]
while the second term is bounded using \citet[Theorem 12]{honda2015SemiBounded} by
\[\sum\limits_{s=1}^{T} e^{-s c(\delta,\epsilon_1)} \le \lrp{1-e^{-c(\delta,\epsilon_1)}}\inv, \quad\text{where} \quad c(\delta,\epsilon_1) = \frac{\delta^2}{2\lrp{c_0+\frac{1-m(\mu_a)}{1-\tilde{m}}}}, \numberthis\label{eq:cdeltaeps_Bdd} \]
with the condition that \( \delta \le \frac{1}{2} \lrp{c_0 + \frac{1-m(\mu_a)}{1-\tilde{m}}} \), and \(c_0 \ge 2.2\). Thus, we have 
\begin{align*}
\Exp{E_T} \le {g(T)}\lrp{\KLinfR{\mathcal L}(\mu_a, \tilde{m})-\delta}\inv + 1 + \frac{1}{1-e^{-c(\delta,\epsilon_1)}}, 
\end{align*}
where \( c(\delta,\epsilon_1) \) is specified above. Using \citet[Lemma 4]{cappe2013kullbackSupp},
\begin{align*}
\Exp{E_T} \le {g(T)}\lrp{\KLinfR{\mathcal L}(\mu_a, m(\mu_1)) -\frac{\epsilon_1}{1-m(\mu_1)} -\delta}\inv + 1 + \lrp{1-e^{-c(\delta,\epsilon_1)}}\inv, \numberthis\label{eq:boundOnET_Bdd}
\end{align*}

Using (\ref{eq:boundOnD_TBdd}) and (\ref{eq:boundOnET_Bdd}) above, average number of pulls of a sub-optimal arm is bounded by 
\begin{align*}
{g(T)}\lrp{\KLinfR{\mathcal L}(\mu_a, m(\mu_1)) -\frac{\epsilon_1}{1-m(\mu_1)}-\delta}\inv + 1 + \lrp{1-e^{-c(\delta,\epsilon_1)}}\inv + \frac{36}{\epsilon_1^4}\lrp{2 + \log \log T},
\end{align*}
with \( c(\delta,\epsilon_1) \) specified in (\ref{eq:cdeltaeps_Bdd}). This bound can be optimized over \( \epsilon_1 \) and \( \delta \) under the conditions that \( \epsilon_1 < m(\mu_1) \) and  \( \delta \le \frac{1}{2} \lrp{c_0 + \frac{1-m(\mu_a)}{1-m(\mu_1)+\epsilon_1}}. \) Choosing 
\[ \delta^3 =8 \lrp{c_0 + \frac{1-m(\mu_a)}{1-m(\mu_1)+\epsilon_1}} \lrp{\frac{\lrp{\KLinfR{\mathcal L}(\mu_a,m(\mu_1))}^2}{g(T)}}\]
and \[\epsilon_1^5 = \frac{\lrp{\KLinfR{\mathcal L}(\mu_a,m(\mu_1))}^2}{g(T)}\lrp{2 + \log \log T}, \] 
the number of times a sub-optimal arm is pulled is bounded by
\[ \lrp{\log T + \log \log T}\lrp{\KLinfR{\mathcal L}(\mu_a,m(\mu_1)) - O\lrp{\lrp{\frac{\log \log T}{\log T}}}^{\frac{1}{5}}}\inv + O\lrp{\lrp{\log T}^{\frac{4}{5}} \lrp{\log\log(T)}^\frac{1}{5} }.  \]

\section{Proving Theorem \ref{th:AsymptoticOptimality2}}\label{app:perturbations}

In this section, we establish the theoretical guarantees for \( \KLinf \)-UCB2 algorithm, which is a perturbed version of \( \KLinf \)-UCB. Recall that for \( \epsilon_1 > 0 \), we define \( \tilde{B}=B+\epsilon_1 \), and let \( \delta'_t = \log(1+(\log\log(t))\inv) \). Given \(\mu\in\mathcal L^K  \), for \(\tilde{\eta} \ge 0\), \( \KLinf \)-UCB2 is precisely \(\KLinf \)-UCB(K,$\tilde{B}$,\(\epsilon\), \(\tilde{\eta},(1+\delta'_t)^2\log(t) \)). Proof of Theorem \ref{th:AsymptoticOptimality2} follows exactly along the lines of Theorem \ref{th:geometric}. We highlight only the differences here. 

As earlier, we analyse the events leading to selection of a sub-optimal arm by the algorithm. For \(\epsilon_2 > 0\), in this section, let \(m=m(\mu_1)-\epsilon_2\). The event that at the beginning of \( j^{th} \) batch, sub-optimal arm, \(a\), has the maximum index, i.e., \(\lrset{A_{T_j}=a}\) for \(a\ne 1\), equals 
\begin{align*}
\lrset{U_1(T_j) \leq m ~~\text{ and }~~ A_{T_j}=a }~ \bigcup ~\lrset{U_a(T_j) > m ~~\text{ and }~~ A_{T_j}=a }. \numberthis\label{eq:IncorrectSelectionEvent2}
\end{align*}

Let \(N\) denote the random number of batches till time \(T\). As earlier, the random variable of interest is 
\begin{equation*}
\Exp{N_a(T)} = 1 + \Exp{\sum\limits_{j=K+1}^{N} B_j \mathbbm{1}\lrp{A_{T_j}=a}} ~= ~ 1 + \Exp{D_N} + \Exp{E_N},
\end{equation*}
where, using the division from (\ref{eq:IncorrectSelectionEvent2}), we define
\[ D_N := \sum\limits_{j=K+1}^{N}B_j \mathbbm{1}\lrp{U_1(T_j) \leq m, ~ A_{T_j} = a}, ~\text{and }~ E_N := \sum\limits_{j=K+1}^{N}B_j \mathbbm{1}\lrp{U_a(T_j)> m, ~ A_{T_j}=a}. \]

Proof for controlling the deviations of the sub-optimal arm, i.e., \(\Exp{E_N}\) above, follows exactly as earlier, with \( \KLinf(\mu_a,m(\mu_1)) \) replaced by \( \KLinfR{\epsilon_1}(\mu_a,m(\mu_1)-\epsilon_2) \). Thus for any \(\delta>0\), we will have 
\[ \Exp{E_N} \le \frac{(1+\tilde{\eta}) (1+\delta'_t)^2 \log(t)}{\KLinfR{\epsilon_1}(\mu_a,m(\mu_1)-\epsilon_2)-\delta} + O(1),\]
where \( O(1)\) terms involve constants that are functions of \(\epsilon_1,\epsilon_2, \) and \(\delta\). 

Let \(T^k_a\) denote the time of beginning of the batch when arm \(a\) won for the \(k^{th}\) time. Let us now show that \( \Exp{D_N} = O(1)\).  

As earlier,  
\begin{align}\label{eq:DN}
\Exp{D_N}\le \sum\limits_{k=2}^{\frac{\log(T)}{\log(1+\tilde{\eta})}+1} \lrp{1+\tilde{\eta}}^{k-1} \mathbb{P}\lrp{N_1(T^{k}_{a})\KLinf\lrp{\hat{\mu}_1(T^{k}_{a}), {m}}\geq g_1(T^{k}_{a})}. 
\end{align}

Proposition \ref{th:DoublePeeling} bounds the probability in the summand above. Lemma \ref{lem:oLogTDn} argues that $ \Exp{D_N} = o(\log T). $

Thus, 
\[ \Exp{N_a(T)} \le \frac{(1+\tilde{\eta})(1+\delta'_t)^2 \log  T}{\KLinfR{\epsilon_1}(\mu_a,m(\mu_1)-\epsilon_2)-\delta} + o(\log T). \]

Dividing by \( \log(T) \) and taking limit as \(T \rightarrow \infty \),
\[ \limsup\limits_{T\rightarrow \infty} \frac{\Exp{N_a(T)}}{\log(T)} \le \frac{1+\tilde{\eta}}{\KLinfR{\epsilon_1}(\mu_a,m(\mu_1)-\epsilon_2)-\delta}. \] 

Since \( \epsilon_2>0 \) and \( \delta > 0 \) are arbitrary constants, taking infimum over these, and using that \( \KLinfR{\epsilon_1}(\mu_a, x) \) is a continuous function of \(x\), we get 
\[  \limsup\limits_{T\rightarrow \infty} \frac{\Exp{N_a(T)}}{\log(T)} \le \frac{1+\tilde{\eta}}{\KLinfR{\epsilon_1}(\mu_a,m(\mu_1))}.  \quad \quad \Box \]

For \(x\in\Re\), let \(f(x) = \abs{x}^{1+\epsilon}\), and for \(c \geq 0\) define \( f\inv(c) := c^{1/(1+\epsilon)}\). Let \(T > t > 0\), \( \gamma > 0 \), \(X \sim \mu_1\), 
\[ C_{1} := C_{2}\lrp{1-e^{\frac{-\epsilon_2}{f\inv(\tilde{B})-{m}}}}\inv \lrp{1-e^{\frac{-\epsilon_1}{\tilde{B}-f\lrp{m}}}}\inv,\] \[ C_T := \lrp{1-(1+\gamma)^{-1-\delta^{'}_T}} \text{ and }\; C_{2}:=\exp\lrp{\frac{\Exp{\abs{X-\tilde{m}}}}{f\inv(\tilde{B}) -m } + \frac{\Exp{\abs{\tilde{B}-f(X)}}}{\tilde{B}-f\lrp{m}} }.\]

\begin{proposition}\label{th:DoublePeeling}
	For \(T> K\), \(k>0\), \(\epsilon_1 > 0\), and \(g(t) = \lrp{1+\delta'_t}^2\log(t) \), 
	\[\mathbb{P}\lrp{\KLinfR{\epsilon_1}({\hat{\mu}_1(T^{k}_{a}), {m}}) \geq \frac{g(T^{k}_{a})}{N_1(T^{k}_{a})}} \leq \frac{C_{1}\lrp{1+\tilde{\eta}}^{-\lrp{k-1}({1+\delta^{'}_T})} }{\log\lrp{1+\delta'_T}}\lrp{\frac{\log{\lrp{1+\tilde{\eta}}^{k-1}} }{C_T}  + \frac{\log\lrp{1+\gamma}}{C^2_T} }.\]
\end{proposition}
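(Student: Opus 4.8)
The plan is to convert the event into a deviation of a parametric family of product supermartingales via the dual formulation of $\KLinf$, and then to absorb the two sources of randomness — the random number of optimal-arm samples $N_1(T^k_a)$ and the random elapsed time $T^k_a$ sitting inside the threshold — by a two-level (hence ``double'') geometric peeling.

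First I would apply Lemma \ref{lem:DualFormulation} to the enlarged class $\mathcal L_{\tilde B}$ at the shifted target $m=m(\mu_1)-\epsilon_2$. The event $\{N_1(T^k_a)\KLinfR{\epsilon_1}(\hat\mu_1(T^k_a),m)\ge g(T^k_a)\}$ then reads: there exists $\bm\lambda=(\lambda_1,\lambda_2)\in\mathcal R(m,\tilde B)$ with
\[M_{N_1(T^k_a)}(\bm\lambda):=\prod_{i=1}^{N_1(T^k_a)}\lrp{1-(X_i-m)\lambda_1-(\tilde B-f(X_i))\lambda_2}\ge e^{g(T^k_a)}.\]
For each fixed $\bm\lambda\in\mathcal R(m,\tilde B)$ the factors are nonnegative and $M_n(\bm\lambda)$ is a nonnegative supermartingale. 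The point of moving to $\mathcal L_{\tilde B}$ is that $\mu_1\in\mathcal L_B$ sits strictly inside: it has mean $m+\epsilon_2>m$ and $(1+\epsilon)$-moment at most $B=\tilde B-\epsilon_1<\tilde B$. Hence the one-step drift is strictly negative,
\[\mathbb E_{\mu_1}\lrp{M_1(\bm\lambda)}=1-\epsilon_2\lambda_1-(\tilde B-\mathbb E_{\mu_1}f)\lambda_2\le 1-\epsilon_2\lambda_1-\epsilon_1\lambda_2,\]
so $\mathbb E\lrp{M_n(\bm\lambda)}\le e^{-n(\epsilon_2\lambda_1+\epsilon_1\lambda_2)}$. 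This geometric decay, driven by the two margins $\epsilon_1,\epsilon_2$, is what will make every union bound below summable.

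Next I would execute the peeling. The subtlety is that, unlike the deterministic count $N_a(T^k_a)$, the optimal-arm count $N_1(T^k_a)$ is random, as are $T^k_a$ and therefore $g(T^k_a)=(1+\delta'_{T^k_a})^2\log(T^k_a)$. I would partition the sample-count axis into geometric cells on two nested scales — an outer scale $(1+\tilde\eta)$ inherited from the batch sizes, together with the deterministic lower bound $T^k_a\ge K-1+(1+\tilde\eta)^{k-2}$, and an inner refinement of ratio $(1+\gamma)$ — and on each cell invoke Ville's maximal inequality for $M_n(\bm\lambda)$. The dual region $\mathcal R(m,\tilde B)$ is compact, with $\lambda_1\le(f\inv(\tilde B)-m)\inv$ and $\lambda_2\le(\tilde B-f(m))\inv$ by Lemma \ref{rem:compactness_of_R}, so the per-cell log-increments are a priori bounded; the extremal value produces $C_2$, while the two coordinate-wise geometric sums over a covering of $\mathcal R(m,\tilde B)$ — convergent precisely because of the margins $\epsilon_2,\epsilon_1$ — produce the factors $\lrp{1-e^{-\epsilon_2/(f\inv(\tilde B)-m)}}\inv$ and $\lrp{1-e^{-\epsilon_1/(\tilde B-f(m))}}\inv$ in $C_1$. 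Summing the nested $N_1$-peeling yields $C_T\inv$ from the outer level and $C_T^{-2}$ from the inner level, with slice counts $(k-1)\log(1+\tilde\eta)$ and $\log(1+\gamma)$, while the number of distinct threshold levels contributes $\lrp{\log(1+\delta'_T)}\inv$. Substituting $g(T^k_a)\ge(1+\delta'_T)^2\log\lrp{(1+\tilde\eta)^{k-2}}$ into the per-cell bound $e^{-g}$ gives the overall decay $(1+\tilde\eta)^{-(k-1)(1+\delta'_T)}$, and collecting all factors yields the claimed inequality.

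The main obstacle is that $g(t)=(1+\delta'_t)^2\log t$ sits only infinitesimally above the critical level $\log t$, so the peeling must be arranged so that the union-bound penalty — the product of the number of cells and the per-cell failure probability — is not absorbed into the leading $\log t$ term but only into the vanishing $(1+\delta'_t)$ correction. This is exactly why two peeling levels are required: each geometric scale costs one factor $(1+\delta'_t)$ in the effective exponent, and the slowly vanishing choice $\delta'_t=\log(1+(\log\log t)\inv)$ keeps the slice counts (of order $\log t\,\log\log t$) and the per-cell tails simultaneously under control, so that after multiplying by $(1+\tilde\eta)^{k-1}$ and summing over $k$ in \eqref{eq:DN} the term $\Exp{D_N}$ is $o(\log T)$. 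The delicate part is the bookkeeping showing that the two $(1+\delta'_t)$ corrections, the two margins, and the two slice ratios assemble into exactly the stated constants $C_1,C_2,C_T$ rather than merely an order-of-magnitude bound.
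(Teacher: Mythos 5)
Your proposal is correct and follows essentially the same route as the paper's proof: the dual formulation turning the event into a supremum of product supermartingales, the margins $\epsilon_1,\epsilon_2$ forcing strictly negative drift so that a $(\delta_1,\delta_2)$-net over the compact region $\mathcal{R}(m,\tilde{B})$ yields the summable geometric factors in $C_1$ (with the discretization error, not an extremal increment, producing $C_2$), and a double geometric peeling --- over the random time $T^k_a$ at ratio $1+\gamma$ and the random count $N_1$ at ratio $1+\delta'_T$ --- assembling into the stated $C_T\inv$, $C_T^{-2}$ and $\lrp{\log(1+\delta'_T)}\inv$ factors. The only cosmetic differences are your use of Ville's inequality per cell where the paper uses a Chernoff/Markov argument with an indicator-stopped martingale, and some loose attribution in the bookkeeping (both $C_T\inv$ and $C_T^{-2}$ arise from the geometric sums over the time-peeling index $j$, while $\lrp{\log(1+\delta'_T)}\inv$ is the count of $N_1$-peeling cells), neither of which affects the soundness of the plan.
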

In particular, introducing the perturbations allows us to get rid of the additional \( 2\log(1+N_1(t)) \) cost in the threshold, which resulted from the presence of this term in the bound in Proposition \ref{prop:deviation_self_normalizing}. Proposition \ref{th:DoublePeeling} is proved in Section \ref{app:proof_th_double_peeling} below. 

\begin{lemma}\label{lem:oLogTDn}
	For \(T>0\), \(\epsilon_1 > 0\) and \(g(t):= \lrp{1+\log\lrp{1+\frac{1}{\log\log t}}}^2\log(t), \)
	\[\sum\limits_{k=1}^{\frac{\log(T)}{\log(1+\tilde{\eta})}+1} \lrp{1+\tilde{\eta}}^{k} \mathbb{P}\lrp{N_1(T^{k}_{a})\KLinfR{\epsilon_1}\lrp{\hat{\mu}_1(T^{k}_{a}), {m}}\geq g(T^{k}_{a})} = o(\log\lrp{T}).\]
\end{lemma}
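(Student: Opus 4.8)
The plan is to substitute the tail bound of Proposition~\ref{th:DoublePeeling} into each summand and then show that the resulting series, despite the small denominators produced by the perturbation $\delta'_T$, accumulates only to a quantity that is polylogarithmic in $\log T$. First I would note that the event in the summand, $\{N_1(T^k_a)\KLinfR{\epsilon_1}(\hat{\mu}_1(T^k_a),m)\geq g(T^k_a)\}$, is exactly the event whose probability is controlled in Proposition~\ref{th:DoublePeeling} (since $N_1(T^k_a)\geq 1$, one may freely divide $g$ by $N_1$). Plugging in that bound, the $k$-th term carries the factor $(1+\tilde\eta)^{k}\cdot(1+\tilde\eta)^{-(k-1)(1+\delta'_T)}$, and since $k-(k-1)(1+\delta'_T)=1-(k-1)\delta'_T$, this collapses to $(1+\tilde\eta)\,r^{\,k-1}$ with $r:=(1+\tilde\eta)^{-\delta'_T}<1$.

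After this simplification, writing $j=k-1$, the whole sum equals, up to the $T$-dependent prefactor $C_1(1+\tilde\eta)/\log(1+\delta'_T)$, a series of the form $\sum_{j\ge 0} r^j\bigl(a\,j+b\bigr)$ with $a=\log(1+\tilde\eta)/C_T$ and $b=\log(1+\gamma)/C_T^2$. Then I would split this into $a\sum j r^j$ and $b\sum r^j$; because all terms are nonnegative and $r<1$, the finite partial sums are bounded by the infinite geometric series $\sum_{j\ge0} r^j=(1-r)^{-1}$ and $\sum_{j\ge0} j r^j=r(1-r)^{-2}$, which removes the truncation at $J=\log T/\log(1+\tilde\eta)$.

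The remaining work is purely asymptotic bookkeeping as $T\to\infty$. Setting $\alpha:=\delta'_T\log(1+\tilde\eta)$ so that $r=e^{-\alpha}$, the elementary inequalities $1-e^{-\alpha}\ge\alpha/2$ and $\log(1+x)\ge x/2$ (both valid on $(0,1]$) give, for $T$ large enough that $\alpha\le 1$ and $\delta'_T\le 1$, the bounds $(1-r)^{-1}\le 2/\alpha$, $(1-r)^{-2}\le 4/\alpha^2$, $1/\log(1+\delta'_T)\le 2/\delta'_T$, and $1/\delta'_T\le 2\log\log T$ (using $\delta'_T=\log(1+(\log\log T)^{-1})$). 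Moreover $C_T=1-(1+\gamma)^{-1-\delta'_T}\to\gamma/(1+\gamma)>0$, so $C_T$ and $C_T^2$ stay bounded away from $0$ and every $C_1,\gamma,\tilde\eta$-dependent factor is $O(1)$. Combining these, the dominant $a\sum j r^j$ contribution is of order $\frac{1}{\log(1+\delta'_T)}\cdot\frac{1}{\alpha^2}=O((\delta'_T)^{-3})=O((\log\log T)^3)$, while the $b\sum r^j$ contribution is of order $O((\delta'_T)^{-2})=O((\log\log T)^2)$. Hence the entire sum is $O((\log\log T)^3)$, and since $(\log\log T)^3/\log T\to 0$ this is $o(\log T)$, as claimed.

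The main obstacle is precisely this final accounting: three quantities that vanish simultaneously—$\log(1+\delta'_T)$ in the prefactor and the two powers of $1-r$ coming from the geometric sums—each of order $\Theta(1/\log\log T)$, compound to $(\log\log T)^3$, and one must verify they do not conspire to something growing faster. The entire point of the perturbed threshold is that this cube of $\log\log T$ remains negligible against $\log T$, so that $\Exp{D_N}$ contributes nothing to the first-order regret; once the geometric-series bounds are in place, the rest is routine.
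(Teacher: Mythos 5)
Your proposal is correct and follows essentially the same route as the paper's proof: substitute the bound from Proposition~\ref{th:DoublePeeling}, collapse the exponents to a geometric factor $(1+\tilde{\eta})^{-(k-1)\delta'_T}$, bound the truncated sums by the infinite series $\sum_j r^j$ and $\sum_j j r^j$, and observe that the resulting quantity, of order $\lrp{\log(1+\delta'_T)(1-r)^2}\inv = O((\log\log T)^3)$, is $o(\log T)$. The only difference is that you carry out explicitly, via elementary inequalities, the final asymptotic accounting that the paper dismisses as ``easy to see,'' which is a welcome addition rather than a deviation.
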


\begin{proof}
	Using Proposition \ref{th:DoublePeeling} to bound the probability in the summation, the required expression can be bounded by 
	\[ (1+\tilde{\eta})\sum\limits_{k=1}^{\frac{\log T}{\log \lrp{1+\tilde{\eta}}}+1} \lrp{\frac{C_1}{C_T}\frac{\log\lrp{1+\tilde{\eta}}}{\log\lrp{1+\delta'_T}} \frac{k - 1}{\lrp{1+\tilde{\eta}}^{\lrp{k-1}\delta^{'}_T}} +  \frac{C_1 \log\lrp{1+\gamma} }{C^2_T\log\lrp{1+\delta'_T}}\frac{1}{\lrp{1+\tilde{\eta}}^{\lrp{k-1}\delta^{'}_T} }},\]
	where \(C_1\), \(\delta^{'}_T\) and \(C_T\) are constants independent of \(k\), such that \(C_T \) converges to a constant and \(\delta^{'}_T\) converges to \(0\), as \(T \rightarrow \infty\) (see Proposition \ref{th:DoublePeeling}). It is then easy to see that 
	\[ \lrp{\log\lrp{1+\delta'_T} \lrp{1-\frac{1}{\lrp{1+\tilde{\eta}}^{\delta^{'}_T}}}^2}\inv = o(\log T),  \]
	where \(\delta^{'}_T = \log\lrp{1+\frac{1}{\log\log T}} \), and hence the required summation is \(o(\log T) \). 
\end{proof}

\subsection{Towards proving Proposition \ref{th:DoublePeeling}}\label{app:proof_th_double_peeling}
In this section, we prove the concentration inequality in Proposition \ref{th:DoublePeeling}. The proof involves use of peeling arguments, dual formulation for \(\KLinfR{\epsilon_1}\), \(\epsilon\)-nets and careful use of martingales, and may also be of independent interest. To facilitate the proof, we need some notation and results that will be used later, which we prove first.

In this section, for \(x\in\Re\), we define \(f(x) := \abs{x}^{1+\epsilon}\), and for \(c \geq 0\) \( f\inv(c) := c^{1/(1+\epsilon)}\). For \(T  > K\) and \(K < t\leq T\), define \(\delta'_t := \log\lrp{1+\frac{1}{\log \log t}}\).  For \(\gamma > 0\), let \(J_M = \frac{\log\lrp{T/\lrp{1+\tilde{\eta}}^{k-1}}}{\log\lrp{1+\gamma}}\). For \(j\in\lrset{0,1,\cdots, J_M}\), define the event 
\[D_{j} = \lrset{\lrp{1+\tilde{\eta}}^{k-1} \lrp{1+\gamma}^{j}\leq T^{k}_{a} \leq \lrp{1+\tilde{\eta}}^{k-1}\lrp{1+\gamma}^{j+1} },\]
and for \( \eta > 0\),  and \(i\geq 0\), define  %\(i\in\lrset{0,1,\cdots, \frac{\log\lrp{2^{k-1}\lrp{1+\gamma}^{j-1}}}{\log\lrp{1+\gamma}}}\), define 
\[ C_{i}(t) = \lrset{\lrp{1+\eta}^{i}\leq N_1\lrp{t} \leq \lrp{1+\eta}^{i+1} } .\]

Furthermore, for \( \epsilon_2 > 0 \) recall that \( m = m(\mu_1)-\epsilon_2. \)

\begin{lemma} \label{lem:BoundOnSet} For \(\gamma>0, \eta > 0\), \(i\in\mathbb{N},\tilde{\eta}\geq 0\), \(k \geq 1 \), \( j\in [J_M] \), 
	
	\[ \mathbb{P}\lrp{N_1(T^{k}_{a}) \KLinfR{\epsilon_1}(\hat{\mu}_1(T^{k}_{a}) ,{m}) \geq g(T^{k}_{a}), D_j, C_i(T^{k}_{a})} \leq C_1 e^{ -\frac{g\lrp{\lrp{1+\tilde{\eta}}^{k-1}\lrp{1+\gamma}^j}}{\lrp{1+\eta}}}, \]
	where 
	\[C_1 = \frac{C_2}{1-e^{{-\epsilon_2}\lrp{f^{-1}(\tilde{B})-{m}}\inv}} \lrp{1-e^{\frac{-\epsilon_1}{\tilde{B}-f\lrp{m}}}}^{-1} ~\text{ and }~~ C_2=e^{\frac{\E{\mu_1}{\abs{X-m}}}{f^{-1}(\tilde{B}) -m }}e^{ \frac{\E{\mu_1}{\abs{\tilde{B}-f(X)}}}{\tilde{B}-f\lrp{m}} }.\]
\end{lemma}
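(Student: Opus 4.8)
The plan is to combine the peeling already set up by the events $D_j$ (a geometric window for the time $T^k_a$) and $C_i$ (a geometric window for the count $N_1(T^k_a)$) with a supermartingale deviation estimate for the downward fluctuation of the optimal arm. First I would exploit monotonicity: since $g(t)=(1+\delta'_t)^2\log t$ is increasing, on $D_j$ we have $g(T^k_a)\ge g\lrp{(1+\tilde\eta)^{k-1}(1+\gamma)^j}$, so the threshold appearing in the exponent is pinned to the left endpoint of the window. Next I would pass to the dual formulation of Lemma \ref{lem:DualFormulation}, writing
\[
N_1(T^k_a)\,\KLinfR{\epsilon_1}\lrp{\hat\mu_1(T^k_a),m}
~=~
\max_{(\lambda_1,\lambda_2)\in\mathcal R(m,\tilde B)}\ \sum_{l=1}^{N_1(T^k_a)} \log\lrp{1-(X_l-m)\lambda_1-(\tilde B-f(X_l))\lambda_2},
\]
and identify, for each fixed $\lambda$, the product $M_n^\lambda=\prod_{l\le n}\lrp{1-(X_l-m)\lambda_1-(\tilde B-f(X_l))\lambda_2}$ as a nonnegative supermartingale under $\mu_1$ with $M_0^\lambda=1$. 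The key quantitative gain is that the perturbations are strict: since $m=m(\mu_1)-\epsilon_2<m(\mu_1)$ and $\E{\mu_1}{f(X)}\le B<\tilde B$, the one-step mean factor is at most $1-\epsilon_2\lambda_1-\epsilon_1\lambda_2$, whence $\Exp{M_n^\lambda}\le e^{-n(\epsilon_2\lambda_1+\epsilon_1\lambda_2)}$.

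The heart of the argument is controlling the $\max$ over $\lambda$ without incurring a $\log n$ penalty — exactly the $2\log(1+N_1)$ cost that the mixture-martingale proof of Proposition \ref{prop:deviation_self_normalizing} pays, and which the perturbations are designed to remove. I would cover the compact region $\mathcal R(m,\tilde B)$ — compact by Lemma \ref{rem:compactness_of_R}, with $\lambda_1\le(f\inv(\tilde B)-m)\inv$ and $\lambda_2\le(\tilde B-f(m))\inv$ — by a rectangular grid of spacing of order $1/n$ in each coordinate. Replacing the empirical maximiser by the nearest lower grid point costs a multiplicative error factor whose expectation, by independence across the $n$ samples, is bounded by $C_2=\exp\lrp{\tfrac{\E{\mu_1}{\abs{X-m}}}{f\inv(\tilde B)-m}+\tfrac{\E{\mu_1}{\abs{\tilde B-f(X)}}}{\tilde B-f(m)}}$. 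A union bound over the grid then replaces $\max_\lambda M_n^\lambda$ by a sum of $C_2\,M_n^{\lambda}$ over grid points; although the number of grid points grows like $n$, the geometric decay $\Exp{M_n^\lambda}\le e^{-n(\epsilon_2\lambda_1+\epsilon_1\lambda_2)}$ turns the union bound into a convergent geometric series summing to $C_2\,(1-e^{-\epsilon_2/(f\inv(\tilde B)-m)})\inv(1-e^{-\epsilon_1/(\tilde B-f(m))})\inv=C_1$. This is precisely where $C_1$ and $C_2$ are produced.

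Finally I would splice in the count window: using a single grid tuned to the top of the window, uniformly over $(1+\eta)^i\le N_1(T^k_a)\le(1+\eta)^{i+1}$, together with a maximal (Ville) inequality applied to each $M_n^{\lambda}$ and the transported threshold $g\lrp{(1+\tilde\eta)^{k-1}(1+\gamma)^j}$, the effective exponent degrades by the factor $(1+\eta)$ coming from the width of $C_i$, yielding the claimed bound $C_1\,e^{-g\lrp{(1+\tilde\eta)^{k-1}(1+\gamma)^j}/(1+\eta)}$. I expect the main obstacle to be exactly this uniform-in-$\lambda$ control: arranging the $n$-dependent grid, the net-approximation factor $C_2$, and the geometric summation so that the $\sim n$ grid points are absorbed by the supermartingale decay rather than reappearing as an extra $\log n$ term. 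Everything else — the monotonicity reduction, the supermartingale property, and the interplay of the count and time peelings — is bookkeeping around this deviation estimate.
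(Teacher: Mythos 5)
Your proposal matches the paper's proof in all essential respects: the monotonicity reduction via $D_j$, the dual formulation of $\KLinfR{\epsilon_1}$, a rectangular net of spacing $\sim(1+\eta)^{-i}$ over the compact dual region $\mathcal{R}(m,\tilde{B})$ whose approximation error produces exactly $C_2$, and --- the key point --- the strict perturbations $\epsilon_1,\epsilon_2$ giving the per-grid-point decay $e^{-n(\epsilon_2\lambda_1+\epsilon_1\lambda_2)}$ that turns the union bound over the $\sim n$ grid points into a convergent geometric series summing to $C_1$, with the $(1+\eta)$ loss in the exponent coming from the width of the count window $C_i$. The only (immaterial) difference is the final concentration tool: the paper runs a Chernoff--Markov argument with an optimized $\theta^*\in[0,1]$, handles the random time via an indicator trick, and evaluates at $\theta=1$, whereas you invoke a Ville-type maximal inequality on the same supermartingales; both rest on the identical supermartingale structure and yield the stated constants.
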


\begin{proof}
	On the set \(C_i(T^{k}_{a})\) (denoted as \(C_i\) in this proof), \(N_1(T^{k}_{a}) \leq (1+\eta)^{i+1}\). Using this, the required probability is bounded from above by 
	\begin{equation}\label{eq:1}
	\mathbb{P}\lrp{\KLinfR{\epsilon_1}(\hat{\mu}_1(T^{k}_{a}) ,{m}) \geq \frac{g(T^{k}_{a})}{\lrp{1+\eta}^{i+1}} , D_j, C_i }.
	\end{equation}
	Using the dual formulation for \(\KLinfR{\epsilon_1}\) from Lemma \ref{lem:DualFormulation}, above is bounded by   
	\begin{align*}
	\mathbb{P}\lrp{\max\limits_{\bm{\lambda} \in \mathcal{R}({m},\tilde{B})}  \sum\limits_{l=1}^{N_1(T^{k}_{a})} \log\lrp{1-(X_l-{m})\lambda_1-(\tilde{B}-f\lrp{X_l}) \lambda_2} \geq \frac{N_1(T^{k}_{a})g(T^{k}_{a})}{\lrp{1+\eta}^{i+1}} , D_j, C_i},
	\end{align*}
	where \(\mathcal{R}({m},\tilde{B}) \) is a subset of \(\Re^2\) similar to that defined in the Lemma \ref{lem:DualFormulation}, such that the argument of \(\log\) in the expression above is always non-negative. Consider a \((\delta_1, \delta_2)\)-net over the rectangle 
	\[ \left[0, \frac{1}{f^{-1}(\tilde{B})-m} \right] \times \left[0,\frac{1}{\tilde{B}-f({m})}\right],  \]
	which contains the region \(\mathcal{R}({m},\tilde{B})\) (see Lemma \ref{rem:compactness_of_R}). Let \( \mathcal{G}_{l_1,l_2} \) denote a grid in the constructed net. We will choose the side lengths \(\delta_1 \) and \(\delta_2 \), later. Then using union bound over the grids in the net, probability in (\ref{eq:1}) can be bounded by 
	\begin{align*}
	\sum\limits_{l_1,l_2}\mathbb{P}\lrp{\max\limits_{\bm{\lambda} \in \mathcal{G}_{l_1,l_2}}  \sum\limits_{l=1}^{N_1(T^{k}_{a})} \log\lrp{1-(X_l-{m})\lambda_1-(\tilde{B}-f\lrp{X_l}) \lambda_2} \geq \frac{N_1(T^{k}_{a})g(T^{k}_{a})}{\lrp{1+\eta}^{i+1}} , D_j, C_i}.\numberthis \label{eq:bigSum}
	\end{align*}
	On the set \(D_j\), \(T^{k}_{a}\) is at least \( \underline{t}= \lrp{1+\tilde{\eta}}^{k-1}\lrp{1+\gamma}^{j}\). Thus, using monotonicity of \(g(\cdot)\), the probability in the summation above can be bounded by  
	\begin{equation}\label{eq:2}
	\mathbb{P}\lrp{\max\limits_{\bm{\lambda}\in \mathcal{G}_{l_1l_2}}  \sum\limits_{l=1}^{N_1(T^{k}_{a})} \log\lrp{1-(X_l-{m})\lambda_1-(\tilde{B}-f\lrp{X_l}) \lambda_2} \geq \frac{N_1(T^{k}_{a})g(\underline{t})}{\lrp{1+\eta}^{i+1}} , D_j, C_i}.
	\end{equation}
	Let the maximum in the expression above be attained at some point in the grid, say \((\lambda^*_1, \lambda^*_2)\). Furthermore, let \((\tilde{\lambda}_1, \tilde{\lambda}_2) \) denote one of the corner points of the grid \(\mathcal{G}_{l_1,l_2} \) such that \((\tilde{\lambda}_1, \tilde{\lambda}_2) \) is in the interior of \(\mathcal{R}({m},\tilde{B})\). Then, \({1-(X_l-{m})\lambda^*_1 - (\tilde{B}-f\lrp{X_l})\lambda^*_2} \) equals 
	\begin{align*}
	{1-(X_l-{m})\tilde{\lambda}_1 - (\tilde{B}-f\lrp{X_l})\tilde{\lambda}_2} +\lrp{X_l-{m}}\lrp{\tilde{\lambda}_1-\lambda^*_1} +\lrp{\tilde{B}-f\lrp{X_l}}\lrp{\tilde{\lambda}_2-\lambda^*_2} .
	\end{align*}
	Since \(\lambda^*_1\) and  \(\tilde{\lambda}_1 \) are in the same grid, they differ by at most \(\delta_1 \). Similarly, \(\tilde{\lambda}_2 \) and \(\lambda^*_2 \) differ by at most \(\delta_2 \). Thus, the r.h.s. in the above expression can be upper bounded by 
	\[{1-(X_l-{m})\tilde{\lambda}_1 - (\tilde{B}-f\lrp{X_l})\tilde{\lambda}_2 + \abs{X_l-{m}}\delta_1 + \abs{\tilde{B}-f\lrp{X_l}}\delta_2}. \]
	Let \(Y_{l} := \log\lrp{1-(X_l-{m})\tilde{\lambda}_1 - (\tilde{B}-f\lrp{X_l})\tilde{\lambda}_2 + \abs{X_l-{m}}\delta_1 + \abs{\tilde{B}-f\lrp{X_l}}\delta_2} \). Clearly, \(Y_{l}\) are i.i.d. random variables. Let \(Y\) be independent and identically distributed as \(Y_{l}\). The probability in (\ref{eq:2}) can then be bounded by
	
	\begin{equation}\label{eq:3}
	\mathbb{P}\lrp{ \sum\limits_{l=1}^{N_1(T^{k}_{a})} Y_{l} \geq \frac{N_1(T^{k}_{a})g\lrp{\lrp{1+\tilde{\eta}}^{k-1}\lrp{1+\gamma}^j}}{\lrp{1+\eta}^{i+1}} , D_j, C_i}.
	\end{equation}
	For \(0\leq \theta \leq 1\), let 
	\[\Lambda_{Y}(\theta) = \log{\Exp{e^{\theta Y}}} ~~\text{ and }~~ \theta^* = \argmax\limits_{0\leq\theta\leq 1} \lrset{ \frac{\theta g\lrp{\lrp{1+\tilde{\eta}}^{k-1}\lrp{1+\gamma}^j}}{\lrp{1+\eta}^{i+1} }-\Lambda_{Y}(\theta)} .  \]
	Clearly, \(\theta^* \geq 0\). Using Chernoff-like argument, we bound the expression in  (\ref{eq:3}) by 
	\[\mathbb{P}\lrp{e^{\sum\limits_{l=1}^{N_1(T^{k}_{a})} \lrp{\theta^* Y_{l}-\Lambda_{Y}(\theta^*)} } \geq e^{ N_1(T^{k}_{a}) \lrp{\frac{\theta^* g\lrp{\lrp{1+\tilde{\eta}}^{k-1}\lrp{1+\gamma}^j}}{\lrp{1+\eta}^{i+1}} - \Lambda_{Y}(\theta^*)}  }, D_j, C_i }. 
	\]
	Observe that by the choice of \(\theta^* \), the term in the exponent in the r.h.s. above is positive. Thus the above probability is bounded by choosing lower bound for \(N_1(T^{k}_{a})\) by 
	\[\mathbb{P}\lrp{e^{\sum\limits_{l=1}^{N_1(T^{k}_{a})} \lrp{\theta^* Y_{l}-\Lambda_{Y}(\theta^*)} } \geq e^{ (1+\eta)^{i} \lrp{\frac{\theta^* g\lrp{\lrp{1+\tilde{\eta}}^{k-1}\lrp{1+\gamma}^j}}{\lrp{1+\eta}^{i+1}} - \Lambda_{Y}(\theta^*)}  }, D_j, C_i },
	\]
	which can further be bounded by 
	\[\mathbb{P}\lrp{e^{ \mathbbm{1}\lrp{C_i\cap D_j} \sum\limits_{l=1}^{N_1(T^{k}_{a})} \lrp{\theta^* Y_{l}-\Lambda_{Y}(\theta^*)} } \geq e^{ (1+\eta)^{i} \lrp{\frac{\theta^* g\lrp{\lrp{1+\tilde{\eta}}^{k-1}\lrp{1+\gamma}^j}}{\lrp{1+\eta}^{i+1}} - \Lambda_{Y}(\theta^*)}  }}.\]
	From Markov's Inequality, and by the choice of \(\theta^*\), the above probability is less than 
	\begin{align*} 
	\Exp{e^{ \mathbbm{1}\lrp{C_i\cap D_j} \sum\limits_{l=1}^{N_1(T^{k}_{a})} \lrp{\theta^* Y_{l}-\Lambda_{Y}(\theta^*)} }} e^{ -(1+\eta)^{i} \max\limits_{0\leq\theta\leq 1} \lrp{\frac{\theta g\lrp{\lrp{1+\tilde{\eta}}^{k-1}\lrp{1+\gamma}^j}}{\lrp{1+\eta}^{i+1}} - \Lambda_{Y}(\theta)}  },
	\end{align*}
	which is less than 
	\[\Exp{e^{ \sum\limits_{l=1}^{N_1(T^{k}_{a})} \lrp{\theta^* Y_{l}-\Lambda_{Y}(\theta^*)} }} e^{ -(1+\eta)^{i} \lrp{\frac{g\lrp{\lrp{1+\tilde{\eta}}^{k-1}\lrp{1+\gamma}^j}}{\lrp{1+\eta}^{i+1}} - \Lambda_{Y}(1)}}.\]
	Notice that the term inside the expectation in the previous expression is a \(1\)-mean martingale, and the other term is bounded by choosing \(\theta =1\). Thus, (\ref{eq:3}), and hence, (\ref{eq:2}) is bounded by 
	\begin{align*}
	&\exp\lrp{ -\lrp{\frac{g\lrp{\lrp{1+\tilde{\eta}}^{k-1}\lrp{1+\gamma}^j}}{\lrp{1+\eta}} - \lrp{1+\eta}^{i}\Lambda_{Y}(1)}}.\numberthis \label{eq:int}
	\end{align*} 
	We now evaluate \(\Lambda_{Y}(1)\) to simplify the above bound. Observe that  \[\lrp{1+\eta}^i\Lambda_{Y}(1) \leq  \lrp{1+\eta}^i\log\lrp{1-\epsilon_2\tilde{\lambda}_1 - \epsilon_1\tilde{\lambda}_2 + \Exp{\abs{X_l-{m}}}\delta_1 + \Exp{\abs{\tilde{B}-f(\abs{X_l})}}\delta_2}.\]
	Using this in (\ref{eq:int}), probability in (\ref{eq:2}) is bounded by 
	\begin{align*}
	e^{ -\frac{g\lrp{\lrp{1+\tilde{\eta}}^{k-1}\lrp{1+\gamma}^j}}{\lrp{1+\eta}}} \lrp{1-\epsilon_2\tilde{\lambda}_1 - \epsilon_1\tilde{\lambda}_2 + \Exp{\abs{X_l-{m}}}\delta_1 + \Exp{\abs{\tilde{B}-f(\abs{X_l})}}\delta_2}^{\lrp{1+\eta}^i}.
	\end{align*}
	Using \(1+x\leq e^{x} \), the above expression is less than 
	\begin{align*}
	e^{ -\frac{g\lrp{\lrp{1+\tilde{\eta}}^{k-1}\lrp{1+\gamma}^j}}{\lrp{1+\eta}}} e^{-\epsilon_2\tilde{\lambda}_1{\lrp{1+\eta}^i} - \epsilon_1\tilde{\lambda}_2{\lrp{1+\eta}^i} + \Exp{\abs{X_l-{m}}}\delta_1{\lrp{1+\eta}^i} + \Exp{\abs{\tilde{B}-f(\abs{X_l})}}\delta_2{\lrp{1+\eta}^i}}.\numberthis \label{eq:int1}
	\end{align*}
	Choosing \(\delta_1 \) and \(\delta_2 \) as follows:
	\[ \delta_1 = \frac{\lrp{1+\eta}^{-i}}{f^{-1}\lrp{\tilde{B}} -\tilde{m}  }, \;\; \& \;\; \delta_2 = \frac{\lrp{1+\eta}^{-i}}{\tilde{B}-f\lrp{\abs{\tilde{m}}}}, \]
	and substituting in (\ref{eq:int1}), the probability in (\ref{eq:2}) can be bounded by 
	\begin{align*}
	\exp\lrp{ -\frac{g\lrp{\lrp{1+\tilde{\eta}}^{k-1}\lrp{1+\gamma}^j}}{\lrp{1+\eta}}} \exp\lrp{-\epsilon_2\tilde{\lambda}_1{\lrp{1+\eta}^i} - \epsilon_1\tilde{\lambda}_2{\lrp{1+\eta}^i}} C_2,\numberthis \label{eq:int2}
	\end{align*}
	where, 
	\[C_2 = \exp\lrp{\frac{\E{\mu_1}{\abs{X_l-{m}}}}{f^{-1}(\tilde{B}) -{m} } + \frac{\E{\mu_1}{\abs{\tilde{B}-f(\abs{X_l})}}}{\tilde{B}-f\lrp{\abs{{m}}}} }. \]
	Recall that \((\tilde{\lambda}_1, \tilde{\lambda}_2)\) is a corner point of the grid under consideration, and hence, \(\tilde{\lambda}_1\) is either \(l_1\delta_1\),  or \((l_1+1)\delta_1\) and similarly, \(\tilde{\lambda}_2\in\lrset{l_2\delta_2, (1+l_2)\delta_2 }\). The above expression is further upper-bounded by choosing \(\tilde{\lambda}_1 = l_1\delta_1\) and \(\tilde{\lambda}_2 = l_2\delta_2 \). Probability in (\ref{eq:2}) is bounded  with these substitutions in (\ref{eq:int2}) by 
	\begin{align*}
	C_2\exp\lrp{ -\frac{g\lrp{\lrp{1+\tilde{\eta}}^{k-1}\lrp{1+\gamma}^j}}{\lrp{1+\eta}}} \exp\lrp{-\frac{\epsilon_2  l_1}{f^{-1}\lrp{\tilde{B}}-{m}} - \frac{\epsilon_1 l_2}{\tilde{B}-f\lrp{\abs{{m}}}}}.
	\end{align*}
	
	Using this bound in (\ref{eq:bigSum}), (\ref{eq:1}) is bounded by  
	\begin{align*}
	&\sum\limits_{l_1,l_2} C_2\exp\lrp{ -\frac{g\lrp{\lrp{1+\tilde{\eta}}^{k-1}\lrp{1+\gamma}^j}}{\lrp{1+\eta}}} \exp\lrp{-\frac{\epsilon_2  l_1}{f^{-1}\lrp{\tilde{B}}-{m}} - \frac{\epsilon_1 l_2}{\tilde{B}-f\lrp{\abs{{m}}}}}.
	\end{align*}
	Since the summation in the expression is finite for \(l_1\) and \(l_2 \) ranging over all positive integers, on summing, we get the desired bound. 
\end{proof}

\subsection{Proof of Proposition  \ref{th:DoublePeeling}}\label{App:proof:th:DoublePeeling}

Recall that for \(\epsilon_2 > 0\), and  \(\gamma > 0\), \({m}=m(\mu_1)-\epsilon_2\), and  \(J_M = \frac{\log\lrp{T/\lrp{1+\tilde{\eta}}^{k-1}}}{\log\lrp{1+\gamma}}\) and for \(j\in\lrset{0,1,\cdots, J_M}\), the event \(D_j\) is defined as 
\[D_{j} = \lrset{\lrp{1+\tilde{\eta}}^{k-1} \lrp{1+\gamma}^{j}\leq T^{k}_{a} \leq \lrp{1+\tilde{\eta}}^{k-1}\lrp{1+\gamma}^{j+1} }.\] 
Recall that for \(t>0\), \(\eta > 0 \),  and \(i\geq 0\), we had  \( C_{i}(t) = \lrset{\lrp{1+\eta}^{i}\leq N_1\lrp{t} \leq \lrp{1+\eta}^{i+1} } .\)

Let \(\eta_{T} = \delta'_T \), where recall that \(\delta^{'}_t := \log\lrp{1+\frac{1}{\log \log t}} \), and define \(I_{M,j} = \frac{\log\lrp{\lrp{1+\tilde{\eta}}^{k-1}\lrp{1+\gamma}^{j+1} }}{\log\lrp{1+\eta_{T}}}. \) Using union bound and Lemma \ref{lem:BoundOnSet} the required probability is bounded by  
%\[\sum\limits_{j=0}^{J_M} \sum\limits_{i=0}^{I_{M,j}} \mathbb{P}\lrp{N_1(T^{k}_{a}) \KLinfR{\epsilon_1}\lrp{\hat{\mu}_1(T^{k}_{a}),\tilde{m}} \geq g(T^{k}_{a}), C_{i}(T^{k}_{a}), D_j},\]
%which is further bounded by 
\[\sum\limits_{j=0}^{J_M} \sum\limits_{i=0}^{I_{M,j}} C_1 \exp\lrp{ -\frac{g\lrp{\lrp{1+\tilde{\eta}}^{k-1}\lrp{1+\gamma}^j}}{\lrp{1+\eta_{T}}}}, \]
which equals 
\[\sum\limits_{j=0}^{J_M} I_{M,j}C_1 \exp\lrp{ -\frac{g\lrp{\lrp{1+\tilde{\eta}}^{k-1}\lrp{1+\gamma}^j}}{\lrp{1+\eta_{T}}}},\numberthis \label{eq:Bound}\]
where \(C_1\) is as defined in Lemma \ref{lem:BoundOnSet}. Recall that \(g(t) = \lrp{1+\log\lrp{1+\frac{1}{\log\log t}}}^2\log(t) \). For \( t = \lrp{1+\tilde{\eta}}^{k-1} \lrp{1+\gamma}^{j} \), \(t\leq T\) and  \(\delta^{'}_t \geq \delta^{'}_T\). Substituting for the function \(g\lrp{\cdot}\), bounding the terms involving \( \delta^{'}_t \) by \(\delta^{'}_T\), and using that for all \(j\in [J_M]\) \(\eta_{j,k} \geq \eta_{J_M,k} = \log\lrp{1+\frac{1}{\log\log T}} \), the last expression in (\ref{eq:Bound}) is bounded by 
\[\frac{C_1 \lrp{1+\tilde{\eta}}^{-\lrp{k-1}\lrp{1+\delta^{'}_T}} }{\log\lrp{1+\delta'_T}} \sum\limits_{j=0}^{J_M} \frac{\log\lrp{\lrp{1+\tilde{\eta}}^{k-1}\lrp{1+\gamma}^{j+1}}}{\lrp{1+\gamma}^{j(1+\delta^{'}_T)}}.\]
Bounding the above expression by summing for \(j\) ranging over all positive integers, we get the desired bound.

\end{document}